\documentclass[12pt]{article}

\usepackage[utf8]{inputenc}
\usepackage[english]{babel}
\usepackage[OT1]{fontenc}
\usepackage{amsmath,amsthm,mathtools,amsfonts,amssymb,bm,amsbsy}
\usepackage{graphicx,psfrag,epsf}
\usepackage{booktabs,longtable,array,tabularx,multirow,adjustbox,ragged2e}
\usepackage{subcaption}
\usepackage{enumerate,enumitem}
\usepackage{natbib}
\usepackage{xcolor}
\usepackage{soul}
\usepackage{url}
\usepackage{etoolbox}
\usepackage{tipa}
\usepackage{float}
\usepackage{comment}
\usepackage{pifont}
\usepackage{placeins}
\usepackage[ruled,vlined,linesnumbered,resetcount,algosection]{algorithm2e}
\usepackage{algpseudocode}
\usepackage{titlesec}
\usepackage{parskip}
\usepackage{setspace}
\usepackage{hyperref}
\hypersetup{draft=false,colorlinks=true,linkcolor=blue,citecolor=blue,urlcolor=blue}
\usepackage{bookmark}

\newcommand\BibTeX{{\rmfamily B\kern-.05em \textsc{i\kern-.025em b}\kern-.08em T\kern-.1667em\lower.7ex\hbox{E}\kern-.125emX}}

\newcommand{\cmark}{\checkmark}
\newcommand{\xmark}{\ding{55}}

\newtheorem{theorem}{Theorem}[section]

\newtheorem{lemma}[theorem]{Lemma}

\titleformat{\paragraph}{\normalfont\normalsize\bfseries}{\theparagraph}{1em}{}
\titlespacing*{\paragraph}{0pt}{3.25ex plus 1ex minus .2ex}{1.5ex plus .2ex}
\DeclareMathAlphabet\mathbfcal{OMS}{cmsy}{b}{n}
\numberwithin{equation}{section}
\renewcommand{\thefigure}{\arabic{figure}}
\renewcommand{\thetable}{\arabic{table}}
\allowdisplaybreaks

\begin{document}

\title{Sparse Functional Singular Value Decomposition for Biclustering and Triclustering Longitudinal Data}

\author{Yue Zhao, Thierry Chekouo$^{\ast}$, and Sandra E. Safo$^{\ast}$\\
\textit{Division of Biostatistics and Health Data Science}\\
\textit{University of Minnesota}\\
\textit{2221 University Avenue SE}\\
\textit{Minneapolis, MN 55414, United States}\\
E-mail address for correspondence: tchekouo@umn.edu, ssafo@umn.edu}
\date{}

\maketitle

\begin{abstract}
Identifying subtypes of complex conditions, such as Inflammatory Bowel Disease (IBD), often requires capturing latent patterns in longitudinal omics data. However, these data are typically high-dimensional, sparsely sampled, and irregularly observed over time, posing substantial challenges for conventional (bi)clustering and functional data analysis methods. We propose Tri-SfSVD, a unified sparse functional Singular Value Decomposition framework for discovering biclusters and triclusters in longitudinal data. Unlike existing functional biclustering methods that rely on ad hoc imputation or enforce restrictive shape-homogeneity assumptions, Tri-SfSVD integrates continuous trajectory estimation with simultaneous subject, feature, and temporal selection within a single optimization framework. By imposing sparse penalties across subjects, variables, and temporal subregions, the proposed method works directly on observed data to uncover localized structures at the subject, subject-feature, and subject-feature-time levels. Extensive simulations demonstrate that Tri-SfSVD outperforms existing approaches in high-dimensional settings. Applied to IBD multi-omics data, the method identified three biclusters linking sample clusters with distinct IBD-related clinical characteristics to microbial pathway groups associated with specific bacterial taxa, providing interpretable subject-pathway associations for characterizing disease heterogeneity. Applied to multi-channel EEG data, the method identified three triclusters linking sample clusters with distinct alcohol-related phenotypes to localized brain activity patterns, including subgroup differences separated by temporal subregions within the same spatial region.
\end{abstract}

\noindent
{\it Keywords:} Biclustering, Functional data analysis, Sparse group lasso, Sparse functional SVD, Triclustering

\section{Introduction}
In modern biomedical studies, high-throughput technologies frequently generate longitudinal measurements across a large number of features. In practice, these data are often sparsely and irregularly observed over time, where each subject is observed at only a limited, highly variable set of time points rather than a discrete, uniform grid.  A prominent example is the Integrative Human Microbiome Project (iHMP), which established the Inflammatory Bowel Disease Multi-omics Database (IBDMDB). This is a large-scale resource for longitudinal profiling of the gut microbiome in individuals with inflammatory bowel disease (IBD), including Crohn's disease (CD) and ulcerative colitis (UC), as well as non-IBD controls \citep{LloydPrice2019}.

Many efforts have been made to better understand the biological mechanisms underlying IBD \citep{chen2020coabundance,yu2023_32gene_ibd}. However, these studies typically treat the data cross-sectionally by analysing a single time point, thereby failing to leverage the temporal structure of longitudinal measurements. To overcome this limitation, functional data analysis (FDA) has emerged as a powerful framework, treating longitudinal observations as functions over time to capture continuous-time trajectories. 
While useful for supervised group discrimination, IBD is a highly heterogeneous disease. Existing supervised approaches cannot capture latent, localized sub-structures where a specific patient subgroup exhibits a unique co-expression pattern across only a subset of features.

To discover such localized patterns, biclustering methods have been extensively developed for matrix-valued data to simultaneously cluster samples and features \citep{biclustersandra,biclusterthierry,biclusterssvd,Sill2011}. Recently, this concept has been extended to continuous-time profiles via functional biclustering \citep{funcc, modelbasedfunctionalbi}. However, existing functional biclustering methods face four distinct methodological challenges in high-dimensional biomedical applications: i) \textit{Rectrictive Shape Homogeneity}. Most methods assume trajectories within a bicluster share a similar geometric shape over time. This assumption is overly restrictive for heterogeneous diseases like IBD, where different biological indicators may collectively characterize a patient subgroup but exhibit entirely different trajectory shapes. 
ii) \textit{Sensitivity to Preprocessing:} 
When functional data are extremely sparse, existing functional biclustering methods rely on a two-step procedure; first imputing missing values or smoothing trajectories (e.g., via principal analysis by conditional expectation [PACE]; \cite{PACE}) and then applying biclustering.  
This separate estimation makes the resulting biclusters highly sensitive to the choice of preprocessing.
iii) \textit{High-dimensionality setting.} Current frameworks offer limited discussion or empirical evaluation when the number of functional variables outnumbers subjects. This high-dimensional regime is increasingly common in modern biomedical studies, including the IBDMDB data.
iv) \textit{Lack of Temporal Subregion Selection.} Biological signals are often temporally localized, concentrating within narrow windows of the time domain. Because current functional biclustering approaches select variables globally across all time points, they cannot isolate these brief, localized effects, necessitating a unified framework that simultaneously handles subject, feature, and temporal selection (i.e., triclustering).

To address these challenges, we propose a sparse functional singular value decomposition (SVD) framework and develop a novel functional triclustering method, Tri-SfSVD. Functional SVD provides a natural extension of matrix factorization to functional data. For univariate functional data, \cite{huang2008functional} proposed a functional SVD framework where the left singular vector contains subjects' principal component (PC) scores and the right singular function represents a smooth functional loading over time. 
\cite{zhao2024} extended this framework to multivariate functional data by introducing a multivariate right singular function across multiple time domains. Building upon this formulation, Tri-SfSVD introduces three sparsity-inducing penalties to capture structure at multiple levels: a sample-level sparsity penalty on the left singular vector, a variable-level sparsity penalty on the multivariate right singular function, and a within-variable subregion-level sparsity penalty. Together, these penalties enable simultaneous selection of subjects, variables, and informative time-localized subregions, thereby allowing recovery of subject-variable-time structure. Each sparse rank-one approximation defines a sparse layer corresponding to a tricluster. 

Tri-SfSVD offers several key advantages over existing frameworks. First, unlike traditional biclustering methods, Tri-SfSVD does not force trajectories within a cluster to share a common shape between features. Instead, it extracts sparse latent layers that capture dominant, coordinated variation, utilizing data-driven sparsity to identify the participating subjects, variables, and temporal subregions. Second, it jointly estimates trajectories and identifies clusters, operating directly on sparse, irregularly observed data using penalized least squares without ad hoc imputation. Third, the proposed framework applies to high-dimensional functional data and is designed to maintain computational feasibility as the number of functional variables increases. 
Finally, the proposed framework allows the data to determine whether biclustering is sufficient or whether additional time-localized triclustering structure is supported.

We demonstrate the utility of Tri-SfSVD through extensive simulation studies and two distinct real-world applications. In the motivating application to the IBDMDB data, our method partitions microbial features into three interpretable groups. The three feature groups exhibit clear taxonomic patterns: two are predominantly associated with \textit{Alistipes putredinis} and \textit{Ruminococcus torques}, respectively, while the third is primarily composed of pathways labeled as unclassified in the dataset. Each microbial feature group is mapped to a distinct clinically meaningful subject subgroup, yielding interpretable subject–feature associations. In this application, full-domain biclusters are supported without any within-variable subregion shrinkage to zero. 
To illustrate the method's unique ability to recover time-localized triclustering structure, we present an additional analysis of a multi-channel electroencephalography (EEG) dataset \citep{Zhang1997_EEG, eeg_database_121} in section~\ref{real_data_eeg} of the Supplementary Material. Conversely, the EEG analysis reveals clear triclustering structures, partitioning channels into three interpretable groups: a frontal scalp pattern spanning the entire time domain, and two posterior scalp patterns localized within distinct temporal windows. These isolated spatio-temporal signatures are strongly associated with specific alcohol-related phenotypes, demonstrating the method's ability to uncover multi-level, localized structures.

The remainder of the paper is organized as follows. Section~\ref{Methodology} introduces the theoretical foundations and presents the proposed Tri-SfSVD framework. Section~\ref{Simulation Study} evaluates the performance of Tri-SfSVD through simulation studies and compares it with existing approaches. Section~\ref{Real data analysis} illustrates the practical utility of the proposed method through real data applications. A concise summary of the methods considered in these analyses is provided in Table~\ref{tab:task_yesno} of Section~\ref{Methods Comparison} in the Supplementary Material. Section~\ref{Conclusion} concludes the paper with a discussion and future directions.

\section{Methodology}
\label{Methodology}

\subsection{Theoretical Foundation and Rank-One Approximation}
We consider multivariate functional data where each observation consists of \( p \geq 2 \) functional variables \( f_{1}, \dots, f_{p} \), possibly defined on different domains \( \mathcal{T}_1, \dots, \mathcal{T}_p \). For each \( j = 1, \dots, p \), the domain \( \mathcal{T}_j \) is a compact subset of \( \mathbb{R} \) with finite Lebesgue measure. We assume each functional variable \( f_{j}: \mathcal{T}_j \to \mathbb{R} \) belongs to the  Hilbert space \( H_j := L^2(\mathcal{T}_j) \) of square-integrable functions on \( \mathcal{T}_j \). 

Following the framework of  \cite{ReMFPCA} and \cite{happ2018multivariate}, we define the Cartesian product Hilbert space \( \mathbb{H} = H_1 \times \cdots \times H_p \). We equip the Hilbert space \( \mathbb{H}\) with the inner product
\[
\langle \pmb{f},\pmb{g}\rangle_{\mathbb{H}}
= \sum_{j=1}^p \langle f_j,g_j\rangle_{H_j} = \sum_{j=1}^p \int_{\mathcal{T}_j} f_j(t_j) g_j(t_j) \, \text{d}t_j.
\] where \( \pmb{f} = (f_1, \cdots, f_p) \) and \( \pmb{g} = (g_1, \cdots, g_p) \in \mathbb{H} \) with \( f_j, g_j \in H_j \) are two multivariate functions.  

Given $n$ multivariate functional samples $\pmb{x}_i = (x_{i,1},\cdots,x_{i,p})\in\mathbb{H}$, $i=1,\ldots,n$, we define the centralized bounded linear data operator $\mathbfcal{X}: \mathbb{R}^n\to\mathbb{H} $  as $\mathbfcal{X}a=\sum_{i=1}^n a_i\,\pmb{x}_i$ for $\pmb{a}=(a_1,\ldots,a_n)^\top\in\mathbb{R}^n$. The corresponding adjoint operator $\mathbfcal{X}^*:\mathbb{H} \rightarrow \mathbb{R}^n$ is given by $\mathbfcal{X}^*{\pmb z}=
	(\langle {\pmb x}_1,{\pmb z}\rangle_{\mathbb{H}}, \langle {\pmb x}_2,{\pmb z}\rangle_{\mathbb{H}}, \ldots, \langle {\pmb x}_n,{\pmb z}\rangle_{\mathbb{H}})^\top,
	\ {\pmb z}\in\mathbb{H}.$   We denote 
$\mathbfcal{X}:=[\pmb{x}_i]_{i=1}^n\in\mathbb{F}^{p\times n}$, where $\mathbb{F}^{p\times n}$ denotes the space of all linear operators
$\mathbfcal{Z}:\mathbb{R}^n\to\mathbb{H}$. We equip $\mathbb{F}^{p\times n}$ with the inner product $ \langle \boldsymbol{\mathcal{Z}}_1, \boldsymbol{\mathcal{Z}}_2 \rangle_{\mathbb{F}} := \sum_{i=1}^{n} \langle \boldsymbol{z}_i^{(1)}, \boldsymbol{z}_i^{(2)} \rangle_{\mathbb{H}}$ where    $\boldsymbol{\mathcal{Z}}_1 = [\boldsymbol{z}_i^{(1)}]_{i=1}^n$ and $\boldsymbol{\mathcal{Z}}_2 = [\boldsymbol{z}_i^{(2)}]_{i=1}^n$ are in $\mathbb{F}^{p \times n}$. The associated norm is defined as   $\Vert\bm{\mathcal{Z}}\Vert_{\mathbb{F}} = \sqrt{\langle \bm{\mathcal{Z}}, \bm{\mathcal{Z}} \rangle_{\mathbb{F}}}.$


Under this operator formulation, we adopt the multivariate functional singular value decomposition (SVD) framework through the rank-one approximation problem:
\begin{equation}\label{multivariate functional SVD}
\min_{s\ge 0,\,
      \pmb{u}:\Vert\pmb{u}\Vert_2=1,\,
      \pmb{\phi}:\Vert\pmb{\phi}\Vert_{\mathbb{H}}=1}
\left\Vert
\mathbfcal{X} - s\,(\pmb{u}\otimes \pmb{\phi})
\right\Vert_\mathbb{F}^2 .
\end{equation}
where \(s\ge0\) is the scale of the rank-one component,
\(\pmb{u}\in\mathbb{R}^n\) is the unit left singular vector, and
\(\pmb{\phi}=(\phi_1,\ldots,\phi_p)\in\mathbb{H}\) is the unit multivariate
functional right singular function. The outer product \(\pmb{u}\otimes\pmb{\phi}\) is a rank-one linear operator
from \(\mathbb{R}^n\) to \(\mathbb{H}\) defined by $ (\pmb{u}\otimes \pmb{\phi})(\pmb{a})
=
\langle \pmb{a},\pmb{u}\rangle_{\mathbb{R}^n}\,\pmb{\phi}, \text{for every }
\pmb a\in\mathbb R^n.$
The key conceptual shift is that while each variable \(j\) has its own
temporal shape \(\phi_j(t)\), all \(p\) variables are linked by a common
subject direction \(\pmb u\). This shared direction forces the model to identify subject clusters that are consistent across all observed features.

While \eqref{multivariate functional SVD} is defined in infinite-dimensional space, trajectories in practice are observed on discrete grids. Let $\bm{Y}_j=\big(x_{ij}(t_{j\ell})\big)_{1\le i\le n,\,1\le \ell\le d_j} \in \mathbb{R}^{n \times d_j}$ denote the discretized observation matrix for the $j^{th}$ variable, where the $i^{th}$ row contains measurements for subject $i$ at time points $\{t_{j1}, \dots, t_{jd_j}\}$. By expanding the squared norm in \eqref{multivariate functional SVD} and incorporating roughness penalties to ensure the smoothness of each $\phi_j$, we derive the multivariate functional SVD objective function for discretized data:


\begin{lemma}
\label{lem:impl_bi_explicit}
The discretized implementation of the scaled rank-one operator approximation
problem \eqref{multivariate functional SVD} is given by the following
finite-dimensional optimization problem:
\begin{equation}
\label{eq:bi_impl_explicit}
\begin{aligned}
\min_{s,\,\pmb{u},\,\{\pmb{\varphi}_j\}_{j=1}^{p}} \quad
& \sum_{j=1}^{p}
\underbrace{\left\lVert \bm{Y}_j - s\,\pmb{u} \pmb{\varphi}_j^\top \right\rVert_F^2}_{\text{Reconstruction Error}}
+
\underbrace{\sum_{j=1}^p \alpha_j
(s\pmb{\varphi}_j)^\top \bm{\Omega}_j (s\pmb{\varphi}_j)}_{\text{Roughness Penalty}} \\
\text{subject to} \quad
& s\ge 0,\qquad
\left\lVert \pmb{u} \right\rVert_2 = 1,\qquad
\left\lVert \pmb{\varphi} \right\rVert_2 = 1,
\text{where }
\pmb{\varphi} =
\left(\pmb{\varphi}_1^\top,\ldots,\pmb{\varphi}_p^\top\right)^\top .
\end{aligned}
\end{equation}

Here $\pmb{\varphi}_j\in\mathbb{R}^{d_j}$ denotes the discretized vector for the
$j$th functional variable, obtained by evaluating the right singular functions $\phi_j(\cdot)$ on
$\{t_{j1},\ldots,t_{j d_j}\}$. The matrix $\bm{\Omega}_j$ is the corresponding
roughness penalty matrix for the $j$th functional variable. The norm $\Vert\cdot\Vert_F$ denotes
the Frobenius norm. The proof is provided in Section~\ref{supp_proofs} of the Supplementary Material.
\end{lemma}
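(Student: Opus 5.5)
The plan is to expand the squared operator norm in \eqref{multivariate functional SVD} using the inner product $\langle\cdot,\cdot\rangle_{\mathbb{F}}$, discretize each resulting integral by a Riemann/quadrature sum on the grid $\{t_{j1},\ldots,t_{jd_j}\}$, and then append the roughness penalty. First I would identify the columns of the rank-one operator: since $(\pmb u\otimes\pmb\phi)(\pmb e_i)=u_i\pmb\phi$, the operator $s(\pmb u\otimes\pmb\phi)$ corresponds to the collection $[s u_i\pmb\phi]_{i=1}^n\in\mathbb{F}^{p\times n}$. Hence, by the definitions of $\Vert\cdot\Vert_{\mathbb F}$ and $\langle\cdot,\cdot\rangle_{\mathbb H}$,
\[
\left\Vert \mathbfcal{X}-s(\pmb u\otimes\pmb\phi)\right\Vert_{\mathbb F}^2=\sum_{i=1}^n\sum_{j=1}^p\int_{\mathcal T_j}\bigl(x_{ij}(t)-s u_i\phi_j(t)\bigr)^2\,\text{d}t .
\]
This separates the objective into $p$ variable-specific pieces coupled only through the shared $(s,\pmb u)$ and the joint normalization of $\pmb\phi$.

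Next I would discretize. Replacing each integral $\int_{\mathcal T_j} g(t)\,\text{d}t$ by $\sum_{\ell=1}^{d_j} g(t_{j\ell})$ (absorbing the quadrature weights, taken equal on each grid, into the scaling of $\pmb\varphi_j$ and $\bm Y_j$, or noting they are common constants), the $j$th piece becomes $\sum_{i,\ell}(Y_{j,i\ell}-s u_i\varphi_{j\ell})^2=\Vert\bm Y_j-s\pmb u\pmb\varphi_j^\top\Vert_F^2$. In the same way, the constraint $\Vert\pmb\phi\Vert_{\mathbb H}^2=\sum_j\int\phi_j^2=1$ becomes $\sum_j\Vert\pmb\varphi_j\Vert_2^2=\Vert\pmb\varphi\Vert_2^2=1$, while $\Vert\pmb u\Vert_2=1$ and $s\ge0$ carry over unchanged. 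Finally, following \cite{huang2008functional}, I would regularize smoothness by penalizing $\alpha_j\int (s\phi_j'')^2$ for each variable; its discretization by a second-difference or smoothing-spline quadratic form gives $(s\pmb\varphi_j)^\top\bm\Omega_j(s\pmb\varphi_j)$. Summing over $j$ yields exactly \eqref{eq:bi_impl_explicit}. Equivalently, one can expand the square as $\Vert\mathbfcal X\Vert_{\mathbb F}^2-2s\langle \mathbfcal X^*\pmb\phi,\pmb u\rangle+s^2$ and check that the discrete analogue $\sum_j\Vert\bm Y_j\Vert_F^2-2s\sum_j\pmb u^\top\bm Y_j\pmb\varphi_j+s^2$ matches the expansion of the Frobenius terms under the unit-norm constraints.

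The main obstacle is justifying the discretization step cleanly. Unequal grid spacings produce quadrature weights $w_{j\ell}$ that do not simply cancel, and the grids differ across $j$, so the norm constraint mixes variables with different weights. I would handle this by stating that the lemma is understood up to a rescaling: define $\tilde Y_{j,i\ell}=\sqrt{w_{j\ell}}\,x_{ij}(t_{j\ell})$ and $\tilde\varphi_{j\ell}=\sqrt{w_{j\ell}}\,\phi_j(t_{j\ell})$, which turns the weighted sums into plain Euclidean and Frobenius norms. On equispaced grids with common spacing this reduces to a single global constant. I would also remark that the roughness term is an added regularizer rather than something derived from \eqref{multivariate functional SVD}, and is included by design.
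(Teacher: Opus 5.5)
Your proposal is correct and follows essentially the same route as the paper's proof. Both arguments identify $s(\pmb u\otimes\pmb\phi)$ with $[s u_i\pmb\phi]_{i=1}^n$, expand the $\mathbb F$-norm into $\sum_{j}\sum_i\Vert x_{ij}-su_i\phi_j\Vert_{H_j}^2$, replace each integral by its grid sum to obtain $\Vert\bm Y_j-s\pmb u\pmb\varphi_j^\top\Vert_F^2$, and then append the roughness penalty as a design choice; your explicit $\sqrt{w_{j\ell}}$ rescaling is simply a more careful version of what the paper covers with the phrase ``up to the usual grid-weighting convention.''
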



As similarly introduced in \citep{huang2008functional} for the univariate functional SVD, the matrix $\pmb{\Omega}_j$ is a non-negative definite roughness penalty matrix introduced such that $\pmb{\varphi}_j^{\top}\pmb{\Omega}_j\,\pmb{\varphi}_j$ approximates the squared $L^2$-norm of the second derivative $\int \{\phi_j''(t)\}^2\,dt$, thereby enforcing smoothness of the loading function. Here \(\alpha_j \ge 0\) controls the strength of the roughness penalty. In \citep{huang2008functional}, the roughness penalty is multiplied by the squared norm of the non-normalized left singular vector, \( \pmb{u}^{\top}\pmb{u} \), to ensure scale invariance. In our formulation, the non-normalized left vector is written as \(s\pmb{u}\), with \(s\ge0\) and \(\Vert\pmb{u}\Vert_2=1\). Thus, the roughness penalty is equivalently applied to the scaled loading vector \(s\pmb{\varphi}_j\): $(s\pmb{\varphi}_j)^\top\pmb{\Omega}_j(s\pmb{\varphi}_j)
=
s^2\pmb{\varphi}_j^\top\pmb{\Omega}_j\pmb{\varphi}_j.$

In formulation \ref{eq:bi_impl_explicit}, the first term aggregates the univariate reconstruction errors across all $p$ variables, and the second term imposes individual smoothness constraints on each scaled loading function \(s\pmb{\varphi}_j\). This multivariate objective serves as the foundation for the Tri-SfSVD framework, enabling us to subsequently introduce sparsity penalties for simultaneous subject, feature, and temporal selection.

\subsection{Biclustering via Sparse Functional SVD} \label{Section bicluster fsvd}
\subsubsection{Integrated Selection of Subjects, Variables, and Temporal Subregions}
To move beyond global pattern recognition and identify localized structures within high-dimensional longitudinal data, we propose a unified framework that incorporates three layers of sparsity. By augmenting the multivariate functional SVD objective \ref{eq:bi_impl_explicit} with specific penalties, we formulate the Tri-SfSVD optimization problem:

\begin{equation}\label{bicluster fsvd general} 
\begin{aligned}
\min_{s,\,\pmb{u},\,\{\pmb{\varphi}_j\}_{j=1}^{p}} \quad
& \sum_{j=1}^{p} 
\left\lVert \pmb{Y}_j - s\,\pmb{u}\,\pmb{\varphi}_j^{\top} \right\rVert_{F}^{2}
+ \mathcal{P}_{{\gamma}}(s\pmb{u})
+ \mathcal{P}_{{\theta}}(s\pmb{\varphi})
+ \mathcal{P}_{\pmb{\lambda}}(s\pmb{\varphi}) 
+ \sum_{j=1}^{p}\alpha_j
(s\pmb{\varphi}_j)^{\top}\pmb{\Omega}_j(s\pmb{\varphi}_j)  \\
\text{subject to} \quad
& s\ge 0,\qquad
\left\lVert \pmb{u}\right\rVert_2=1,\qquad
\left\lVert \pmb{\varphi}\right\rVert_2=1,
\quad
\pmb{\varphi}=(\pmb{\varphi}_1^\top,\ldots,\pmb{\varphi}_p^\top)^\top .
\end{aligned}
\end{equation}

Here $\mathcal{P}_{{\gamma}}(s\pmb{u})$, $\mathcal{P}_{{\theta}}(s\pmb{\varphi})$, and $\mathcal{P}_{\pmb{\lambda}}(s\pmb{\varphi})$ are sparsity-inducing penalty terms, and $\gamma$, $\theta$, and $\pmb\lambda := \{\lambda_j\}_{j=1}^{p}$ are tuning parameters that control the degree of sparsity at the subject, variable, and within-variable subregion levels, respectively. We specifically define the following penalty structures:~\\
\textit{1. Subject Selection for Subgroup Discovery: $\mathcal{P}_{{\gamma}}(s\pmb{u})$}. To identify clusters of subjects that exhibit a specific latent trajectory, we apply an adaptive lasso penalty to the subject score vector: $\mathcal{P}_{\boldsymbol{\gamma}}(s\pmb{u}) = \gamma \sum_{i=1}^{n} w_{1,i} \lvert s u_i \rvert$. This encourages sparsity at the subject level, effectively grouping individuals with $u_i = 0$ as "inactive" for a particular component, thereby isolating a specific disease subtype or phenotype. 
Here \(w_{1,i}\) are data-driven weights defined by \(w_{1,i}=|\hat{s}\hat{u}_i|^{-\kappa}\), where \(\hat{s}\hat{u}_i\) denotes the unpenalized least-squares estimate of \(s u_i\), obtained by minimizing the reconstruction loss in \eqref{bicluster fsvd general} with \(\{\pmb{\varphi}_j\}_{j=1}^{p}\) fixed. The exponent $\kappa$ is a fixed nonnegative parameter, with typical values chosen from $\kappa \in \{0.5, 1, 2\}$ as suggested by \cite{Zou} and \cite{biclusterssvd}. 
These adaptive weights allow different coefficients to be penalized unequally, so that coefficients with larger initial estimates receive less shrinkage, whereas those with smaller initial estimates receive stronger shrinkage. In the special case where $w_{1,i} = 1$ for all $i$, the penalty reduces to the standard lasso.~\\
\textit{2. Variable Selection for Feature Relevance:} $\mathcal{P}_{\boldsymbol{\theta}}(s\pmb{\varphi})$. In multivariate settings, only a subset of variables (e.g., specific metabolites) may contribute to a latent pattern. We employ a group-lasso-style penalty on the multivariate loading functions: $\mathcal{P}_{\boldsymbol{\theta}}(s\pmb{\varphi})= \theta \sum_{j=1}^{p} w_{2,j}\,\lVert s \pmb{\varphi}_j \rVert_{2}$. This penalty shrinks the entire loading vector $\bm{\varphi}_j$ to zero for irrelevant variables, ensuring that the identified biclusters are defined only by a parsimonious set of features.
The $w_{2,j}$'s are data-driven weights defined as $w_{2,j} = \Vert \hat{s} \hat{\pmb{\varphi}}_j \Vert_2^{-\kappa}$ with the same $\kappa$, and $\hat{s} \hat{\pmb{\varphi}}_j$ denotes the unpenalized least squares estimate of $s \pmb{\varphi}_j$, defined as the minimizer of the reconstruction loss in \eqref{bicluster fsvd general} with $\pmb{u}$ fixed.~\\
\textit{3. Temporal Selection for Triclustering:} $\mathcal{P}_{\bm{\lambda}}(s\pmb{\varphi})$. A key innovation of Tri-SfSVD is the ability to identify ``triclusters''- patterns that exist only within specific temporal subregions. We introduce an element-wise adaptive lasso penalty on the loading functions: $\mathcal{P}_{\bm{\lambda}}(s\pmb{\varphi}) = \sum_{j=1}^{p} \lambda_j \Vert \pmb{w}_{3,j} \odot s \pmb{\varphi}_j \Vert_1,$ where \(\pmb{w}_{3,j} = (w_{3,j1}, \ldots, w_{3,jd_j})^\top\) is a vector of data-driven adaptive weights, and \(\odot\) denotes the Schur (element-wise) product. The weights are defined by $w_{3,j\ell} := |\hat{s} \hat{\varphi}_{j\ell}|^{-\kappa}$. By inducing sparsity within the loading functions $\bm{\varphi}_j$, the model can ``zero out'' segments of the timeline. This allows the framework to distinguish between: i) Biclusters, where a subject-variable association persists across the entire observation period; and ii) Triclusters, where a subject-variable association is localized to a specific time window (e.g., an early-stage vs. late-stage disease signature).

Specifically, the estimated \(s\), sparse \(\pmb{u}\), and \(\{\pmb{\varphi}_j\}_{j=1}^p\) together form a sparse rank-1 approximation to the original data. In this representation, samples corresponding to nonzero entries of \(\pmb{u}\), variables corresponding to nonzero vectors \(\pmb{\varphi}_j\), and informative subregions within the selected variables corresponding to nonzero elements of \(\pmb{\varphi}_j\) are jointly selected. Hence, each layer simultaneously links a subset of samples, a subset of functional variables, and localized subregions within those variables, thereby revealing a tricluster structure in the data. When \(\lambda_j = 0\) for all \(j=1,\ldots,p\), \eqref{bicluster fsvd general} reduces to a biclustering formulation that simultaneously identifies sample and variable clusters. Thus, the biclustering method is nested within the proposed Tri-SfSVD framework as a special case.

\subsubsection{Extension to Sparse and Irregular Sampling}

The formulation in \eqref{bicluster fsvd general} assumes a regularly sampled design, where each subject is observed on a common grid of time points for every functional variable $j$. In many longitudinal and biomedical studies, however, functional trajectories are observed sparsely and at irregular time points. For instance, in the IBD dataset analyzed in Section~\ref{Real data analysis}, each subject is observed at only a subset of time points, with observation schedules varying across subjects and variables. Under such designs, the matrix can not be represented as a complete matrix, rendering the formulation in \eqref{bicluster fsvd general} inapplicable in its original form. 

To accommodate sparse and irregular sampling, we reformulate the Tri-SfSVD objective to operate directly on the observed evaluation points, thereby avoiding the need for pre-smoothing or imputation. Let \(\pmb{t}_{i,j} := [t_{i,j,k}]_{k = 1}^{m_{i,j}}\) denote the observation time points for subject \(i\) on variable \(j\), where \(m_{i,j}\) is the number of observed measurements for subject-variable pair, and \(k\) indexes the observed time points. Let \(\pmb{y}_{i,j}:=(y_{i,j,1},\ldots,y_{i,j,m_{i,j}})^\top\) collect the corresponding observed measurements. The sparse-data extension of \eqref{bicluster fsvd general} is then defined as 
\begin{multline}\label{best approximation tri}
\min_{s,\,\pmb{u},\,\{\pmb{\varphi}_j\}_{j=1}^{p}}
\Bigg\{
\sum_{i=1}^{n}\sum_{j=1}^{p}\sum_{k=1}^{m_{i,j}}
\big[y_{i,j,k} - s u_i \varphi_{j}(t_{i,j,k})\big]^2
+ \gamma \sum_{i=1}^{n} w_{1,i} |s u_i| \\
+ \theta \sum_{j=1}^{p} w_{2,j} \Vert s\pmb{\varphi}_j \Vert_2
+ \sum_{j=1}^{p} \lambda_j
\Vert \pmb{w}_{3,j} \odot (s\pmb{\varphi}_j) \Vert_1
+ \sum_{j=1}^{p} \alpha_j
(s\pmb{\varphi}_j)^{\top}\pmb{\Omega}_j(s\pmb{\varphi}_j)
\Bigg\},
\end{multline}
subject to \(s\ge 0\), \(\Vert\pmb{u}\Vert_2=1\), and \(\Vert\pmb{\varphi}\Vert_2=1\).
Here \(\pmb{\varphi}_j=\big(\varphi_j(t^*_{j1}), ...,\varphi_j(t^*_{jd_j})\big)^{\top}\in \mathbb{R}^{d_j}\) represents the loading function evaluated on a common grid for variable $j$, $\{t^*_{j1},...,t^*_{jd_j}\}$, obtained by different time points of the set $\{t_{i,j,k}; i=1,...,n;k=1,...,m_{i,j}\}$. This point-wise reconstruction loss can be interpreted as a masked matrix optimization: $\sum_{j=1}^{p} \left\lVert \pmb{I}_j \odot (\pmb{X}_j - s\pmb{u}\,\pmb{\varphi}_j^{\top}) \right\rVert_{F}^{2}$, where $\pmb{I}_j\in\{0,1\}^{n\times d_j}$ is the indicator matrix such that  $I_j(i,\ell)=1$ if the data point for subject $i$ is observed at time $t^*_{j\ell}$ and 0 otherwise, and $\bm{X}_j=\big(x_{ij}(t^*_{j\ell})\big)_{i,\ell} \in \mathbb{R}^{n \times d_j}$ is the hypothetical complete matrix with missing entries masked.  The formulation remains applicable even when some trajectories are observed at only a few time points. Moreover, because the objective is defined directly on the observed evaluation points, it does not require explicit modeling of the missingness mechanism.
\subsubsection{Alternating Iterative Estimation Procedure}
Following the scaled-variable sparse SVD strategy of Lee et al. (2010), we estimate the sparse rank-one layer by alternating penalized regressions on the scaled working variables \(\widetilde{\pmb{u}}=s\pmb{u}\) and \(\widetilde{\pmb{\varphi}}=s\pmb{\varphi}\), followed by normalization. Specifically, we fix the normalized loading direction and update \(\widetilde{\pmb{u}}\), then fix the normalized subject direction and update \(\widetilde{\pmb{\varphi}}\).
~\\
\textit{First stage: update \(\widetilde{\pmb{u}}\).} The normalized loading functions \(\{\pmb{\varphi}_j\}_{j=1}^{p}\) are held fixed, and the scaled subject-specific scores  \(\widetilde{\pmb{u}}\) are updated by solving
\begin{equation}\label{best approximation step 1} 
\min_{\widetilde{\pmb{u}}}
\Bigg\{
\sum_{i=1}^{n}\sum_{j=1}^{p}\sum_{k=1}^{m_{i,j}}
\big[y_{i,j,k} - \widetilde u_i \varphi_j(t_{i,j,k})\big]^2
+ \gamma \sum_{i=1}^{n} w_{1,i} |\widetilde u_i|
\Bigg\},
\end{equation}
For notational convenience, define $\pmb{y}_i := 
(\pmb{y}_{i,1}^{\top}, \ldots, \pmb{y}_{i,p}^{\top})^{\top}$ and $\pmb{\varphi}_{i\cdot}^{*} := \big[ {\pmb{\varphi}_{i,1}^{*}}^{\top}, \ldots, {\pmb{\varphi}_{i,p}^{*}}^{\top} \big]^{\top},$ where $\pmb{\varphi}_{ij}^{*} = \big(\varphi_{j}(t_{i,j,1}),\cdots,\varphi_{j}(t_{i,j,m_{ij}})\big)$ collects the evaluations of the $j$th loading function at the observation times for subject $i$.
Since \eqref{best approximation step 1} can be rewritten by grouping terms according to each subject $i$, the minimization over \(\widetilde{\pmb{u}}\) decouples into $n$ independent scalar problems,
\begin{equation}\label{best approximation step 1 single}
\min_{\widetilde u_i}
\Big\{
\Vert \pmb{y}_i - \pmb{\varphi}^{*}_{i\cdot} \widetilde u_i \Vert_2^{2}
+ \gamma w_{1,i} |\widetilde u_i|
\Big\},
\qquad i=1,\ldots,n.
\end{equation}
Each subproblem in \eqref{best approximation step 1 single} is a one-dimensional quadratic loss with an $\ell_1$ penalty, and therefore admits a closed-form solution via the soft-thresholding operator:
\begin{equation}\label{u solution}
\widetilde u_i
=
\frac{\mathrm{sign}\big({\pmb{\varphi}^{*}_{i\cdot}}^\top \pmb{y}_i\big)}
{{\pmb{\varphi}^{*}_{i\cdot}}^{\top} \pmb{\varphi}^{*}_{i\cdot}}
\left(
\big|{\pmb{\varphi}^{*}_{i\cdot}}^\top \pmb{y}_i\big|
- \frac{\gamma w_{1,i}}{2}
\right)_+,
\qquad i=1,\ldots,n.
\end{equation}
where \((a)_+ = \max\{a,0\}\) denotes the positive-part operator for any real number \(a\in\mathbb{R}\). The resulting vector \(\widetilde{\pmb{u}}\) is a scaled subject-score vector. We normalized it by setting \(\pmb{u}=\widetilde{\pmb{u}}/\Vert\widetilde{\pmb{u}}\Vert_2\). This thresholding step effectively induces sparsity at the subject level, allowing the model to identify specific subjects belonging to a latent cluster.~\\
\textit{Second stage: Update \(\widetilde{\pmb{\varphi}}_j\).} We fix the normalized \(\pmb{u}\) and update \(\{\widetilde{\pmb{\varphi}}_j\}_{j=1}^{p}\)  by solving
\begin{equation}\label{best approximation step 2}
\begin{aligned}
\min_{\{\widetilde{\pmb{\varphi}}_j\}_{j=1}^{p}} \quad
&\Bigg\{
\sum_{i=1}^{n}\sum_{j=1}^{p}\sum_{k=1}^{m_{i,j}}
\big[y_{i,j,k} - u_i \widetilde{\varphi}_{j}(t_{i,j,k})\big]^2
+ \theta \sum_{j=1}^{p} w_{2,j} \Vert \widetilde{\pmb{\varphi}}_j \Vert_2 + \sum_{j=1}^{p} \lambda_j
\Vert \pmb{w}_{3,j} \odot \widetilde{\pmb{\varphi}}_j \Vert_1
+ \sum_{j=1}^{p} \alpha_j
\widetilde{\pmb{\varphi}}_j^{\top}\pmb{\Omega}_j\widetilde{\pmb{\varphi}}_j
 \Bigg\}.
\end{aligned}
\end{equation}
This objective decouples into \(p\) independent subproblems:
\begin{equation}\label{best approximation step 2 single}
\min_{\widetilde{\pmb{\varphi}}_j}
\left\{
\sum_{i=1}^{n}\sum_{k=1}^{m_{i,j}}
\big[y_{i,j,k} - u_i \widetilde{\varphi}_{j}(t_{i,j,k})\big]^2
+ \theta w_{2,j} \Vert \widetilde{\pmb{\varphi}}_j \Vert_2
+ \lambda_j \Vert \pmb{w}_{3,j} \odot \widetilde{\pmb{\varphi}}_j \Vert_1
+ \alpha_j \widetilde{\pmb{\varphi}}_j^{\top}\pmb{\Omega}_j\widetilde{\pmb{\varphi}}_j
\right\}.
\end{equation}
This decomposition provides an important computational advantage in high-dimensional settings: instead of solving one large coupled optimization problem, we only need to solve $p$ smaller independent subproblems. As a result, the optimization is more scalable and can be efficiently parallelized across variables. 

To facilitate an efficient estimation of the functional components, we first introduce a more compact notation. Let \(\tilde{\pmb{y}}_j := \big[ \pmb{y}_{1,j}^{\top}, \ldots, \pmb{y}_{n,j}^{\top} \big]^{\top} \in \mathbb{R}^{M_j}\) represent the concatenated vector of measurements for variable \(j\) across all subjects, where \(M_j = \sum_{i=1}^n m_{i,j}\).  We define the mapping matrix \(\pmb{B}_{i,j} \in \mathbb{R}^{m_{i,j} \times d_j}\) which maps  \(\pmb{\varphi}_j\) to \(\pmb{\varphi}^{*}_{i,j}\) (that is \(\pmb{B}_{i,j} \pmb{\varphi}_j = \pmb{\varphi}^{*}_{i,j}\)) such that  $B_{i,j}(k,\ell)=1$ if $t_{i,j,k}=t^*_{j\ell}$ and 0 otherwise for $k=1,...,m_{i,j}$ and $\ell=1,...,d_j$. 
By constructing the design matrix \(\pmb{U}_j = [\,u_1\pmb{B}_{1,j}^\top,\ldots,u_n\pmb{B}_{n,j}^\top\,]^\top \in \mathbb{R}^{M_j\times d_j}
\), the subproblem  \eqref{best approximation step 2 single} can be expressed as
\begin{equation}\label{best approximation step 2 matrix}
\min_{\widetilde{\pmb{\varphi}}_j}
\left\{
\Vert \tilde{\pmb{y}}_j - \pmb{U}_j \widetilde{\pmb{\varphi}}_j \Vert_2^{2}
+ \theta w_{2,j} \Vert \widetilde{\pmb{\varphi}}_j \Vert_2
+ \lambda_j \Vert \pmb{w}_{3,j} \odot \widetilde{\pmb{\varphi}}_j \Vert_1
+ \alpha_j \widetilde{\pmb{\varphi}}_j^{\top}\pmb{\Omega}_j\widetilde{\pmb{\varphi}}_j
\right\}.
\end{equation}


Motivated by the standard Fast Iterative Shrinkage-Thresholding Algorithm (FISTA) framework \citep{fista}, we develop a modified FISTA to solve \eqref{best approximation step 2 matrix}. When \(\lambda_j=0\), the triclustering penalty reduces to the biclustering penalty, and the details of the modified FISTA algorithm, including this biclustering special case, are provided in the Supplementary Material (Section~\ref{algorithm} and Algorithm~\ref{alg:fista_sga_lasso}).  After obtaining ${\{\widetilde{\pmb{\varphi}}_j\}_{j=1}^{p}}$,  normalize \(\widetilde{\pmb{\varphi}}\) where \(\widetilde{\pmb{\varphi}}=(\widetilde{\pmb{\varphi}}_1^\top,\ldots,\widetilde{\pmb{\varphi}}_p^\top)^\top\).



For model selection, we adopt a computationally efficient tuning strategy for the hyperparameters. While the subject-level ($\gamma$) and feature-level ($\theta$) sparsity parameters are treated as global, we select the variable-specific roughness ($\alpha_j$) and temporal sparsity ($\lambda_j$) parameters independently for each scaled-loading subproblem in \eqref{best approximation step 2 single}. This localized tuning significantly reduces the high-dimensional search space without sacrificing the ability to capture variable-specific temporal dynamics. The selection of all tuning parameters
is guided by the Extended Bayesian Information Criterion (EBIC; \citealp{chen2008ebic}), which provides a principled balance between model complexity and data fidelity. Comprehensive details of the tuning procedure and the complete Tri-SfSVD algorithm are provided in the Supplementary Material (Sections~\ref{Tuning Selection} and Algorithm~\ref{alg:complete_supp}). In addition, strategies for selecting the number of tri/biclusters, $K$, are discussed in Section~\ref{Selection of the Number of Clusters} in the Supplementary Material. In our simulation and real data analyses, we adopt the cumulative explained variance (CEV) strategy.


\subsection{Flexibility for Non-Overlapping and Overlapping Estimation}
In real data analysis, users may have varying preferences regarding the structure of the estimated bi- and triclusters. For instance, some may prefer non-overlapping results, where the clusters are disjoint, ensuring that samples, features, or subregions are assigned to only one cluster. Conversely, others may desire overlapping clusters to capture more complex relationships where elements can belong to multiple groups. To accommodate these diverse needs and enhance the flexibility of our method, we incorporate a weighting mechanism for non-overlapping scenarios, following the idea of \cite{biclustersandra}: after identifying a bicluster or tricluster, we assign zero weights to the involved samples or features, thereby enforcing zero coefficients for them in subsequent estimations. This approach allows users to seamlessly switch between non-overlapping and overlapping modes based on their specific requirements.

\section{Simulation Study} 
\label{Simulation Study}
\subsection{Simulation Setup}
We evaluate the empirical performance of the Tri-SfSVD framework through a series of numerical experiments designed to mimic the complexities of longitudinal high-dimensional data. We generate a \(p\)-dimensional functional object
\(\pmb{X}(\pmb{t})=(X_1(t),\ldots,X_p(t))\)  defined on the unit interval \([0,1]\).  To evaluate the model's ability to recover multi-component signals, data are generated from a rank-$K$ functional model:
$$\bm{X}_i(t) = \sum_{k=1}^K \lambda^{(k)} u_i^{(k)} \boldsymbol{\varphi}^{(k)}(t) + \boldsymbol{\epsilon}_i(t)$$

where $\lambda^{(k)}$ denotes the $k$-th singular value, $\mathbf{u}^{(k)} \in \mathbb{R}^n$ represents the $k$-th left singular vector (subject scores), and $\boldsymbol{\varphi}^{(k)}(t) = (\varphi_1^{(k)}(t), \dots, \varphi_p^{(k)}(t))^\top$ are the $p$-dimensional right singular functions. The noise term $\boldsymbol{\epsilon}_i(t)$ is assumed to follow a multivariate normal distribution $\mathcal{N}_p(\mathbf{0}, 0.5^2 \mathbf{I}_p)$. Following standard functional SVD conventions, we enforce identifiability by setting $\{\mathbf{u}^{(k)}\}_{k=1}^K$ to be orthonormal in $\mathbb{R}^n$ and $\{\boldsymbol{\varphi}^{(k)}\}_{k=1}^K$ to be orthonormal in the product Hilbert space $\mathbb{H}$, such that $\Vert\mathbf{u}^{(k)}\Vert = 1$ and $\Vert\boldsymbol{\varphi}^{(k)}\Vert_{\mathbb{H}} = 1$ for all $k$. Each variable is initially observed on a dense grid of $d=40$ equally spaced time points before we introduce varying levels of artificial missingness to test the framework's robustness to the irregular sampling common in clinical studies.


We fix the rank \(K=4\) and the sample size \(n=100\). 
We vary the number of functional variables to three settings \(p\in\{60,200,1000\}\). For each setting, the proportion of active variables, those possessing a non-zero loading function in at least one latent component, is fixed at approximately 70\% (corresponding to 40, 140, and 700 variables, respectively). The remaining 30\% of variables are inactive, with trajectories identically zero across the entire domain. 
The $p$-dimensional right singular functions $\{\boldsymbol{\varphi}^{(k)}\}_{k=1}^K$ are constructed to test the model's ability to distinguish between biclusters and triclusters. For each active variable, we assign a functional shape by sampling from a dictionary of ten candidate curves (see Supplementary Figure~\ref{fig:sim_candidate_curves}). These curves are categorized into two structural types: i) Full-domain continuous functions, representing global patterns defined over the entire interval $[0, 1]$, and ii) Subregion-localized functions, which are restricted to either the left or right half of the discrete grid with the remaining domain set to zero. This dual-type design specifically evaluates whether Tri-SfSVD can distinguish between features with global support and those with time-domain sparsity in a data-driven manner.

Regarding the left singular vectors $\{\bm{u}^{(k)}\}_{k=1}^K$, which represent subject-specific scores, we select a subset of indices $S_k \subset \{1, \dots, n\}$ for each component. Entries for indices in $S_k$ are drawn from $\mathcal{N}(1, 0.3^2)$ and set to zero otherwise, thereby inducing subject-level sparsity. The singular values are fixed at $(\lambda^{(1)}, \lambda^{(2)}, \lambda^{(3)}, \lambda^{(4)}) = (10, 8, 6, 4)$ to represent a decreasing signal-to-noise ratio across components. Finally, we simulate the challenges of irregular longitudinal sampling by imposing missing rate levels of 40\% and 60\% on the observed grid.


For each setting, we consider both nonoverlapping and overlapping configurations, in which the nonzero supports of \(\{\pmb{u}^{(k)}\}_{k=1}^{K}\) and \(\{\pmb{\varphi}^{(k)}\}_{k=1}^{K}\) are either disjoint or overlapping, respectively. Details on how orthogonality is handled under the overlapping setting are given in Section~\ref{sec:overlap_orthogonality} in the Supplementary Material. We performed 100 independent simulation replicates for each configuration. Figure~\ref{clustercomparisons} in the supplementary material illustrates nonoverlapping and overlapping bicluster structures, along with the estimated results from our approach, for the setting with \(p=60\) and a missing rate of \(60\%\). In nonoverlapping cases, biclusters form disjoint rectangular blocks that do not share samples or features. In contrast, overlapping cases allow biclusters to share a subset of samples or features, leading to intersecting block structures. We therefore evaluate performance under both configurations. This design is motivated by real data applications, where the underlying structure is typically unknown, and assessing both configurations provides a more comprehensive evaluation of robustness. Figure~\ref{fig:sim_mean_curve_example} in the Supplementary Material shows the detailed tricluster results from a randomly selected nonoverlapping dataset simulated with \(p=60\) and a missing rate of \(40\%\).

To quantify clustering performance, we adopt the similarity framework from~\cite{Sill2011} to compute $  F  $-scores at multiple levels, including subject, variable, subregion, bicluster, and tricluster; see Appendix~\ref{Performance Evaluation} for the detailed definitions and formulas.

\subsection{Simulation Results}
We applied our proposed method alongside five competing approaches. The first is a two-step procedure based on multivariate functional principal component analysis (MFPCA) \citep{happ2018multivariate}. Specifically, we first apply MFPCA to the multivariate functional data to obtain low-dimensional subject-level score vectors, and then apply \(k\)-means clustering to these scores to obtain sample clusters. We refer to this method as MFPCA-Kmean. The second is a two-stage baseline approach, denoted by FPCA-BC. Specifically, for each feature, we first applied functional principal component analysis (FPCA) \citep{PACE} and extracted its first left singular vector. We then concatenated these vectors across features to form an \(n\times p\) matrix, to which a standard biclustering algorithm \citep{biclusterthierry} was applied to obtain biclusters. The third competing approach is a well-known triclustering method \citep{tri_zhao}, referred to as TriCluster, which is designed to identify joint patterns across the sample, feature, and time dimensions in three-dimensional datasets. The remaining two competing approaches are existing functional biclustering methods: funLBM \citep{modelbasedfunctionalbi} and funCC \citep{funcc}. Since TriCluster, funLBM and funCC require complete input data, we first applied PACE, which is well suited for recovering the functional trajectories and then applied these methods to the resulting completed data. The simulation results are presented in Figure~\ref{triclusternonoverlap} for the nonoverlapping and overlapping settings. Each figure contains three boxplots summarizing \(F\)-scores for recovery on the sample, sample-feature, and sample-feature-time dimensions. Here, the sample-level \(F\)-score is computed from the sample clustering induced by each method's estimated tricluster structure, rather than from a separately performed sample clustering analysis. The second boxplot summarizes recovery of sample-feature bicluster assignments, and the third summarizes recovery of sample-feature-time tricluster assignments. For the sample-level recovery, we compare all methods. For bicluster recovery, we compare UFPCA, TriCluster, funLBM, funCC and Tri-SfSVD, since MFPCA produces sample-level clustering but does not output feature clusters or bicluster assignments. For tricluster recovery, we also compare only Tri-SfSVD and TriCluster, since other approaches do not produce tricluster output. 

In general, the results demonstrate the superior performance of Tri-SfSVD across all evaluated cases, achieving higher \(F\)-scores and showing strong robustness, particularly in highly sparse and high-dimensional settings. In particular, TriCluster exhibits consistently low \(F\)-scores under this simulation design. TriCluster is designed to group features primarily based on the similarity of functional shapes. In our simulations, however, a tricluster may contain features with heterogeneous shapes, provided that they jointly capture the dominant variation within the selected samples. Under such heterogeneous shape settings, shape-driven methods, such as TriCluster, tend to underperform because they do not prioritize contributory patterns when within-cluster shapes differ. In contrast, Tri-SfSVD directly targets the contributory structure under subregion sparsity, enabling it to recover coherent triclusters even when within-cluster shapes vary substantially.

\begin{figure}[!htbp]
  \centering

  \begin{subfigure}[b]{0.48\textwidth}
    \centering
    \includegraphics[width=\linewidth]{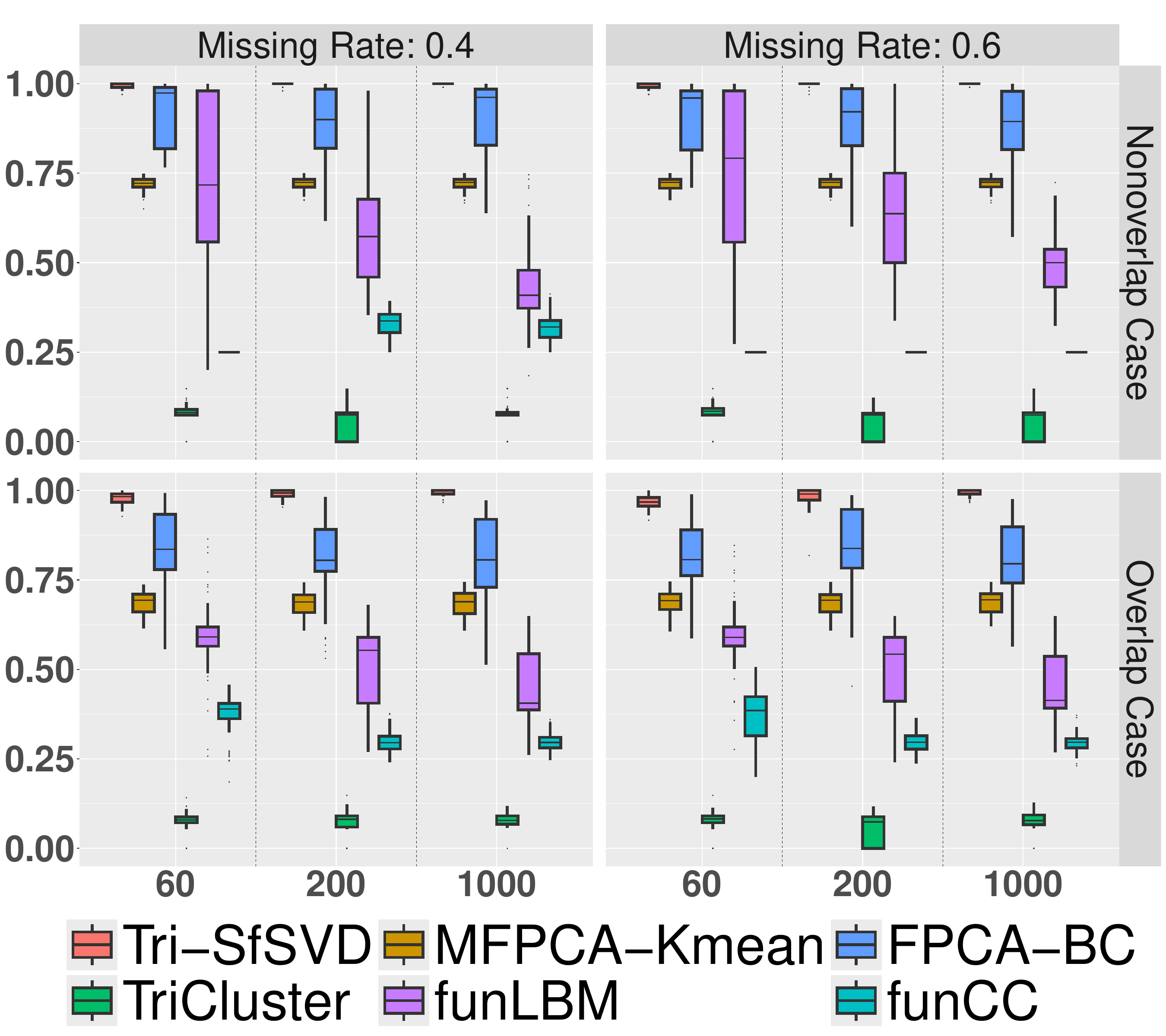}
    \caption{Sample cluster}
  \end{subfigure}
  \hfill
  \begin{subfigure}[b]{0.48\textwidth}
    \centering
    \includegraphics[width=\linewidth]{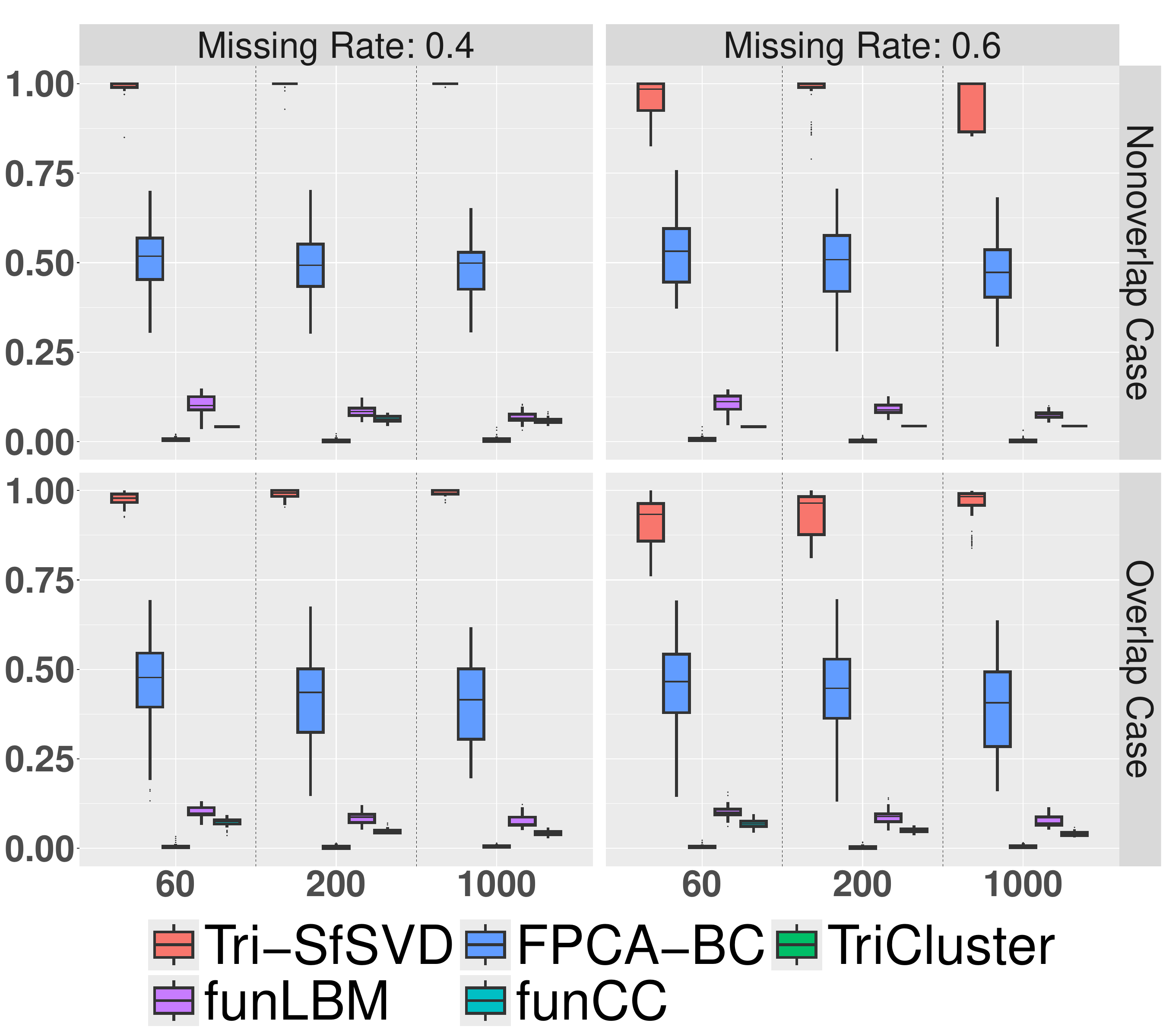}
    \caption{Bicluster}
  \end{subfigure}

  \vspace{0.8em}

  \begin{subfigure}[b]{0.48\textwidth}
    \centering
    \includegraphics[width=\linewidth]{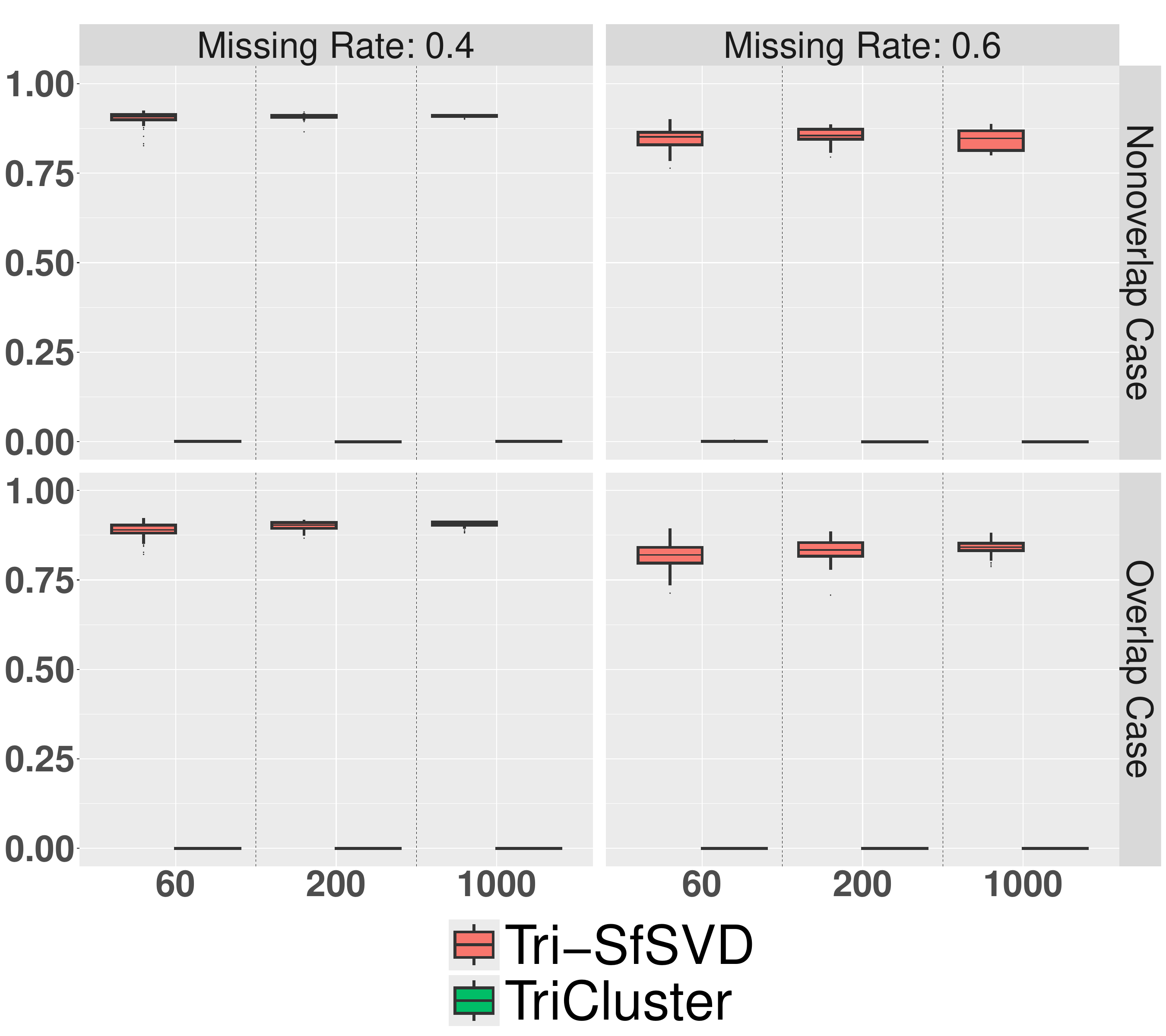}
    \caption{Tricluster}
  \end{subfigure}

  \caption{\(F\)-scores for simulation studies under non-overlapping and overlapping tricluster structures. Panels (a)--(c) summarize recovery performance at the sample, sample--feature bicluster, and sample--feature--time tricluster levels, respectively, over 100 simulation replications. Results are reported across different numbers of variables and missing rates. Larger \(F\)-scores indicate better agreement between the estimated and true clustering structures.}

  \label{triclusternonoverlap}
\end{figure}

\FloatBarrier

\section{Real Data Analysis} 
\label{Real data analysis}
In this section, we analyze the IBDMDB dataset, which tracks longitudinal microbial functional profiles from 130 participants over a 50-week period. The primary scientific objective is to identify clusters of subjects and microbial pathways that exhibit different longitudinal patterns in IBD. We use the preprocessed metagenomic data from \cite{msflda}. Specifically, we focus on the top 200 species-stratified metagenomic pathways identified in that study from an initial set of 1,622 candidates. Details of the preprocessing and feature selection procedure can be found in the same study. Each feature represents a microbial functional pathway associated with a particular bacterial species. Therefore, these features jointly encode pathway-level functional information and the bacterial species in which that function is observed. More details on the retained pathway functions and their bacterial-species distribution are provided in Table~\ref{tab:ibd_species_distribution} in the Supplementary Material. Before applying our method, each retained feature was standardized by subtracting its mean and dividing by its standard deviation across all observed values.
The number of sparse layers, denoted by \(K\), was determined using a cumulative explained variance criterion (see Section~\ref{Selection of the Number of Clusters} of the Supplementary Material. We selected  \(K=3\), which together account for over 90\% of the total variance. These three extracted layers effectively encompass all 130 participants and identify a subset of 151 metabolic pathways, providing an interpretable representation of the longitudinal disease signatures. 

\subsection{Clinical Interpretation of the Identified Feature Clusters} \label{Feature Sparsity via WGCNA}
Each sparse layer selected a subset of pathways through nonzero functional loadings. We therefore defined the pathway group \(\pmb{F}_k\) \((k=1,2,3)\) as the set of pathways with nonzero functional loadings in the \(k\)th sparse layer.
Notably, the pathways within each group were mainly associated with the same bacterial label, suggesting that each \(\pmb{F}_k\) captures a functionally coherent set of microbial pathways associated with a dominant bacterium. Specifically, \(\pmb{F}_1\) contains 65 pathways primarily associated with \textit{Alistipes putredinis}, \(\pmb{F}_2\) contains 49 pathways primarily associated with \textit{Ruminococcus torques}, and \(\pmb{F}_3\) contains 26 pathways primarily labeled as unclassified in this dataset. Detailed information on the pathways included in each \(\pmb{F}_k\) is provided in Section~\ref{real_data_appendix} of the Supplementary Material. In this application, the data-driven tuning procedure did not produce meaningful subregion-level sparsity in the estimated functional loadings across pathways, suggesting that the dominant pattern was better captured by subject--feature clustering over the full time domain. Figure~\ref{fig:ibd_feature_curves} in the Supplementary Material displays the estimated functional loadings for all pathways across the three sparse layers.

\subsection{Sample Cluster Refinement and Bicluster Association Analysis}

In our initial analysis, the EBIC/BIC-based selection tended toward conservative sparsity estimates (i.e., low sparsity) for the subject scores, a phenomen previously observed in \cite{biclusterssvd}. This resulted in layers that retained a large proportion of subjects, leading to substantial overlap across components and obscuring the boundaries between distinct biological subtypes. Similar challenges in achieving clear sample separability through sparse matrix decompositions alone have been noted by \cite{Sill2011}.

To enhance the separability and biological interpretability of the sample clusters, we adopted a two-step refinement strategy to delineate clear bicluster structures. First, we refined the sample clusters by applying \(k\)-means clustering to the left singular matrix estimated by our method. In this matrix, each column corresponds to the left singular vector \(\pmb{u}\) from a specific layer, while each row contains the latent scores for a given sample across all identified layers. 

By applying \(k\)-means to these row-vectors, we partitioned the \(n=130\) samples into three disjoint sample clusters, denoted by \(\{\pmb{S}_j\}_{j=1}^{3}\), where the choice of $k=3$ was aligned with the number of detected feature clusters.  

Let \(\pmb{C}\in\{0,1\}^{130\times3}\) denote the resulting sample cluster indicator matrix, where \(C_{ij}=1\) if sample \(i\in\pmb{S}_j\) and \(C_{ij}=0\) otherwise. Second, to characterize the functional behavior within these groups, we derived a low-dimensional representation of each pathway group by adapting the module eigengene farmework from Weighted Gene Co-expression Network Analysis \citep{WGCNA}. Specifically, for each pathway group \(\pmb{F}_k\), we defined its module eigengene as the first principal component of the data matrix restricted to the pathways in that group, denoted by \(\pmb{m}_k \in \mathbb{R}^{130}\). Thus, \(\pmb{m}_k\) captures the dominant variation pattern shared by the pathways in \(\pmb{F}_k\) across the 130 samples.

We then quantified the association between sample cluster \(\pmb{S}_j\) and pathway group \(\pmb{F}_k\) using distance correlation, $r_{jk}=\mathrm{dCor}(\pmb{C}_{\cdot j},\,\pmb{m}_k)$. Distance correlation ranges from 0 to 1, with larger values indicating stronger dependence. Unlike ordinary correlation, which mainly captures linear association, distance correlation can detect both linear and nonlinear dependence, making it useful for measuring association between a binary sample-cluster indicator and a continuous module eigengene. In our setting, a relatively large value of \(r_{jk}\) indicates a strong association between membership in sample cluster \(\pmb{S}_j\) and the pathway pattern summarized by \(\pmb{m}_k\). Accordingly, pairs \((\pmb{S}_j,\pmb{F}_k)\) with relatively large values of \(r_{jk}\) were interpreted as representative biclusters, reflecting the core associations between sample clusters and pathway groups.

\subsection{Clinical Characterization of the Identified Sample Clusters}
\label{Sample Clusters Identified}
Following the procedure described above, we obtained three sample clusters \(\{\pmb{S}_j\}_{j=1}^{3}\), which were compared across clinical phenotypes using Fisher's exact test for binary variables and the Kruskal--Wallis test for continuous variables; the results are summarized in Table~\ref{tab:phenotype}. A significant association was observed between the sample clusters and IBD type (\(P=0.0019\)), whereas the updated sex association did not reach the 0.05 significance level (\(P=0.0845\)). Specifically, \(\pmb{S}_1\) was enriched for CD (57.6\%) and females (63.6\%), with lower proportions of UC (18.2\%) and non-IBD individuals (24.2\%). In contrast, \(\pmb{S}_2\) contained the largest proportion of non-IBD subjects among the three clusters (35.6\%) and was predominantly male (62.2\%). Finally, \(\pmb{S}_3\) was dominated by IBD subjects (94.2\%; CD 57.7\%, UC 36.5\%) and showed a balanced sex distribution (50.0\% female, 50.0\% male), suggesting an IBD-associated profile spanning both CD and UC.

\begin{table}[h]
\centering
\small
\setlength{\tabcolsep}{6pt}
\begin{tabular}{
  >{\raggedright\arraybackslash}p{2.2cm}
  >{\raggedright\arraybackslash}p{3.6cm}
  >{\centering\arraybackslash}p{1.6cm}
  >{\centering\arraybackslash}p{1.6cm}
  >{\centering\arraybackslash}p{1.6cm}
  >{\centering\arraybackslash}p{2.2cm}}
\hline
 &  & \multicolumn{3}{c}{Sample clusters (N)} & \multirow{2}{*}{P-value} \\
\cline{3-5}
 &  & $\pmb{S}_1$ (33) & $\pmb{S}_2$ (45) & $\pmb{S}_3$ (52) & \\
\hline
\multirow{3}{*}{IBD type}
  & Non-IBD & 8 (24.2\%)  & 16 (35.6\%) & 3 (5.8\%)  & \multirow{3}{*}{0.0019} \\
  & CD      & 19 (57.6\%) & 16 (35.6\%) & 30 (57.7\%) & \\
  & UC      & 6 (18.2\%)  & 13 (28.9\%) & 19 (36.5\%) & \\
\hline
\multirow{2}{*}{Sex}
  & Female  & 21 (63.6\%) & 17 (37.8\%) & 26 (50.0\%) & \multirow{2}{*}{0.0845} \\
  & Male    & 12 (36.4\%) & 28 (62.2\%) & 26 (50.0\%) & \\
\hline
\multicolumn{2}{>{\raggedright\arraybackslash}p{5.8cm}}{\small Age at diagnosis, mean (SD)}
  & 21.12 (12.08) & 20.96 (14.92) & 21.63 (10.65) & 0.5637 \\
\multicolumn{2}{>{\raggedright\arraybackslash}p{5.8cm}}{\small CRP (C-Reactive Protein), mean (SD)}
  & 60.33 (241.90) & 63.18 (234.15) & 49.68 (203.66) & 0.3789 \\
\multicolumn{2}{>{\raggedright\arraybackslash}p{5.8cm}}{\small ESR (Erythrocyte Sedimentation Rate), mean (SD)}
  & 18.65 (19.81) & 21.61 (17.51) & 23.96 (20.82) & 0.3588 \\
\hline
\multirow{5}{*}{Race}
  & \parbox[t]{3.6cm}{\scriptsize American Indian or Alaska Native} & 0 & 0 & 1 & \multirow{5}{*}{0.8205} \\
  & \parbox[t]{3.6cm}{\scriptsize Black or African American}        & 1 & 5 & 4 & \\
  & \parbox[t]{3.6cm}{\scriptsize More than one race}               & 1 & 3 & 1 & \\
  & \parbox[t]{3.6cm}{\scriptsize Other}                            & 1 & 1 & 2 & \\
  & \parbox[t]{3.6cm}{\scriptsize White}                            & 30 & 36 & 44 & \\
\hline
\end{tabular}
\caption{Associations between sample clusters and clinical phenotypes in the IBD dataset. Entries are counts (percentages) for categorical variables and mean (SD) for continuous variables. \(P\)-values are from Fisher's exact tests for categorical variables and Kruskal--Wallis tests for continuous variables. The table shows that the identified sample clusters differ significantly in IBD type, whereas no significant differences are observed for sex, age at diagnosis, CRP, ESR, or race.}
\label{tab:phenotype}
\end{table}

\subsection{Interpretation of Identified Biclusters}
\label{Interpretation of Biclusters Identified}

Table~\ref{tab:assoc-bootmean} summarizes the associations between the identified subject clusters \(\{\pmb{S}_j\}_{j=1}^{3}\) and the pathway groups \(\{\pmb{F}_k\}_{k=1}^{3}\). Based on the largest distance correlation values, we identified five primary biclusters: \((\pmb{S}_1,\pmb{F}_1)\), \((\pmb{S}_3,\pmb{F}_1)\), \((\pmb{S}_2,\pmb{F}_2)\), \((\pmb{S}_3,\pmb{F}_2)\), and \((\pmb{S}_3,\pmb{F}_3)\).  Figure~\ref{fig:ibd_overlap_bicluster_mean} provides a visual summary of these identified biclusters by displaying three randomly selected pathways from each pathway group, with full longitudinal profiles for all pathways provided in the Supplementary Figures~\ref{fig:supp_S1F1}--\ref{fig:supp_S3F3}.

Pathway group \(\pmb{F}_1\) is dominated by pathway features stratified to \textit{Alistipes putredinis}. Based on the distance-correlation criterion in Table~\ref{tab:assoc-bootmean}, \(\pmb{F}_1\) contributes to two selected biclusters, \((\pmb{S}_1,\pmb{F}_1)\) and \((\pmb{S}_3,\pmb{F}_1)\). Figure~\ref{fig:ibd_overlap_bicluster_mean} visualizes these biclusters using representative pathways from \(\pmb{F}_1\), showing diverging temporal trajectories between the two subject clusters. The mean curves for these \textit{Alistipes}-stratified pathways, representing functional activity levels rather than simple bacterial abundance, show a general decline over time in $\bm{S}_1$, while exhibiting an increasing trend in $\bm{S}_3$. This divergence indicates that $(\bm{S}_1, \bm{F}_1)$ and $(\bm{S}_3, \bm{F}_1)$ represent two distinct functional signatures involving the same microbial species. This longitudinal contrast aligns with the clinical phenotypic composition of the two subject groups. While both $S_1$ and $S_3$ contain similar proportions of subjects with Crohn’s Disease (CD), $S_1$ is enriched with non-IBD (control) individuals, whereas $S_3$ includes a significantly larger proportion of subjects with Ulcerative Colitis (UC). Our findings are consistent with existing literature reporting that \textit{Alistipes putredinis} is a hallmark of a healthy gut microbiome and is typically depleted in UC patients \citep{Nomura2021BacteroidetesUC,deMeij2018CoreMicrobiotaIBD}. Collectively, these results suggest that $F_1$ serves as an overlapping pathway group that distinguishes the relatively ``healthy'' profiles of $S_1$ from the UC-enriched profiles of $S_3$, highlighting that microbial functional signals are heterogeneous across different IBD subtypes.

For space considerations, additional interpretations of the biclusters involving \(\pmb{F}_2\) and \(\pmb{F}_3\) are provided in Section~\ref{supp:ibd_additional_bicluster_interpretation} of the Supplementary Material.

\begin{table}[h]
\centering
\begin{tabular}{cccc}
\hline
 & $\pmb{F}_1$ & $\pmb{F}_2$ & $\pmb{F}_3$ \\
\hline
$\pmb{S}_1$ & \pmb{0.705} (0.049) & 0.270 (0.053) & 0.263 (0.057) \\
$\pmb{S}_2$ & 0.164 (0.046) & \pmb{0.845} (0.029) & 0.357 (0.063) \\
$\pmb{S}_3$ & \pmb{0.689} (0.048) & \pmb{0.602} (0.051) & \pmb{0.553} (0.057) \\
\hline
\end{tabular}
\caption{Bootstrap mean (SD) of \(r_{jk}\) between sample clusters \(\{\pmb{S}_j\}_{j=1}^{3}\) and pathway groups \(\{\pmb{F}_k\}_{k=1}^{3}\). Bootstrap replicates were obtained by resampling subjects with replacement and recomputing \(r_{jk}\) in each replicate. Larger values indicate stronger dependence between a sample cluster and a pathway group, and therefore provide stronger evidence for interpreting that pair as a bicluster. The table indicates an overlapping association structure: \(\pmb{F}_1\) is strongly associated with both \(\pmb{S}_1\) and \(\pmb{S}_3\), \(\pmb{F}_2\) is most strongly associated with \(\pmb{S}_2\) but also shows a relatively strong association with \(\pmb{S}_3\), and \(\pmb{F}_3\) is most strongly associated with \(\pmb{S}_3\).}
\label{tab:assoc-bootmean}
\end{table}

\begin{figure}[t!]
  \centering
  \includegraphics[width=0.84\textwidth]{real_data_images/ibd_bicluster_mean_raw_and_recon.pdf}
  \caption{
Visual summary of the identified sample--pathway bicluster structure. Rows correspond to the sample clusters \(\pmb{S}_1,\pmb{S}_2,\pmb{S}_3\), and column groups correspond to the pathway groups \(\pmb{F}_1,\pmb{F}_2,\pmb{F}_3\), with three randomly selected pathways from each pathway group shown for illustration. Each \(\pmb{S}_i\)-\(\pmb{F}_j\) block represents one sample--pathway combination. Combinations identified as biclusters are shown as colored blocks, whereas combinations not identified are left blank. Within each colored block, colored points represent raw sample mean measurements for the selected pathway among samples in the corresponding sample cluster, computed after omitting missing observations at each time point, and colored solid curves represent the reconstructed mean trajectories from our method.
}
  \label{fig:ibd_overlap_bicluster_mean}
\end{figure}

\subsection{Comparison with Alternative Methods}
To assess the practical performance of the proposed method, we compared it with MFPCA, FPCA-BC, funLBM and funCC on the same IBD dataset. Detailed results are reported in section~\ref{supp:ibd_competing_methods} in the Supplementary Material. Briefly, MFPCA yielded sample clusters that were significantly associated with IBD type (\(P<0.001\)); however, because it does not perform pathway selection, it cannot identify interpretable pathway groups. FPCA-BC did not identify sample clusters with significant associations with either IBD type or sex in this application. Although funLBM identified meaningful sample clusters, it returned 15 feature clusters and a total of 45 biclusters, making the resulting structure less interpretable. funCC did not produce clearly meaningful bicluster structures. Overall, these comparisons suggest that the proposed method provides a more structured and biologically interpretable sample--pathway representation.

\section{Conclusion} 
\label{Conclusion}

We propose a triclustering framework for high-dimensional multivariate functional data to identify structured associations among samples, variables, and time. Built on a functional SVD formulation with sparse regularization, the method lets the data determine the underlying structure across samples, features, and time, rather than imposing a triclustering pattern a priori. The framework also accommodates sparse functional data with irregularly missing observations and is computationally scalable, since the optimization over functional loadings decomposes into independent subproblems that enable parallel implementation.

We illustrated the method through two real data applications. In the IBD dataset, it identified three pathway groups linked to sample clusters with distinct clinical and demographic profiles. In the EEG dataset, it identified three channel groups with distinct spatial and temporal patterns: one frontal channel group with signals over almost the full time domain and two posterior channel groups with signals concentrated in different localized temporal windows. These EEG patterns were associated with subject clusters enriched for alcoholic or control participants, suggesting that alcoholism-related electrophysiological differences may be reflected in region-specific and time-localized brain activity.

A computational limitation of the proposed method is that, as a regularized
framework with multiple tuning parameters, it requires careful specification of
candidate tuning ranges. In high-dimensional settings, evaluation over a large
number of tuning combinations can be computationally demanding and less efficient
than unregularized alternatives. Future work could therefore focus on developing
more scalable strategies for tuning-parameter selection.

Methodologically, another promising direction is to extend the proposed framework
to nonlinear functional SVD. For example, neural-network-based representations
could be incorporated to capture nonlinear functional patterns while retaining
sparse subject, feature, and time-subregion selection for interpretability. The
method could also be extended to multi-omics functional data by jointly modeling
multiple longitudinal omics layers, enabling the identification of shared or
omics-specific bicluster and tricluster structures across data modalities.

Overall, the proposed framework provides a flexible and interpretable approach for uncovering meaningful structures in complex functional data.


\section*{Funding and Acknowledgments}
The project was supported by the Award Numbers 1R35GM142695 and 1R35GM150537 of the National Institute of General Medical Sciences of the National Institutes of Health. The content is solely the responsibility of the authors and does not represent the official views of the National Institutes of Health.

\section*{Data Availability Statement}
The data used were obtained from https://ibdmdb.org. 
We provide an R package, \textit{TriSfsvd}, to facilitate the use of our method. Its source codes, along with a README file, are available at: https://github.com/YueZhao512/TriSfSVD.

\clearpage
\pagenumbering{arabic}
\renewcommand*{\thepage}{A\arabic{page}}
\appendix
\setcounter{section}{0}
\setcounter{subsection}{0}
\setcounter{subsubsection}{0}
\renewcommand{\thesection}{\Alph{section}}
\renewcommand{\thesubsection}{\thesection.\arabic{subsection}}
\renewcommand{\thesubsubsection}{\thesubsection.\arabic{subsubsection}}
\setcounter{equation}{0}
\renewcommand\theequation{\thesection.\arabic{equation}}
\setcounter{figure}{0}
\counterwithin{figure}{section}
\renewcommand\thefigure{\thesection.\arabic{figure}}
\setcounter{table}{0}
\renewcommand\thetable{\arabic{table}}

\section*{Supplementary Materials for \textit{Sparse Functional Singular Value Decomposition for Biclustering and Triclustering Longitudinal Data}}

\section{FISTA Details}
\label{algorithm}

\subsection{FISTA details for the \texorpdfstring{$\widetilde{\pmb{\varphi}}_j$}{varphi-j} update}

For the variable-specific scaled-loading subproblem in \eqref{best approximation step 2 matrix} of the main manuscript, we use a modified Fast Iterative Shrinkage-Thresholding Algorithm (FISTA) \citep{fista}. FISTA is applied to the scaled-loading subproblem for \(\widetilde{\pmb{\varphi}}_j=s\pmb{\varphi}_j\); the normalized loading direction is obtained after concatenating the variable-specific scaled loadings. Specifically, we solve
\begin{equation} \label{supp:best approximation step 2 fista}
    \min_{\widetilde{\pmb{\varphi}}_j}
    \left\{
    \Vert \tilde{\pmb{y}}_j - \pmb{U}_j\widetilde{\pmb{\varphi}}_j \Vert_2^2
    + \alpha_j
    \widetilde{\pmb{\varphi}}_j^\top\pmb{\Omega}_j
    \widetilde{\pmb{\varphi}}_j
    + \theta w_{2,j}\Vert\widetilde{\pmb{\varphi}}_j\Vert_2
    + \lambda_j
    \Vert \pmb{w}_{3,j}\odot\widetilde{\pmb{\varphi}}_j\Vert_1
    \right\}.
\end{equation}
We decompose the objective as \(f(\widetilde{\pmb{\varphi}}_j)+g(\widetilde{\pmb{\varphi}}_j)\), where
\[
f(\widetilde{\pmb{\varphi}}_j)
=
\Vert \tilde{\pmb{y}}_j
-
\pmb{U}_j\widetilde{\pmb{\varphi}}_j
\Vert_2^2
+ \alpha_j
\widetilde{\pmb{\varphi}}_j^\top
\pmb{\Omega}_j
\widetilde{\pmb{\varphi}}_j
\]
is continuously differentiable and convex, with gradient
\[
\nabla f(\widetilde{\pmb{\varphi}}_j)
=
-2\pmb{U}_j^\top
\left(
\tilde{\pmb{y}}_j
-
\pmb{U}_j\widetilde{\pmb{\varphi}}_j
\right)
+
2\alpha_j
\pmb{\Omega}_j
\widetilde{\pmb{\varphi}}_j .
\]
The nonsmooth component is
\[
g(\widetilde{\pmb{\varphi}}_j)
=
\theta w_{2,j}\Vert\widetilde{\pmb{\varphi}}_j\Vert_2
+
\lambda_j
\Vert \pmb{w}_{3,j}\odot\widetilde{\pmb{\varphi}}_j\Vert_1,
\]
which contains the variable-level group penalty and the temporal adaptive-lasso penalty. Given the current iterate \(\widetilde{\pmb{\varphi}}_j\), FISTA forms the gradient-descent point
\[
\pmb{z}=\widetilde{\pmb{\varphi}}_j-\eta \nabla f(\widetilde{\pmb{\varphi}}_j),
\]
where \(\eta>0\) is the step size, and updates
\[
\widetilde{\pmb{\varphi}}_j^{\mathrm{new}}=\operatorname{prox}_{\eta g}(\pmb{z}).
\]

\begin{lemma}[Proximal operator for the Tri-SfSVD \(\widetilde{\pmb{\varphi}}_j\)-update]
\label{lem:proximal_operator_tricluster_supp}
For
\[
g(\widetilde{\pmb{\varphi}}_j)
=
\theta w_{2,j}\Vert \widetilde{\pmb{\varphi}}_j\Vert_2
+ \lambda_j \Vert \pmb{w}_{3,j}\odot\widetilde{\pmb{\varphi}}_j\Vert_1,
\]
the proximal operator
\[
\operatorname{prox}_{\eta g}(\pmb{z})
=
\arg\min_{\widetilde{\pmb{\varphi}}_j}
\left\{
\frac{1}{2} \Vert \widetilde{\pmb{\varphi}}_j - \pmb{z} \Vert_2^2
+ \eta g(\widetilde{\pmb{\varphi}}_j)
\right\}
\]
has the closed-form expression
\begin{equation}
\operatorname{prox}_{\eta g}(\pmb{z})
=
\left(
1 - \frac{\eta \theta w_{2,j}}
{\Vert S(\pmb{z}, \eta \lambda_j \pmb{w}_{3,j}) \Vert_2}
\right)_{+}
S(\pmb{z}, \eta \lambda_j \pmb{w}_{3,j}),
\end{equation}
where \(S(\pmb{z},\pmb{a})\) is the component-wise soft-thresholding operator,
\[
S(\pmb{z},\pmb{a})_\ell
=
\operatorname{sign}(z_\ell)(|z_\ell|-a_\ell)_+.
\]
If \(S(\pmb{z}, \eta \lambda_j \pmb{w}_{3,j})=\pmb{0}\), then \(\operatorname{prox}_{\eta g}(\pmb{z})=\pmb{0}\).
\end{lemma}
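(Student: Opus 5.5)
The plan is to prove the closed form directly from the optimality conditions, using the standard fact that the prox of a sum of a weighted $\ell_1$ norm and a multiple of the $\ell_2$ norm factors as a composition: first soft-threshold componentwise, then apply group (block) soft-thresholding. Write $\pmb{a}=\eta\lambda_j\pmb{w}_{3,j}$, $c=\eta\theta w_{2,j}$ and $\pmb{v}=S(\pmb{z},\pmb{a})$, and assume the weights are nonnegative, as they are by construction. The objective $\frac12\Vert\pmb{x}-\pmb{z}\Vert_2^2+c\Vert\pmb{x}\Vert_2+\Vert\pmb{a}\odot\pmb{x}\Vert_1$ is strongly convex, so it has a unique minimizer. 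It therefore suffices to exhibit a candidate $\pmb{x}^\star$ and verify $\pmb{0}\in\pmb{x}^\star-\pmb{z}+c\,\partial\Vert\pmb{x}^\star\Vert_2+\pmb{a}\odot\partial\Vert\pmb{x}^\star\Vert_1$. The sum rule for subdifferentials applies because both penalties are finite convex functions on $\mathbb{R}^{d_j}$.

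The key preliminary observation is about the componentwise residual $\pmb{z}-\pmb{v}$. For each $\ell$ it equals $a_\ell\,\mathrm{sign}(z_\ell)$ when $|z_\ell|>a_\ell$ and equals $z_\ell$, with $|z_\ell|\le a_\ell$, otherwise. In either case $\pmb{z}-\pmb{v}=\pmb{a}\odot\pmb{g}$ for some $\pmb{g}$ with $g_\ell\in\partial|v_\ell|$. Moreover $g_\ell=\mathrm{sign}(v_\ell)$ whenever $v_\ell\neq0$, and, crucially, this same $\pmb{g}$ also lies in $\partial|\,\cdot\,|$ evaluated at any positive multiple of $\pmb{v}$, and at $\pmb{0}$.

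The argument then splits into two cases. \emph{Case 1:} $\Vert\pmb{v}\Vert_2\le c$, which includes $\pmb{v}=\pmb{0}$. Take $\pmb{x}^\star=\pmb{0}$. We need $\pmb{z}-\pmb{a}\odot\pmb{g}\in c\,B_2$ for some $\pmb{g}\in[-1,1]^{d_j}$. Choosing the $\pmb{g}$ above gives $\pmb{z}-\pmb{a}\odot\pmb{g}=\pmb{v}$, whose norm is at most $c$, so the condition holds. This case also covers the statement's convention that the prox is $\pmb{0}$ when $S(\pmb{z},\pmb{a})=\pmb{0}$, which avoids division by zero. \emph{Case 2:} $\Vert\pmb{v}\Vert_2>c$. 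Take $\pmb{x}^\star=\beta\pmb{v}$ with $\beta=1-c/\Vert\pmb{v}\Vert_2\in(0,1)$. Then $\partial\Vert\pmb{x}^\star\Vert_2=\{\pmb{v}/\Vert\pmb{v}\Vert_2\}$, and $\pmb{g}$ is a valid $\ell_1$ subgradient at $\pmb{x}^\star$ because $\beta>0$ preserves signs and zero pattern. The optimality expression becomes $\beta\pmb{v}-\pmb{z}+c\,\pmb{v}/\Vert\pmb{v}\Vert_2+\pmb{a}\odot\pmb{g}$, which simplifies to $\beta\pmb{v}-\pmb{v}+c\,\pmb{v}/\Vert\pmb{v}\Vert_2=\pmb{0}$. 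Combining the two cases gives exactly the $(\cdot)_+$ formula.

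The only delicate step is the choice of the $\ell_1$ subgradient $\pmb{g}$ on coordinates where $v_\ell=0$. There $\pmb{g}$ must be selected so that $g_\ell=z_\ell/a_\ell\in[-1,1]$, with an arbitrary value if $a_\ell=0$, since then $z_\ell=0$. The same $\pmb{g}$ must work simultaneously for the group-penalty condition in both cases. I expect this bookkeeping, together with the zero-weight edge cases, to be the main thing to write carefully; the rest is direct verification.
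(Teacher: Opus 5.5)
Your proof is correct and follows essentially the same route as the paper. Both arguments work from the subgradient optimality condition, split on whether $\Vert S(\pmb{z},\eta\lambda_j\pmb{w}_{3,j})\Vert_2\le\eta\theta w_{2,j}$, and rest on the fact that the $\ell_1$-subgradient residual $\pmb{z}-\eta\lambda_j(\pmb{w}_{3,j}\odot\pmb{g})$ can be taken to equal the soft-thresholded vector.

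The only difference is the direction of the logic. The paper derives the formula from necessary conditions: it minimizes its $J$ over the box $[-1,1]^{d_j}$ for the zero case, then solves $(1+\eta\theta w_{2,j}/\Vert\widetilde{\pmb{\varphi}}_j\Vert_2)\widetilde{\pmb{\varphi}}_j=S(\pmb{z},\eta\lambda_j\pmb{w}_{3,j})$ otherwise. You instead verify one certificate $\pmb{g}$ that works in both cases and invoke strong-convexity uniqueness. This makes explicit the sign-pattern step the paper leaves informal, and it covers the zero-weight cases where the paper divides by $\eta\theta w_{2,j}$ or $\eta\lambda_j w_{3,j,i}$. A minor nit: $\beta\in(0,1]$ rather than $(0,1)$ when $\theta w_{2,j}=0$.
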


The proof follows from the proximal mapping of a sparse-group penalty and is included here to keep the derivation of the FISTA update self-contained.
\begin{proof}[Proof of Lemma~\ref{lem:proximal_operator_tricluster_supp}]
In FISTA, at each iteration with Lipschitz constant \(L > 0\), we form the quadratic approximation
\[
M(\widetilde{\pmb{\varphi}}_j)
=
f(\widetilde{\pmb{\varphi}}_0)
+
\langle \widetilde{\pmb{\varphi}}_j - \widetilde{\pmb{\varphi}}_0, \nabla f(\widetilde{\pmb{\varphi}}_0) \rangle
+
\frac{L}{2} \Vert \widetilde{\pmb{\varphi}}_j - \widetilde{\pmb{\varphi}}_0 \Vert_2^2
+
\theta w_{2,j} \Vert \widetilde{\pmb{\varphi}}_j \Vert_2
+
\lambda_j \Vert \pmb{w}_{3,j} \odot \widetilde{\pmb{\varphi}}_j \Vert_1
\]
at current point \(\widetilde{\pmb{\varphi}}_0\). Minimizing \(M(\cdot)\) is equivalent to
\[
\tilde{M}(\widetilde{\pmb{\varphi}}_j)
=
\frac{1}{2\eta} \Vert \widetilde{\pmb{\varphi}}_j - \pmb{z} \Vert_2^2
+
\theta w_{2,j} \Vert \widetilde{\pmb{\varphi}}_j \Vert_2
+
\lambda_j \Vert \pmb{w}_{3,j} \odot \widetilde{\pmb{\varphi}}_j \Vert_1,
\]
where \(\eta = 1/L\) is the step size and
\[
\pmb{z} = \widetilde{\pmb{\varphi}}_0 - \eta \nabla f(\widetilde{\pmb{\varphi}}_0).
\]
Define the nonsmooth penalty term as
\[
g(\widetilde{\pmb{\varphi}}_j)
=
\theta w_{2,j} \Vert \widetilde{\pmb{\varphi}}_j \Vert_2
+
\lambda_j \Vert \pmb{w}_{3,j} \odot \widetilde{\pmb{\varphi}}_j \Vert_1 .
\]
Then minimizing \(\tilde{M}(\widetilde{\pmb{\varphi}}_j)\) is equivalent, after multiplying
the objective by the positive constant \(\eta\), to solving
\[
\arg\min_{\widetilde{\pmb{\varphi}}_j}
\left\{
\frac{1}{2} \Vert \widetilde{\pmb{\varphi}}_j - \pmb{z} \Vert_2^2
+
\eta g(\widetilde{\pmb{\varphi}}_j)
\right\}.
\]
Therefore, this minimizer is the proximal operator of \(\eta g\) at \(\pmb{z}\):
\[
\operatorname{prox}_{\eta g}(\pmb{z})
=
\arg\min_{\widetilde{\pmb{\varphi}}_j}
\left\{
\frac{1}{2} \Vert \widetilde{\pmb{\varphi}}_j - \pmb{z} \Vert_2^2
+
\eta \theta w_{2,j} \Vert \widetilde{\pmb{\varphi}}_j \Vert_2
+
\eta \lambda_j \Vert \pmb{w}_{3,j} \odot \widetilde{\pmb{\varphi}}_j \Vert_1
\right\}.
\]

Following \citep{sgl, sglnote}, the subgradient equation for the minimizer \(\widetilde{\pmb{\varphi}}_j\) is
\[
\pmb{0}
\in
\widetilde{\pmb{\varphi}}_j - \pmb{z}
+
\eta \theta w_{2,j} \partial \Vert \widetilde{\pmb{\varphi}}_j \Vert_2
+
\eta \lambda_j \partial \Vert \pmb{w}_{3,j} \odot \widetilde{\pmb{\varphi}}_j \Vert_1,
\]
where
\[
\partial \Vert \widetilde{\pmb{\varphi}}_j \Vert_2
=
\left\{
\pmb{w} :
\pmb{w}=\frac{\widetilde{\pmb{\varphi}}_j}{\Vert \widetilde{\pmb{\varphi}}_j \Vert_2}
\text{ if } \widetilde{\pmb{\varphi}}_j \neq \pmb{0},
\;
\Vert \pmb{w} \Vert_2 \le 1
\text{ if } \widetilde{\pmb{\varphi}}_j = \pmb{0}
\right\},
\]
and
\[
\partial \Vert \pmb{w}_{3,j} \odot \widetilde{\pmb{\varphi}}_j \Vert_1
=
\left\{
\pmb{w}_{3,j} \odot \pmb{v} :
v_i = \mathrm{sign}(\widetilde{\varphi}_{j,i})
\text{ if } \widetilde{\varphi}_{j,i} \neq 0,
\;
|v_i| \le 1
\text{ if } \widetilde{\varphi}_{j,i} = 0
\right\}.
\]
Thus, the subgradient equation can be rewritten as
\[
\pmb{0}
\in
\widetilde{\pmb{\varphi}}_j - \pmb{z}
+
\eta \theta w_{2,j} \pmb{w}
+
\eta \lambda_j (\pmb{w}_{3,j} \odot \pmb{v}).
\]

The necessary condition for \(\widetilde{\pmb{\varphi}}_j = \pmb{0}\) to be the minimizer is that there exist \(\pmb{w}\) and \(\pmb{v}\) satisfying the subgradient equation with \(\Vert \pmb{w} \Vert_2 \le 1\) and \(|v_i| \le 1\) for all \(i\). This is equivalent to checking whether the minimum of
\[
J(\pmb{v})
=
\frac{1}{(\eta \theta w_{2,j})^2}
\Vert
\pmb{z} - \eta \lambda_j (\pmb{w}_{3,j} \odot \pmb{v})
\Vert_2^2
=
\Vert \pmb{w} \Vert_2^2,
\]
subject to \(|v_i| \le 1\) for all \(i\), is at most \(1\). The minimizer \(\pmb{v}\) is obtained component-wise as
\[
v_i =
\begin{cases}
\dfrac{z_i}{\eta \lambda_j w_{3,j,i}}, & \text{if } \left| \dfrac{z_i}{\eta \lambda_j w_{3,j,i}} \right| \le 1, \\[2ex]
\mathrm{sign}\left( \dfrac{z_i}{\eta \lambda_j w_{3,j,i}} \right), & \text{if } \left| \dfrac{z_i}{\eta \lambda_j w_{3,j,i}} \right| > 1.
\end{cases}
\]
This \(\pmb{v}\) aligns with the soft-thresholding operator, such that
\[
\pmb{z} - \eta \lambda_j (\pmb{w}_{3,j} \odot \pmb{v})
=
S(\pmb{z}, \eta \lambda_j \pmb{w}_{3,j}),
\]
where
\[
S(\pmb{z}, \eta \lambda_j \pmb{w}_{3,j})
=
\Bigl(
\mathrm{sign}(z_i)\bigl(|z_i| - \eta \lambda_j w_{3,j,i}\bigr)_+
\Bigr)_{i=1}^{d_j}.
\]
Thus, \(\widetilde{\pmb{\varphi}}_j = \pmb{0}\) if
\[
\Vert S(\pmb{z}, \eta \lambda_j \pmb{w}_{3,j}) \Vert_2
\le
\eta \theta w_{2,j}.
\]

Otherwise, for \(\widetilde{\pmb{\varphi}}_j \neq \pmb{0}\), the subgradient equation simplifies to
\[
\left(
1 + \frac{\eta \theta w_{2,j}}{\Vert \widetilde{\pmb{\varphi}}_j \Vert_2}
\right)\widetilde{\pmb{\varphi}}_j
=
S(\pmb{z}, \eta \lambda_j \pmb{w}_{3,j}),
\]
since \(\pmb{w} = \widetilde{\pmb{\varphi}}_j / \Vert \widetilde{\pmb{\varphi}}_j \Vert_2\) and \(\pmb{v}\) corresponds to the subgradient for the \(\ell_1\) term post-soft-thresholding. Taking the \(\ell_2\)-norm of both sides yields
\[
\Vert \widetilde{\pmb{\varphi}}_j \Vert_2
=
\Vert S(\pmb{z}, \eta \lambda_j \pmb{w}_{3,j}) \Vert_2
-
\eta \theta w_{2,j}.
\]
Substituting back gives
\[
\widetilde{\pmb{\varphi}}_j
=
\left(
1 - \frac{\eta \theta w_{2,j}}
{\Vert S(\pmb{z}, \eta \lambda_j \pmb{w}_{3,j}) \Vert_2}
\right)
S(\pmb{z}, \eta \lambda_j \pmb{w}_{3,j}).
\]
Combining the zero and non-zero cases, the proximal operator is
\[
\operatorname{prox}_{\eta g}(\pmb{z})
=
\left(
1 - \frac{\eta \theta w_{2,j}}
{\Vert S(\pmb{z}, \eta \lambda_j \pmb{w}_{3,j}) \Vert_2}
\right)_{+}
S(\pmb{z}, \eta \lambda_j \pmb{w}_{3,j}).
\]
When \(S(\pmb{z}, \eta \lambda_j \pmb{w}_{3,j}) = \pmb{0}\), define
\(\operatorname{prox}_{\eta g}(\pmb{z}) = \pmb{0}\). 

This completes the proof.
\end{proof}

When \(\lambda_j=0\), the triclustering penalty reduces to the biclustering penalty \(g(\widetilde{\pmb{\varphi}}_j)=\theta w_{2,j}\Vert \widetilde{\pmb{\varphi}}_j\Vert_2\), and the proximal operator becomes the standard group-lasso thresholding update:
\[
\operatorname{prox}_{\eta g}(\pmb{z})
=
\left(1-\frac{\eta\theta w_{2,j}}{\Vert \pmb{z}\Vert_2}\right)_+\pmb{z}.
\]
The derivation of this biclustering special case is as follows.
\begin{proof}[Proof of Bicluster Proximal Operator]
In FISTA, at each iteration with Lipschitz constant \(L > 0\), we form the quadratic approximation
\[
M(\widetilde{\pmb{\varphi}}_j) = f(\widetilde{\pmb{\varphi}}_0) + \langle \widetilde{\pmb{\varphi}}_j - \widetilde{\pmb{\varphi}}_0, \nabla f(\widetilde{\pmb{\varphi}}_0) \rangle + \frac{L}{2} \Vert \widetilde{\pmb{\varphi}}_j - \widetilde{\pmb{\varphi}}_0 \Vert_2^2 + \theta w_{2,j} \Vert \widetilde{\pmb{\varphi}}_j \Vert_2
\]
at current point \(\widetilde{\pmb{\varphi}}_0\). Minimizing \(M(\cdot)\) is equivalent to
\[
\tilde{M}(\widetilde{\pmb{\varphi}}_j) = \frac{1}{2\eta} \Vert \widetilde{\pmb{\varphi}}_j - \pmb{z} \Vert_2^2 + \theta w_{2,j} \Vert \widetilde{\pmb{\varphi}}_j \Vert_2,
\]
where \(\eta = 1/L\) is the step size and \(\pmb{z} = \widetilde{\pmb{\varphi}}_0 - \eta \nabla f(\widetilde{\pmb{\varphi}}_0)\). The proximal operator is thus
\[
\text{prox}_{\eta g}(\pmb{z}) = \arg\min_{\widetilde{\pmb{\varphi}}_j} \frac{1}{2} \Vert \widetilde{\pmb{\varphi}}_j - \pmb{z} \Vert_2^2 + \eta \theta w_{2,j} \Vert \widetilde{\pmb{\varphi}}_j \Vert_2,
\]
where \(g(\widetilde{\pmb{\varphi}}_j) = \theta w_{2,j} \Vert \widetilde{\pmb{\varphi}}_j \Vert_2\).
The subgradient condition at the minimizer \(\widetilde{\pmb{\varphi}}_j\) is
\begin{equation}\label{bicluster-subgradient}
\mathbf{0} \in \widetilde{\pmb{\varphi}}_j - \pmb{z} + \eta \theta w_{2,j} \pmb{w},
\end{equation}
where \(\pmb{w}\) is a subgradient of \(\Vert \widetilde{\pmb{\varphi}}_j \Vert_2\):
\[
\pmb{w} =
\begin{cases}
\frac{\widetilde{\pmb{\varphi}}_j}{\Vert \widetilde{\pmb{\varphi}}_j \Vert_2} & \text{if } \widetilde{\pmb{\varphi}}_j \neq \mathbf{0}, \\
\in \{ \pmb{v} : \Vert \pmb{v} \Vert_2 \leq 1 \} & \text{if } \widetilde{\pmb{\varphi}}_j = \mathbf{0}.
\end{cases}
\]
The condition \(\widetilde{\pmb{\varphi}}_j = \mathbf{0}\) holds if \(\Vert \pmb{z} \Vert_2 \leq \eta \theta w_{2,j}\). Otherwise, for \(\widetilde{\pmb{\varphi}}_j \neq \mathbf{0}\),
\begin{equation}\label{bicluster-subgradient_no0}
\pmb{z} = \widetilde{\pmb{\varphi}}_j \left( 1 + \frac{\eta \theta w_{2,j}}{\Vert \widetilde{\pmb{\varphi}}_j \Vert_2} \right).
\end{equation}
Taking the \(\ell_2\)-norm on both sides yields
\[
\Vert \widetilde{\pmb{\varphi}}_j \Vert_2 = \Vert \pmb{z} \Vert_2 - \eta \theta w_{2,j}.
\]
Substituting back into \eqref{bicluster-subgradient_no0} gives
\[
\widetilde{\pmb{\varphi}}_j = \left( 1 - \frac{\eta \theta w_{2,j}}{\Vert \pmb{z} \Vert_2} \right) \pmb{z}.
\]
Combining both cases completes the proof.
\end{proof}

With the proximal updates for both Tri-SfSVD and Bi-SfSVD established above, Algorithm~\ref{alg:fista_sga_lasso} summarizes the resulting modified FISTA update with backtracking. Step 2 displays the triclustering update and its biclustering special case separately.

\LinesNotNumbered
\begin{algorithm}[H]
\caption{FISTA with Backtracking}
\label{alg:fista_sga_lasso}
\textbf{Initialization:} Set $\widetilde{\pmb{\varphi}}_j^{(0)} = \mathbf{v}^{(1)} = \mathbf{0}$ and $t^{(1)} = 1$.
Given $L > 0$, $\tau > 1$, and tolerance $\varepsilon$.
\medskip
\textbf{Repeat until convergence:}
\begin{enumerate}
    \item Set $\eta = 1/L$, and compute:
    \[
    \mathbf{z}^{(k)} = \mathbf{v}^{(k)} - \eta \nabla f(\mathbf{v}^{(k)}).
    \]
    \item Compute the proximal update using one of the following two cases:
    \[
    \begin{array}{@{}l@{\quad}l@{}}
    \text{Tri-SfSVD:} &
    \widetilde{\pmb{\varphi}}_j^{(k)}
    =
    \left(1 - \frac{\eta \theta w_{2,j}}
    {\Vert S(\mathbf{z}^{(k)}, \eta \lambda_j \pmb{w}_{3,j}) \Vert_2}\right)_+
    S(\mathbf{z}^{(k)}, \eta \lambda_j \pmb{w}_{3,j}),\\[1.5ex]
    \text{Bi-SfSVD }(\lambda_j=0): &
    \widetilde{\pmb{\varphi}}_j^{(k)}
    =
    \left(1 - \frac{\eta \theta w_{2,j}}
    {\Vert \mathbf{z}^{(k)} \Vert_2}\right)_+ \mathbf{z}^{(k)} .
    \end{array}
    \]
    For the Tri-SfSVD update, set \(\widetilde{\pmb{\varphi}}_j^{(k)}=\pmb{0}\) when
    \(S(\mathbf{z}^{(k)}, \eta \lambda_j \pmb{w}_{3,j})=\pmb{0}\).
    \item If $f(\widetilde{\pmb{\varphi}}_j^{(k)}) > f(\mathbf{v}^{(k)}) + \nabla f(\mathbf{v}^{(k)})^\top (\widetilde{\pmb{\varphi}}_j^{(k)}-\mathbf{v}^{(k)}) + \frac{L}{2}\Vert\widetilde{\pmb{\varphi}}_j^{(k)} - \mathbf{v}^{(k)}\Vert_2^2$, then set $L \leftarrow \tau L$ and repeat from step 1.
   
    \item Momentum Update:
    \[
    \mathbf{v}^{(k+1)} = \widetilde{\pmb{\varphi}}_j^{(k)} + \frac{t^{(k)} - 1}{t^{(k+1)}}(\widetilde{\pmb{\varphi}}_j^{(k)} - \widetilde{\pmb{\varphi}}_j^{(k-1)}), \text{ where } t^{(k+1)} = \frac{1 + \sqrt{1 + 4(t^{(k)})^2}}{2}.
    \]
    \item Stop if $\Vert\widetilde{\pmb{\varphi}}_j^{(k)} - \widetilde{\pmb{\varphi}}_j^{(k-1)}\Vert_2 \le \varepsilon$.
\end{enumerate}
\textbf{Output: } \(\widetilde{\pmb{\varphi}}_j\).
\end{algorithm}

\newpage
To connect the variable-specific FISTA update with the full estimation scheme, Algorithm~\ref{alg:sfpca_rank1} records the basic alternating procedure used when the tuning parameters are held fixed. This fixed-tuning version serves as the computational template for the conditional one-dimensional tuning searches described in Section~\ref{Tuning Selection}; the complete EBIC-based procedure is summarized later in Algorithm~\ref{alg:complete_supp}.

\begin{algorithm}[H]
\caption{Iterative method without tuning selection}
\label{alg:sfpca_rank1}

\textbf{1.~Initialize} ${\pmb{u}_i}, \{\pmb{\varphi}_j\}_{j=1}^{p}$

\vspace{1em}
\textbf{2.~Repeat until convergence:}

\begin{enumerate}
  \item[(a)] \textbf{$\widetilde{\pmb{u}}$-subproblem}:
    \[
      \widetilde u_i =
      \frac{
      \operatorname{sign}\left( {\pmb{\varphi}^{*}_{i\cdot}}^\top \pmb{y}_i \right)
      }{
      {\pmb{\varphi}^{*}_{i\cdot}}^{\top} \pmb{\varphi}^{*}_{i\cdot}
      }
      \left(
      \left| {\pmb{\varphi}^{*}_{i\cdot}}^\top \pmb{y}_i \right|
      - \frac{1}{2}\gamma w_{1,i}
      \right)_+.
    \]
    Then normalize
    \[
      \pmb u\leftarrow\widetilde{\pmb u}/\Vert\widetilde{\pmb u}\Vert_2.
    \]

  \item[(b)] \textbf{$\widetilde{\pmb{\varphi}}_j$-subproblem}:
    \[
      \text{For each variable } j,\text{ apply Algorithm}~\ref{alg:fista_sga_lasso}
      \text{ to obtain } \widetilde{\pmb{\varphi}}_j.
    \]
    \[
      \text{Concatenate } \widetilde{\pmb{\varphi}}=(\widetilde{\pmb{\varphi}}_1^\top,\ldots,\widetilde{\pmb{\varphi}}_p^\top)^\top,\quad
      \pmb{\varphi}\leftarrow\widetilde{\pmb{\varphi}}/\Vert\widetilde{\pmb{\varphi}}\Vert_2.
    \]
\end{enumerate}

\vspace{1em}
\textbf{3.~Final scale:}
\[
{s}
\leftarrow
\frac{
\sum_{i,j,k}
y_{i,j,k}u_i\varphi_j(t_{i,j,k})
}{
\sum_{i,j,k}
u_i^2\varphi_j(t_{i,j,k})^2
}.
\]
\textbf{Return } \({s},\ \pmb u,\ \{\pmb\varphi_j\}_{j=1}^p\).
\end{algorithm}

\clearpage
\section{More on Methods}

\subsection{Proofs}
\label{supp_proofs}
\begin{proof} [Proof of Lemma~\ref{lem:impl_bi_explicit}]
Under the scaled rank-one formulation in
\eqref{multivariate functional SVD}, the rank-one operator is
\(s(\pmb u\otimes\pmb\phi)\), where \(s\ge0\),
\(\Vert\pmb u\Vert_2=1\), and \(\Vert\pmb\phi\Vert_{\mathbb H}=1\). For
\(\pmb a=(a_1,\ldots,a_n)^\top\in\mathbb R^n\),
\[
s(\pmb u\otimes\pmb\phi)(\pmb a)
=
s\langle \pmb a,\pmb u\rangle_{\mathbb R^n}\pmb\phi
=
\sum_{i=1}^n a_i s u_i\,\pmb\phi .
\]
Thus \(s(\pmb u\otimes\pmb\phi)\in\mathbb F^{p\times n}\) is represented by
\([s u_i\pmb\phi]_{i=1}^n\). Therefore,
\begin{align*}
\left\Vert
\mathbfcal X-s(\pmb u\otimes\pmb\phi)
\right\Vert_{\mathbb{F}}^2
&=
\sum_{i=1}^n
\left\langle
\pmb x_i-s u_i\pmb\phi,\,
\pmb x_i-s u_i\pmb\phi
\right\rangle_{\mathbb{H}} \\
&=
\sum_{j=1}^p\sum_{i=1}^n
\left\Vert
x_{ij}-s u_i\phi_j
\right\Vert_{H_j}^{2}.
\end{align*}

In practice, the functional trajectories \(x_{ij}(\cdot)\) are observed on
discrete grids \(\{t_{j1},\ldots,t_{j d_j}\}\subset\mathcal T_j\). Let
\(\pmb Y_j\in\mathbb R^{n\times d_j}\) collect the sampled values
\(x_{ij}(t_{j\ell})\), and let
\[
\pmb\varphi_j
=
\left(
\phi_j(t_{j1}),\ldots,\phi_j(t_{j d_j})
\right)^\top .
\]
Then, up to the usual grid-weighting convention,
\[
\sum_{i=1}^n
\left\Vert
x_{ij}-s u_i\phi_j
\right\Vert_{H_j}^{2}
\approx
\left\lVert
\pmb Y_j-s\pmb u\pmb\varphi_j^\top
\right\rVert_F^2 .
\]

To enforce smoothness of the loading functions, we penalize the roughness of
the scaled loading functions \(s\phi_j\). In the discretized representation,
this yields
\[
\sum_{j=1}^p
\alpha_j
(s\pmb\varphi_j)^\top
\pmb\Omega_j
(s\pmb\varphi_j),
\]
where \(\pmb\Omega_j\) is a non-negative definite roughness penalty matrix and
\(\alpha_j\ge0\) controls the smoothness strength for variable \(j\). Combining
the discretized reconstruction loss and the roughness penalty, together with
the constraints \(s\ge0\), \(\Vert\pmb u\Vert_2=1\), and
\(\Vert\pmb\varphi\Vert_2=1\), gives \eqref{eq:bi_impl_explicit}.
\end{proof}

\subsection{Tuning Parameter Selection}
\label{Tuning Selection}

The tuning parameters are selected by minimizing an extended Bayesian information criterion (EBIC) \cite{chen2008ebic}. The EBIC extends the classical BIC with an additional penalty term on model complexity, weighted by a parameter $\sigma \in [0,1]$. When $\sigma = 0$, the EBIC reduces to the standard BIC. The resulting selection rules are incorporated into the complete procedure summarized below in Algorithm~\ref{alg:complete_supp}.

In the first step of our method, with the normalized loading directions $\{\pmb{\varphi}_j\}_{j=1}^{p}$ fixed, the EBIC is defined as 
\begin{equation} \label{ebic gamma}
\mathrm{EBIC(\gamma)}
= N \log\left(\frac{\mathrm{RSS}}{N}\right)
+ \mathrm{df}(\gamma) \log (N)
+ 2 \sigma\, \mathrm{df}(\gamma) \log n,
\end{equation}
where $N$ denotes the number of observed data points in the dataset and RSS is the residual sum of squares between the observed data and their corresponding fitted values.
The effective degrees of freedom for the subject-score update is taken as the number of active scaled subject scores,
\[
\mathrm{df}(\gamma)
=
\sum_{i=1}^{n}\mathbb{I}(\widehat{\widetilde u}_i\neq 0),
\]
where \(\widehat{\widetilde{\pmb u}}\) is obtained from the \(\widetilde{\pmb u}\)-update in Algorithm~\ref{alg:sfpca_rank1}.

In the second step of our method, with the normalized $\pmb{u}$ fixed, three types of tuning parameters,  $\{\alpha_j\}_{j=1}^{p}$, $\theta$, and $\{\lambda_j\}_{j=1}^{p}$, are introduced simultaneously for the scaled-loading subproblems. A joint optimization over a multi-dimensional tuning grid would be computationally intensive. To alleviate this burden, we adopt a conditional, sequential tuning strategy that updates the three types of tuning parameters one at a time. Specifically, when tuning $\{\alpha_j\}_{j=1}^{p}$, $\theta$, or $\{\lambda_j\}_{j=1}^{p}$, we hold the other two types of tuning parameters fixed at their current values and optimize the corresponding conditional criterion. We initialize $\{\alpha_j\}_{j=1}^{p}$, $\theta$, and $\{\lambda_j\}_{j=1}^{p}$ at the midpoint of their respective candidate ranges. After selecting an updated value for the tuning parameter currently under consideration, we carry it forward as the fixed value in subsequent tuning steps. We repeat this procedure until $\{\alpha_j\}_{j=1}^{p}$, $\theta$, and $\{\lambda_j\}_{j=1}^{p}$ have each been updated. Since the scaled \(\pmb{\varphi}\)-update decouples over variables, we select the smoothing parameter $\alpha_j$ independently for each variable by minimizing
\begin{equation} \label{ebic alpha}
\mathrm{EBIC(\alpha_j)}
= N_j \log\left(\frac{\mathrm{RSS}_j}{N_j}\right)
+ \mathrm{df}(\alpha_j) \log (N_j)
+ 2 \sigma\, \mathrm{df}(\alpha_j) \log d_j,
\end{equation}
where  $N_j$  denotes the number of observed data points for variable $j$, and $\mathrm{RSS}_j$ is the residual sum of squares between the observed values and their corresponding fitted values for that variable. Here 
\[\mathrm{df}(\alpha_j) = \text{tr}\left( \pmb{U}_{A_j} \left( \pmb{U}_{A_j}^{\top} \pmb{U}_{A_j} + \alpha_j \pmb{\Omega}_j \right)^{-1} \pmb{U}_{A_j}^{\top} \right),\] where $A_j$ denotes the active set of nonzero coefficients in the scaled loading vector \(\widetilde{\pmb{\varphi}}_j\), and $\pmb{U}_{A_j}$ is the corresponding submatrix of $\pmb{U}_j$ that keeps only the columns indexed by  $A_j$. This one-dimensional search is nested within Algorithm~\ref{alg:sfpca_rank1}, while keeping $\theta$ and $\{\lambda_j\}_{j=1}^{p}$ fixed at values obtained from an initial tuning stage.

Next, given the selected $\{\alpha_j\}_{j=1}^{p}$, we update the sparsity parameters 
$\{\lambda_j\}_{j=1}^{p}$ by minimizing, for each variable $j$,
\begin{equation} \label{ebic lambda}
\mathrm{EBIC}(\lambda_j)
= N_j \log\left(\frac{\mathrm{RSS}_j}{N_j}\right)
+ \mathrm{df}(\lambda_j)\,\log (N_j)
+ 2 \sigma\, \mathrm{df}(\lambda_j)\,\log d_j,
\end{equation}
where $\mathrm{df}(\lambda_j)$ is taken to be the number of nonzero elements in the scaled loading vector \(\widetilde{\pmb{\varphi}}_j\). This one-dimensional search is again nested within Algorithm~\ref{alg:sfpca_rank1} with $\theta$ kept fixed at its value from the initial tuning stage. 

Finally, with $\{\alpha_j\}_{j=1}^{p}$ 
and $\{\lambda_j\}_{j=1}^{p}$ fixed at their selected values, we choose the global tuning 
parameter $\theta$ by minimizing
\begin{equation} \label{ebic theta}
\mathrm{EBIC}(\theta)
= \sum_{j=1}^{p} \left[ N_j \log\left(\frac{\mathrm{RSS}_j}{N_j}\right)
+ \mathrm{df}(\theta)\,\log (N_j)
+ 2 \sigma\, \mathrm{df}(\theta)\,\log d_j\right],
\end{equation} 
where $\mathrm{df}(\theta)$ is given by proposition 1 in \citet{dfsgl} as
\[
\text{df}(\theta) = \sum_{j=1}^p \text{tr}\left( \pmb{U}_{A_j} \left( \pmb{U}_{A_j}^{\top} \pmb{U}_{A_j} + \theta \pmb{K}_j \right)^{-1} \pmb{U}_{A_j}^{\top} \right),
\]
where $\pmb{K}_j = \frac{1}{\Vert\widetilde{\pmb{\varphi}}_{\mathcal{A}_j}\Vert_2}
\left(
  \mathbf{I}
  - \frac{\widetilde{\pmb{\varphi}}_{\mathcal{A}_j}\,\widetilde{\pmb{\varphi}}_{\mathcal{A}_j}^\top}
         {\Vert\widetilde{\pmb{\varphi}}_{\mathcal{A}_j}\Vert_2^2}
\right)$ and \(\widetilde{\pmb{\varphi}}_{\mathcal{A}_j}\) is the subvector of the scaled loading vector \(\widetilde{\pmb{\varphi}}_j\) containing only the coefficients indexed by $\mathcal{A}_j$.

\subsection{Bi-SfSVD/Tri-SfSVD with EBIC-based tuning}
\label{supp_complete_algorithm}

Combining the alternating updates with the EBIC criteria above gives the full estimation procedure. In this algorithm, \(\gamma\) and \(\theta\) are selected as global tuning parameters, whereas \(\alpha_j\) and \(\lambda_j\) are selected within the variable-specific scaled-loading subproblems. This organization avoids an exhaustive joint search over all tuning parameters while preserving the update structure described in Section~\ref{algorithm}.

\begin{algorithm}[H] \LinesNotNumbered \caption{Bi-SfSVD/Tri-SfSVD with EBIC-based tuning} \label{alg:complete_supp} 
\textbf{Input:} Observed data \(\{\pmb{Y}_j\}_{j=1}^p\); candidate grids for \(\gamma\), \(\{\alpha_j\}_{j=1}^p\), \(\theta\), and \(\{\lambda_j\}_{j=1}^p\).\\ 
\textbf{Initialize:} \(\pmb{u}\) and \(\{\pmb{\varphi}_j\}_{j=1}^{p}\).\\ 
\textbf{Repeat until convergence:}\\ \quad 
\textbf{(a) \(\widetilde{\pmb{u}}\)-update:} update \(\widetilde{\pmb u}\) componentwise using the soft-thresholding solution in
\eqref{u solution}, and select $\gamma$ from the candidate grid by minimizing
$\mathrm{EBIC}(\gamma)$ defined in \eqref{ebic gamma}. Set
\[
\pmb u\leftarrow\widetilde{\pmb u}/\Vert\widetilde{\pmb u}\Vert_2.
\]
\quad 
\textbf{(b) \(\widetilde{\pmb{\varphi}}\)-update:} for $j=1,\ldots,p$, update \(\widetilde{\pmb{\varphi}}_j\) independently using Algorithm~\ref{alg:fista_sga_lasso} in Section~\ref{algorithm}. The tuning parameters $\{\alpha_j\}_{j=1}^{p}$, $\{\lambda_j\}_{j=1}^{p}$, and $\theta$ are selected over their respective candidate grids by minimizing the EBIC criteria defined in \eqref{ebic alpha}, \eqref{ebic lambda}, and \eqref{ebic theta}, respectively. Concatenate \(\widetilde{\pmb{\varphi}}=(\widetilde{\pmb{\varphi}}_1^\top,\ldots,\widetilde{\pmb{\varphi}}_p^\top)^\top\) and set
\[
\pmb\varphi\leftarrow\widetilde{\pmb\varphi}/\Vert\widetilde{\pmb\varphi}\Vert_2.
\]
\textbf{Final scale:} after convergence, compute
\[
{s}\leftarrow
\frac{
\sum_{i,j,k}
y_{i,j,k} u_i\varphi_j(t_{i,j,k})
}{
\sum_{i,j,k}
 u_i^2\varphi_j(t_{i,j,k})^2
}.
\]
\textbf{Output and deflation:} return \(({s},{\pmb u},\{{\pmb\varphi}_j\}_{j=1}^p)\); subtract the fitted rank-one component \({s} u_i\varphi_j(t_{i,j,k})\) from the observed data and repeat the above steps to extract subsequent sparse layers.
\end{algorithm}

\subsection{Selection of the Number of Clusters $K$}
\label{Selection of the Number of Clusters}
In this section, we discuss three strategies to guide the selection of the number of clusters, denoted by $K$.

First, we select \(K\) using a sequential stopping rule. We estimate sparse layers one at a time and stop when the newly estimated layer is empty, meaning that either the estimated sample scores \({\pmb{u}}^{(K)}\) or the estimated feature loadings \({\pmb{\varphi}}^{(K)}\) are all zero.

Second, we use the singular-value spectrum. Since each cluster corresponds to a rank-one component, we choose \(K\) at the elbow of the cumulative explained variance (CEV) curve,
\[
\mathrm{CEV}(K)=\frac{\sum_{r=1}^{k} d_r^{2}}{\sum_{r=1}^{K} d_r^{2}},
\]
and take the smallest \(K\) beyond which the incremental explained variance becomes negligible.

Third, we employ the BIC to select \(K\), defined as
\[
\text{BIC}(K)
=
\log\left(\tfrac{1}{N}\,\lVert \mathbf{X} -  \hat{\mathbf{X}}_K\rVert_2^2\right)
+
\frac{\log(N)}{N}\,\sum_{k=1}^{K}\mathrm{df}_k,
\]

where \(\hat{\mathbf{X}}_{K}\) denotes the rank-\(K\) reconstruction of \(\mathbf{X}\), and \(\mathrm{df}_k\) is the effective degrees of freedom for the \(k\)th layer, approximated by the total number of nonzero entries in \({\pmb{u}}^{(k)}\) and \({\pmb{\varphi}}^{(k)}\). In our simulation studies and real data analyses, we adopt the cumulative explained variance (CEV) strategy for selecting \(K\). For comparison, we also evaluate the performance of the CEV and BIC strategies in the simulation settings, as shown in Figures~\ref{ve:overall} and~\ref{bic:overall} in Section~\ref{Performance Evaluation}. In practice, users may choose the strategy according to the goal of the analysis: the sequential stopping rule is useful for identifying sparse and interpretable layers until no additional nonempty structure is detected; the CEV strategy is appropriate for retaining the dominant low-rank structure; and the BIC strategy provides a model-selection-oriented alternative that balances reconstruction accuracy and model complexity. 

\subsection{Methods Comparison}
\label{Methods Comparison}
Based on the simulation results presented above, we summarize the capabilities of each method across the tasks considered in Table~\ref{tab:task_yesno}, highlighting their relative strengths and limitations. Specifically, the first five columns indicate whether a method is able to produce sample clusters, variable clusters, subregion clusters, biclusters, and triclusters. The “Missing Data” column indicates whether a method can directly handle sparse functional data with irregularly missing observations. “In-Cluster Shape Diversity” reflects whether a method can accommodate heterogeneous trajectory shapes within the same cluster rather than enforcing identical within-cluster shapes. Finally, “Parameter Tuning” indicates whether the method provides an automatic or data-driven mechanism to select tuning parameters.

\setlength{\tabcolsep}{3pt}      
\newcolumntype{C}[1]{>{\centering\arraybackslash}p{#1}} 

\begin{table}[h!]
  \centering
  {
  \footnotesize
  \begin{tabularx}{\linewidth}{@{}l
      *{5}{C{1.47cm}}   
      C{2.4cm}        
      C{2.3cm}         
      C{1.2cm}@{}}     
    \toprule
    Method & Sample Cluster & Variable Cluster & Subregion Cluster &
    Bicluster & Tricluster & Sparse/Irregular Data &
    In-Cluster Heterogeneity & Parameter Tuning\\
    \midrule
    MFPCA        & \cmark & \xmark & \xmark & \xmark & \xmark & \cmark & \xmark & \cmark\\
    FPCA-BC      & \cmark & \cmark & \xmark & \cmark & \xmark & \cmark & \xmark & \cmark\\
    funCC        & \cmark & \cmark & \xmark & \cmark & \xmark & \xmark & \xmark & \cmark\\
    funLBM       & \cmark & \cmark & \xmark & \cmark & \xmark & \xmark & \xmark & \cmark\\
    Bi/Tri-SfSVD & \cmark & \cmark & \cmark & \cmark & \cmark & \cmark & \cmark & \cmark\\
    TriCluster   & \cmark & \cmark & \cmark & \cmark & \cmark & \xmark & \xmark & \xmark\\
    \bottomrule
  \end{tabularx}
  \caption{Comparison of the methods considered in the simulation and real data analyses. The table summarizes the clustering tasks each method is able to address, including sample clustering, variable clustering, subregion-level clustering, biclustering, and triclustering, as well as key data features relevant to our applications. Specifically, sparse/irregular data indicate whether the method can be applied to sparsely and irregularly observed functional data without requiring a complete functional representation for every subject--variable pair, and in-cluster heterogeneity indicates whether the method allows different temporal patterns within the same identified cluster structure. A checkmark indicates that the method supports the corresponding capability, whereas a cross indicates that it does not.}
  \label{tab:task_yesno}
  }
\end{table}

\newpage
\section{EEG Data Analysis} \label{real_data_eeg}

In this section, we analyze the EEG dataset obtained from \cite{Begleiter1995EEGDatabase}, with the scientific objective of identifying subject subgroups, channel groups, and localized temporal patterns through longitudinal EEG signals. The dataset contains 122 subjects, including 77 individuals with alcoholism and 45 control subjects. For each subject, EEG signals were recorded while viewing a picture for one second at a sampling rate of 256 Hz from 64 channels, resulting in 256 time points per channel. Among these channels, 61 correspond to scalp electrodes, two channels labeled X and Y record eye-movement activity, and the remaining channel labeled ND serves as the reference electrode signal.

As in the IBD analysis, each channel-specific signal was standardized before applying our method. To further illustrate the ability of our approach to handle irregularly and sparsely observed functional data, we additionally imposed a 50\% missingness rate by randomly masking half of the observed time points. Figure~\ref{eeg:sample} displays the trajectories across all 64 channels for one randomly selected alcoholic subject and one randomly selected control subject. Similar as IBD data, before applying
our method, each retained feature was standardized by subtracting its mean and dividing by
its standard deviation across all observed values.

\subsection{Interpretation of Feature Clusters Identified}

For the EEG data, we selected the number of sparse layers using the first stopping rule described in Section~\ref{Selection of the Number of Clusters} of the Supplementary Material. Specifically, we estimated sparse layers sequentially and stopped when the newly estimated layer became empty, that is, when either the estimated sample scores or the estimated feature loadings were all zero. Under this rule, the procedure selected \(K=3\) sparse layers. Under this choice, our method extracted three sparse layers spanning all 122 subjects and selected a total of 51 electrodes. 

Figure~\ref{fig:eeg_all_minipage} shows the spatial layout of the EEG channels and highlights the retained electrodes in each feature cluster after applying the near-zero loading filter. Specifically, the 61 scalp electrodes are plotted within the head diagram according to their electrode locations, whereas the remaining three channels are placed outside the diagram. In each panel, retained electrodes are shown in blue, whereas electrodes not retained for the corresponding feature cluster are shown in gray.

The identified feature clusters exhibit clear spatial structure. Feature cluster \(\pmb{F}_1\) mainly involves electrodes located on the front of the head, indicating a frontal pattern. In contrast, feature clusters \(\pmb{F}_2\) and \(\pmb{F}_3\) both mainly involve electrodes located on the back of the head, indicating posterior patterns. Unlike the IBD results, which did not reveal meaningful localization in time subregions, the EEG analysis additionally identifies localized time-domain structure within the selected electrodes. In particular, although \(\pmb{F}_2\) and \(\pmb{F}_3\) involve nearly the same set of electrodes, they differ in the time subregions retained by the method: \(\pmb{F}_2\) primarily selects the middle of the signal domain, whereas \(\pmb{F}_3\) primarily selects the later signal domain. This pattern indicates that the EEG data yield triclusters defined jointly over samples, electrodes, and their time subregions. Figure~\ref{fig:pc curve} provides the details on these temporal patterns by displaying the estimated singular functions for all 64 channels across the three sparse layers. Each sparse layer is represented by a different color. For a given electrode, a curve segment that departs from the zero baseline indicates that the corresponding time subregion of that electrode signal is selected in the corresponding sparse layer, whereas a curve that remains on the zero baseline throughout the entire domain indicates that the electrode is not selected in that layer.

\begin{figure}[t]
  \centering
  \begin{minipage}{0.32\textwidth}
    \centering
    \includegraphics[width=\linewidth]{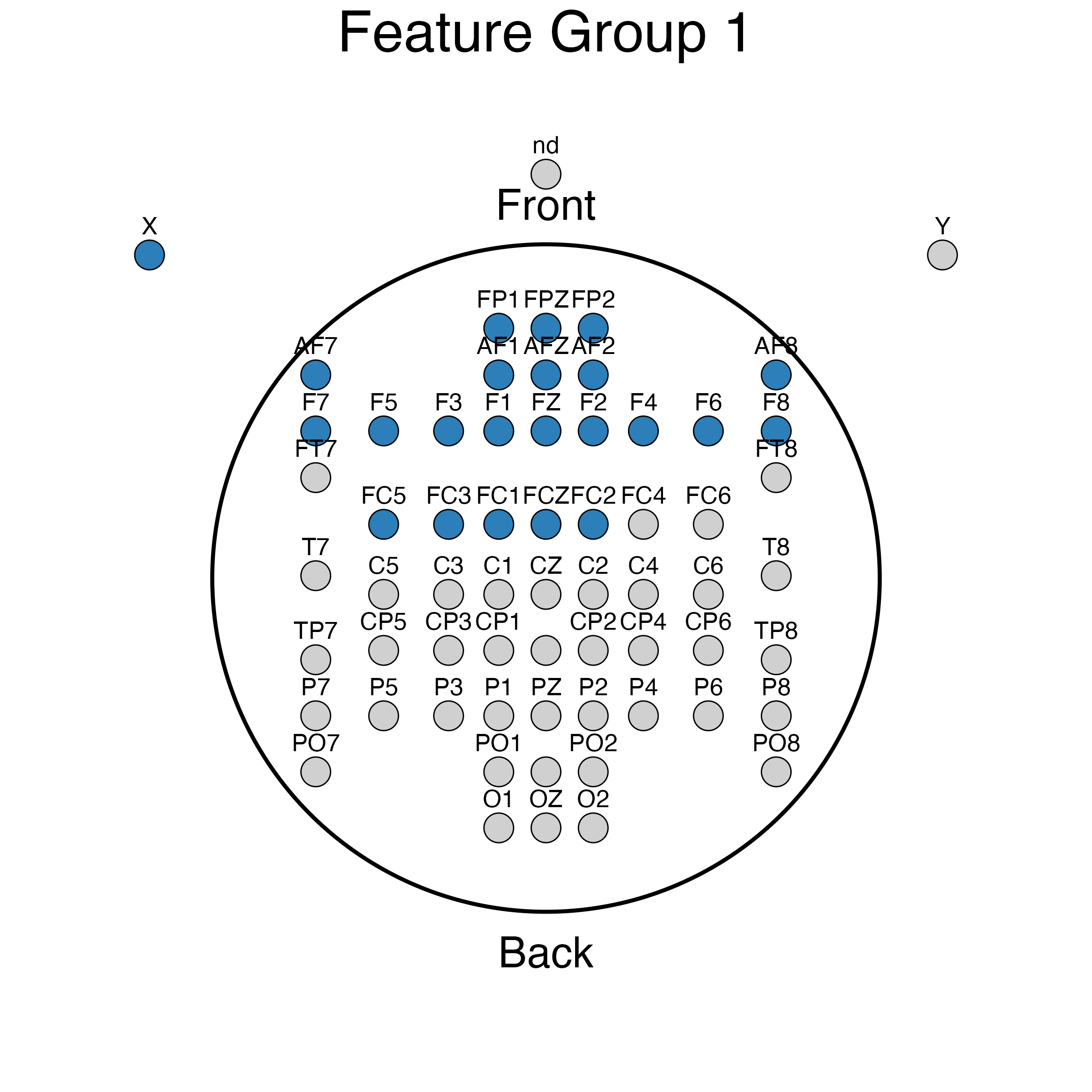}
  \end{minipage}\hfill
  \begin{minipage}{0.32\textwidth}
    \centering
    \includegraphics[width=\linewidth]{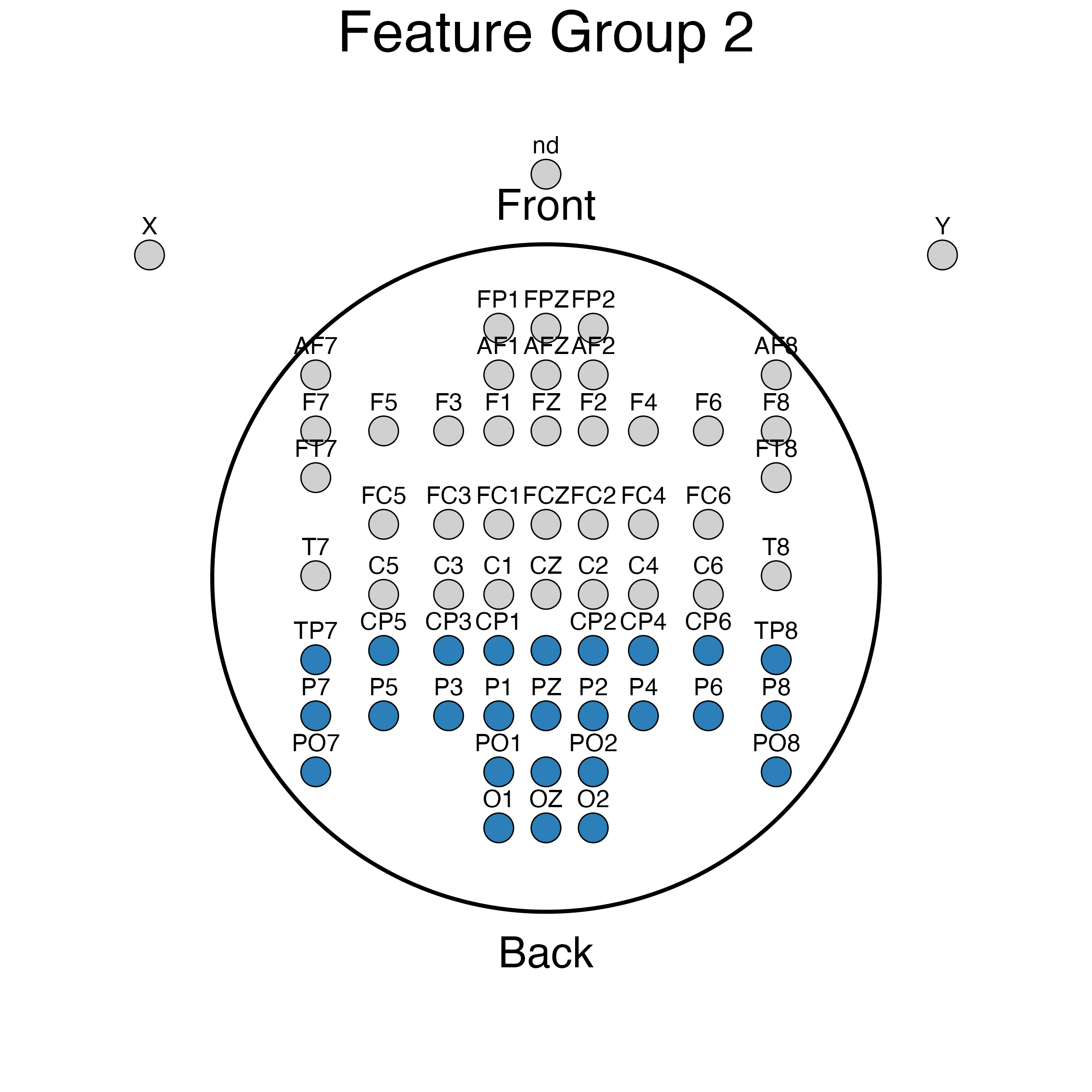}
  \end{minipage}\hfill
  \begin{minipage}{0.32\textwidth}
    \centering
    \includegraphics[width=\linewidth]{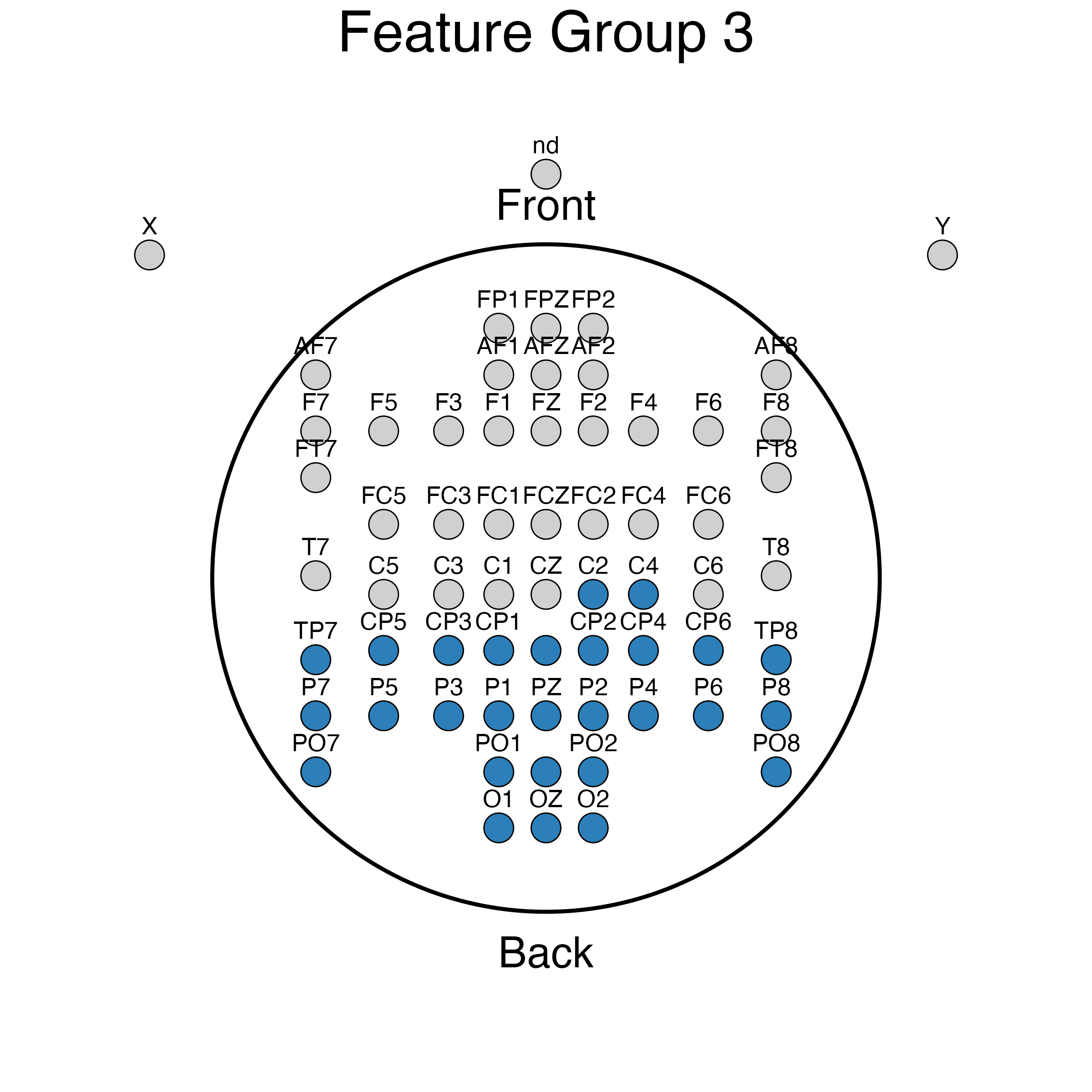}
  \end{minipage}

  \caption{Scalp maps of the EEG electrodes retained for each feature cluster after applying the near-zero loading filter. Retained electrodes are highlighted in blue, while non-retained electrodes are shown in gray. The eye-movement electrodes (X and Y) and the reference electrode ND are shown outside the scalp.}
  \label{fig:eeg_all_minipage}
\end{figure}

\subsection{Clinical Interpretation of Sample Clusters Identified}

In this dataset, we encountered the same issue as in the IBD analysis: the BIC criterion yielded relatively low sparsity in sample selection, making the estimated sample structure less directly interpretable. We therefore adopted the same post-processing procedure as in the IBD analysis. Specifically, we applied \(k\)-means clustering to the rows of the left singular matrix to partition the 122 subjects into three sample clusters, denoted by \(\{\pmb{S}_j\}_{j=1}^3\), and then linked the resulting sample clusters to the feature clusters \(\{\pmb{F}_k\}_{k=1}^3\) using the same WGCNA-based association procedure. The bootstrap results are summarized in Table~\ref{tab:assoc-bootmean-3x3-transposed}, where each entry reports the bootstrap mean and standard deviation of \(r_{jk}\), obtained by resampling subjects with replacement and recomputing the association scores across bootstrap replicates.

\begin{table}[h]
\centering
\small
\begin{tabular}{l *{3}{>{\raggedleft\arraybackslash}p{1.9cm}}}
\hline
 & $\pmb{F}_1$ & $\pmb{F}_2$ & $\pmb{F}_3$ \\
\hline
$\pmb{S}_1$ & 0.29 (0.08) & 0.49 (0.07) & \pmb{0.71 (0.06)} \\
$\pmb{S}_2$ & \pmb{0.67 (0.05)} & 0.33 (0.05) & 0.38 (0.07) \\
$\pmb{S}_3$ & 0.55 (0.06) & \pmb{0.62 (0.06)} & 0.29 (0.06) \\
\hline
\end{tabular}
\caption{Bootstrap mean (SD) of association scores \(r_{jk}\) between sample clusters \(\{\pmb{S}_j\}_{j=1}^{3}\) and feature clusters \(\{\pmb{F}_k\}_{k=1}^{3}\) in the EEG dataset. Bootstrap replicates were obtained by resampling subjects with replacement and recomputing \(r_{jk}\) in each replicate.}
\label{tab:assoc-bootmean-3x3-transposed}
\end{table}

Similar to the IBD analysis in Section~\ref{Sample Clusters Identified}, we next examined whether the identified sample clusters were associated with the available clinical phenotype. Specifically, we applied Fisher’s exact test to compare the distribution of participant type across the three sample clusters; the results are summarized in Table~\ref{eeg:phenotype}.

\begin{table}[h]
\centering
\small
\setlength{\tabcolsep}{6pt}
\begin{tabular}{l l
                >{\centering\arraybackslash}p{1.6cm}
                >{\centering\arraybackslash}p{1.6cm}
                >{\centering\arraybackslash}p{1.6cm}
                >{\centering\arraybackslash}p{2.2cm}}
\hline
 &  & \multicolumn{3}{c}{Sample clusters (N)} & \multirow{2}{*}{P-value} \\
\cline{3-5}
 &  & 1 (21) & 2 (71) & 3 (30) & \\
\hline
\multirow{2}{*}{Type}
  & Alcoholic  & 15 (71\%) & 52 (73\%) & 10 (33\%)  & \multirow{2}{*}{$<0.001$} \\
  & Control    & 6 (29\%)  & 19 (27\%) & 20 (67\%)  & \\
\hline
\end{tabular}
\caption{Association between sample clusters and participant type (Alcoholic vs.\ Control) in the EEG dataset. Entries are counts (percentages) within each sample cluster; the \(P\)-value is from Fisher’s exact test.}
\label{eeg:phenotype}
\end{table}

As shown in Table~\ref{eeg:phenotype}, the identified sample clusters are significantly associated with participant type (\(P<0.001\)), indicating that the sample structure recovered by our method is clinically meaningful in this application. In particular, \(\pmb{S}_1\) was enriched for alcoholic subjects (71\%), suggesting an alcoholic-dominant subgroup. \(\pmb{S}_2\) was also enriched for alcoholic subjects (73\%), indicating a second alcoholic-dominant subgroup with a distinct EEG pattern. In contrast, \(\pmb{S}_3\) was enriched for control subjects (67\%), corresponding to a control-dominant subgroup.

\subsection{Interpretation of Identified Triclusters}

Table~\ref{tab:assoc-bootmean-3x3-transposed} summarizes the associations between the sample clusters \(\{\pmb{S}_j\}_{j=1}^{3}\) and the electrode groups \(\{\pmb{F}_k\}_{k=1}^{3}\). To obtain an interpretable tricluster representation, we associated each sample cluster with the electrode group showing its strongest correlation. This yielded three dominant sample--electrode-group associations: \((\pmb{S}_1,\pmb{F}_3)\), \((\pmb{S}_2,\pmb{F}_1)\), and \((\pmb{S}_3,\pmb{F}_2)\). Combining these pairings with the time-subregion information identified in the feature loadings gives three triclusters: the alcoholic-enriched cluster \(\pmb{S}_1\) associated with the late subregion of the posterior electrode group \(\pmb{F}_3\), the alcoholic-enriched cluster \(\pmb{S}_2\) associated with the frontal electrode group \(\pmb{F}_1\), and the control-enriched cluster \(\pmb{S}_3\) associated with the early subregion of the posterior electrode group \(\pmb{F}_2\).

Figure~\ref{fig:supp_eeg_S2F1} provides a detailed display of the identified frontal tricluster \((\pmb{S}_2,\pmb{F}_1)\) through the raw sample mean measurements and reconstructed mean EEG trajectories for the retained electrodes in \(\pmb{F}_1\). In this tricluster, \(\pmb{S}_2\), an alcoholic-enriched sample cluster, is paired with the frontal electrode group \(\pmb{F}_1\). The reconstructed trajectories show a coherent frontal EEG pattern across the retained electrodes, suggesting that this subgroup is characterized by frontal electrophysiological variation. This interpretation is broadly consistent with prior alcoholism research, which has implicated frontal lobe dysfunction in alcohol-related brain abnormalities and reported electrophysiological abnormalities during tasks involving frontal inhibitory control in alcohol-dependent individuals \citep{Moselhy2001FrontalAlcoholism,Kamarajan2004FrontalInhibitoryControl}.

Both \(\pmb{S}_1\) and \(\pmb{S}_3\) are linked to posterior electrode groups, but they are distinguished by their time-domain supports rather than by spatial location alone. Specifically, \(\pmb{S}_3\) is associated with \(\pmb{F}_2\), whose selected posterior electrodes are primarily active in an intermediate time window, whereas \(\pmb{S}_1\) is associated with \(\pmb{F}_3\), whose selected posterior electrodes are primarily active in a later time window. The corresponding mean reconstructed trajectories in Figure~\ref{fig:supp_eeg_S3F2} and \ref{fig:supp_eeg_S1F3} reflect this distinction: the \(\pmb{S}_3\)-\(\pmb{F}_2\) block exhibits stronger signal patterns in the middle portion of the signal domain, whereas the \(\pmb{S}_1\)-\(\pmb{F}_3\) block exhibits later posterior signal patterns. This demonstrates that the subgroup structure in the EEG data is defined jointly by subjects, spatial electrode regions, and localized time subregions, highlighting the value of a triclustering perspective beyond analyses based only on sample clusters or spatial electrode groups.

Figure~\ref{fig:pc curve} provides additional details on the estimated singular functions across all 64 channels. For a given electrode, a curve segment that departs from the zero baseline indicates that the corresponding time subregion is selected in that sparse layer, whereas a curve that remains on the zero baseline over the entire domain indicates that the electrode is not selected in that layer.




\begin{figure}[h]
  \centering
  \begin{subfigure}[b]{0.45\textwidth}
    \centering
    \includegraphics[width=\linewidth]{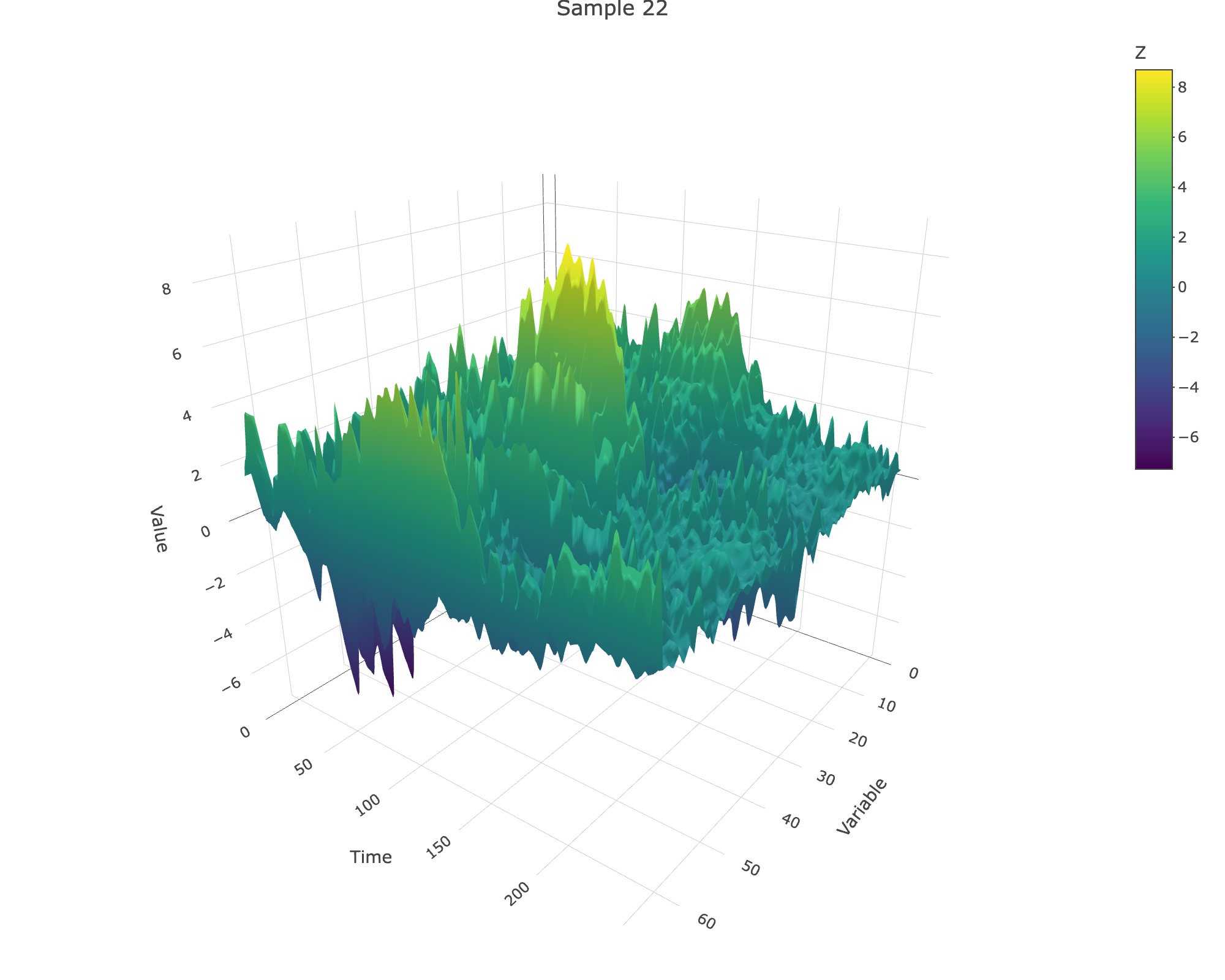}
    \caption{Alcoholic}
    \label{eeg:alcoholic}
  \end{subfigure}
  \hfill
  \begin{subfigure}[b]{0.45\textwidth}
    \centering
    \includegraphics[width=\linewidth]{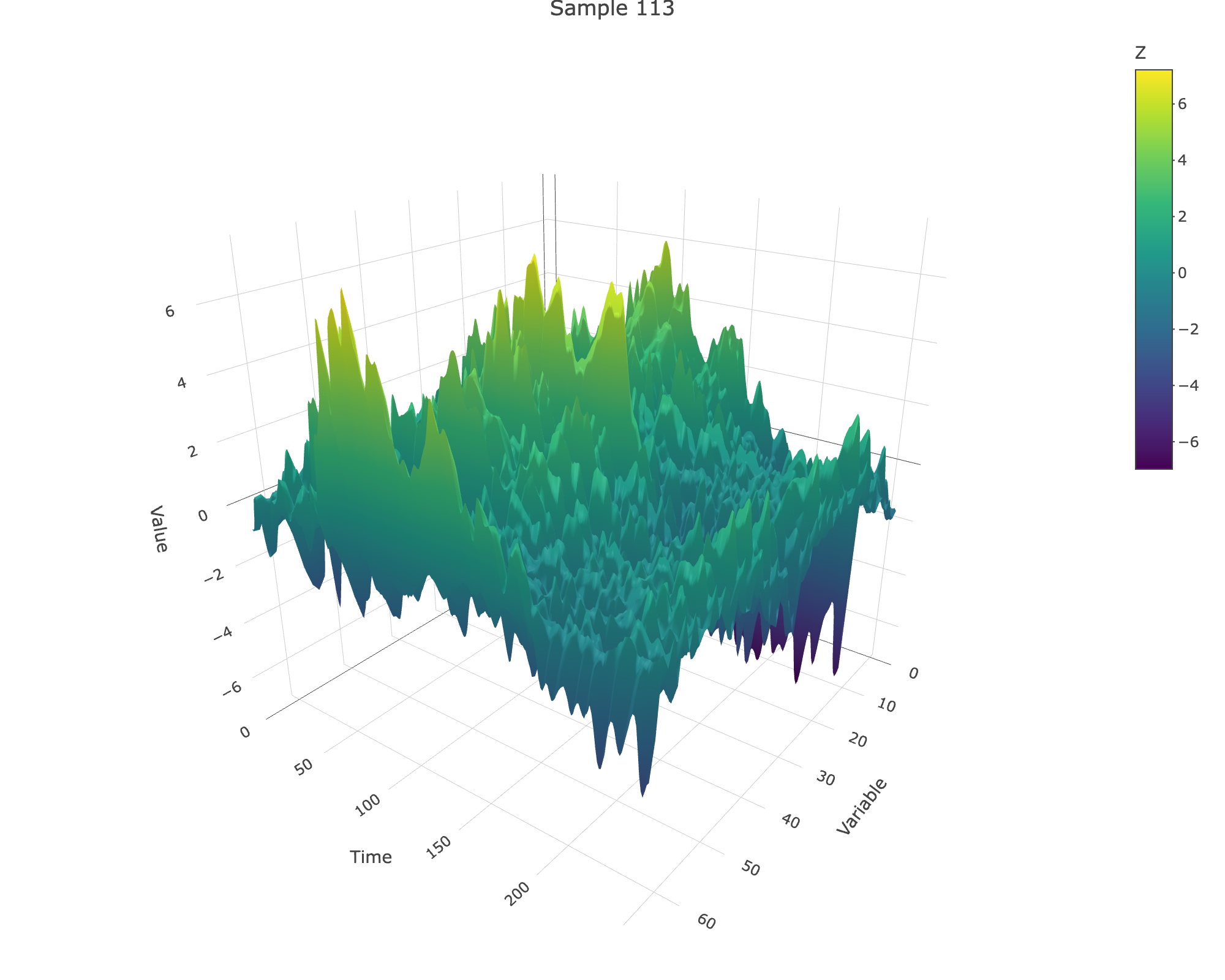}
    \caption{Control}
    \label{eeg:control}
  \end{subfigure}
  \caption{Trajectories across all 64 electrodes for two randomly selected subjects: (a) alcoholic, (b) control.}
  \label{eeg:sample}
\end{figure}

\clearpage

\begin{figure}[p]
    \centering
    \includegraphics[
        width=\textwidth,
        height=0.78\textheight,
        keepaspectratio
    ]{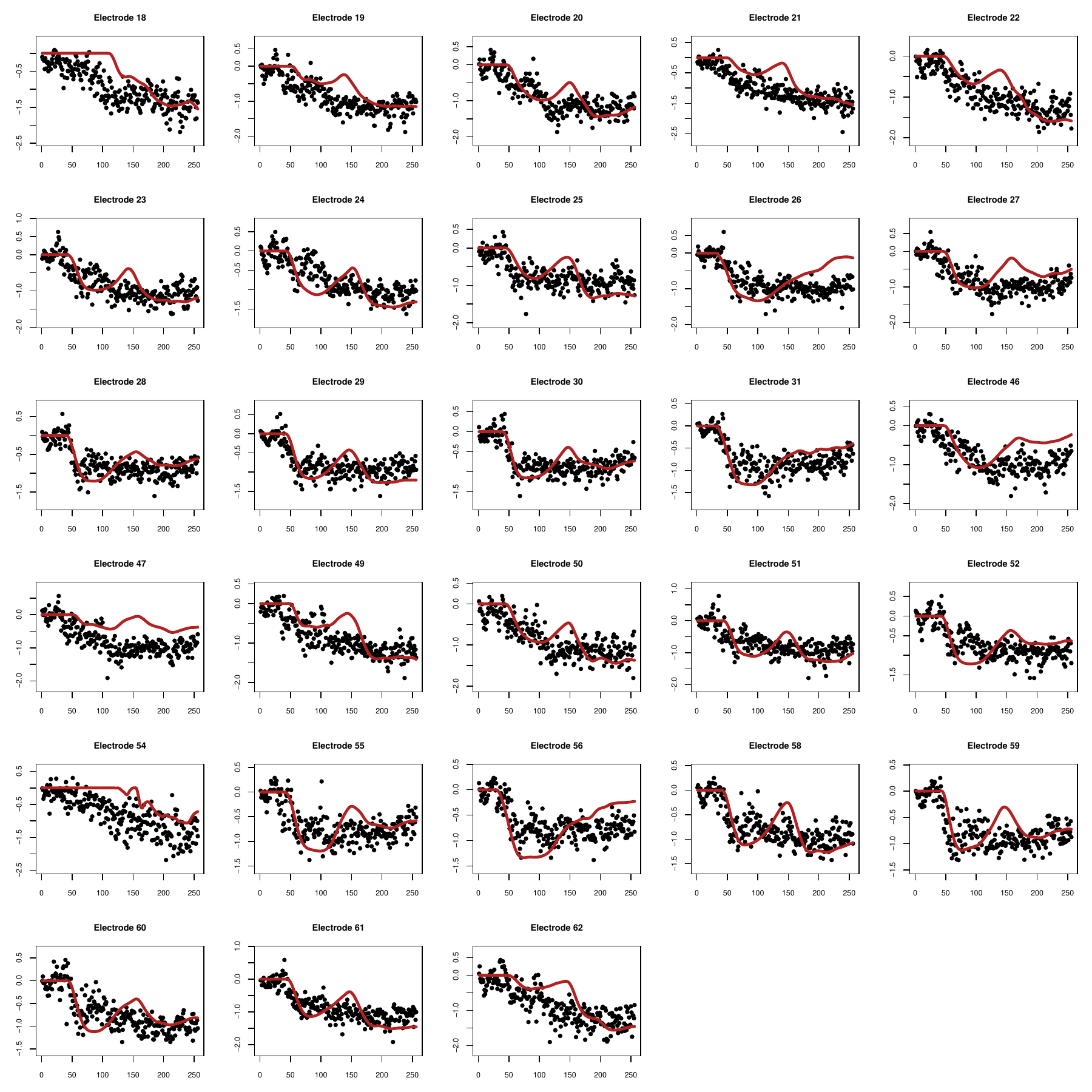}
    \caption{
    Detailed display of all selected electrodes within the identified EEG tricluster \((\pmb{S}_1,\pmb{F}_3)\). Here, \(\pmb{S}_1\) denotes the corresponding sample cluster and \(\pmb{F}_3\) denotes the corresponding electrode group. Each panel corresponds to one selected electrode in \(\pmb{F}_3\). Black points represent raw sample mean measurements among samples in \(\pmb{S}_1\), computed after omitting missing observations at each time point. Colored solid curves represent reconstructed sample mean trajectories obtained from the rank-\(K\) reconstruction of our sparse functional SVD model.
    }
    \label{fig:supp_eeg_S1F3}
\end{figure}

\clearpage
\begin{figure}[p]
    \centering
    \includegraphics[
        width=\textwidth,
        height=0.78\textheight,
        keepaspectratio
    ]{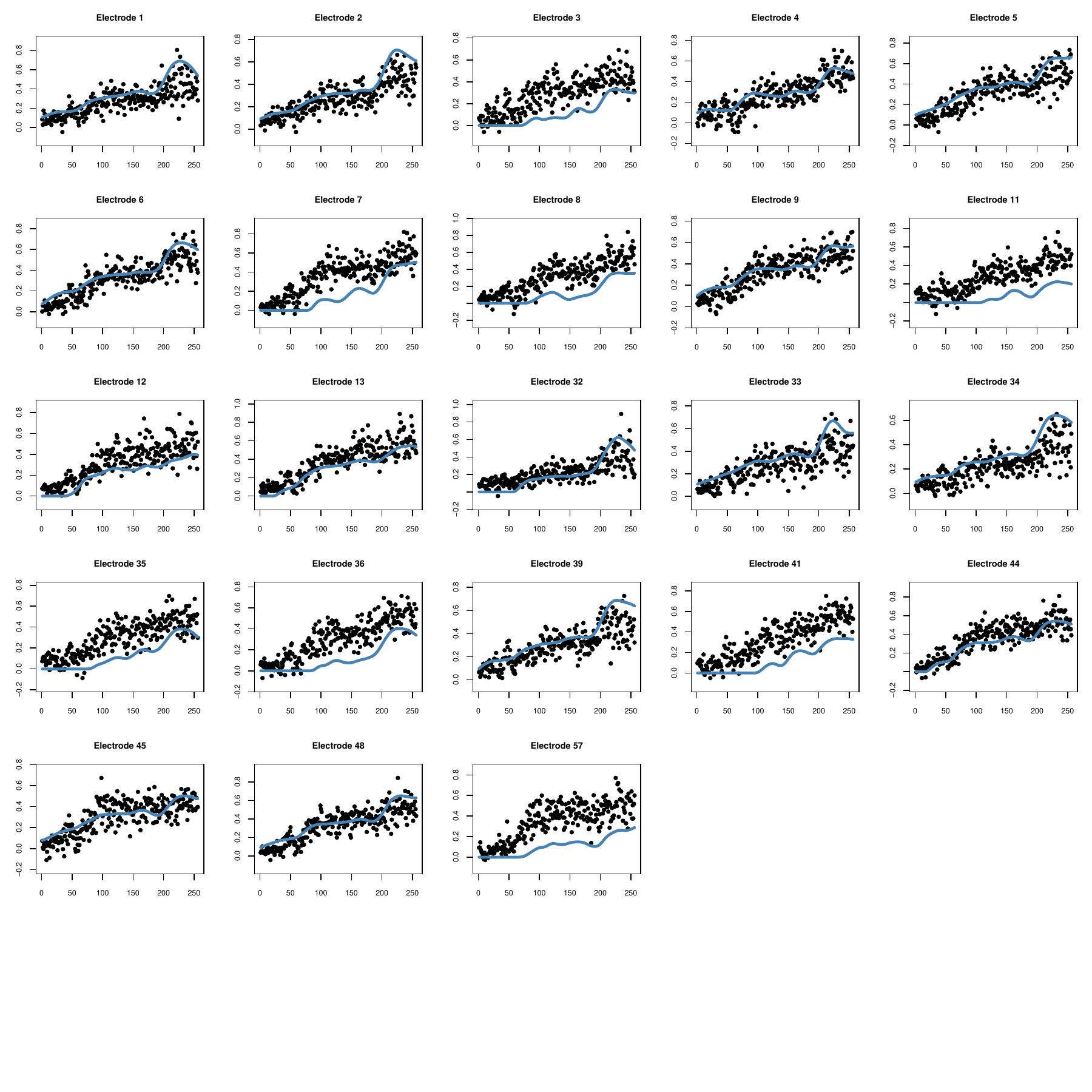}
    \caption{
    Detailed display of the retained electrodes within the identified EEG tricluster \((\pmb{S}_2,\pmb{F}_1)\). Here, \(\pmb{S}_2\) denotes the corresponding sample cluster and \(\pmb{F}_1\) denotes the corresponding electrode group. Each panel corresponds to one retained electrode in \(\pmb{F}_1\); electrode FC6 was excluded because its loading norm was effectively zero. Black points represent raw sample mean measurements among samples in \(\pmb{S}_2\), computed after omitting missing observations at each time point. Colored solid curves represent reconstructed sample mean trajectories obtained from the rank-\(K\) reconstruction of our sparse functional SVD model.
    }
    \label{fig:supp_eeg_S2F1}
\end{figure}

\clearpage
\begin{figure}[p]
    \centering
    \includegraphics[
        width=\textwidth,
        height=0.78\textheight,
        keepaspectratio
    ]{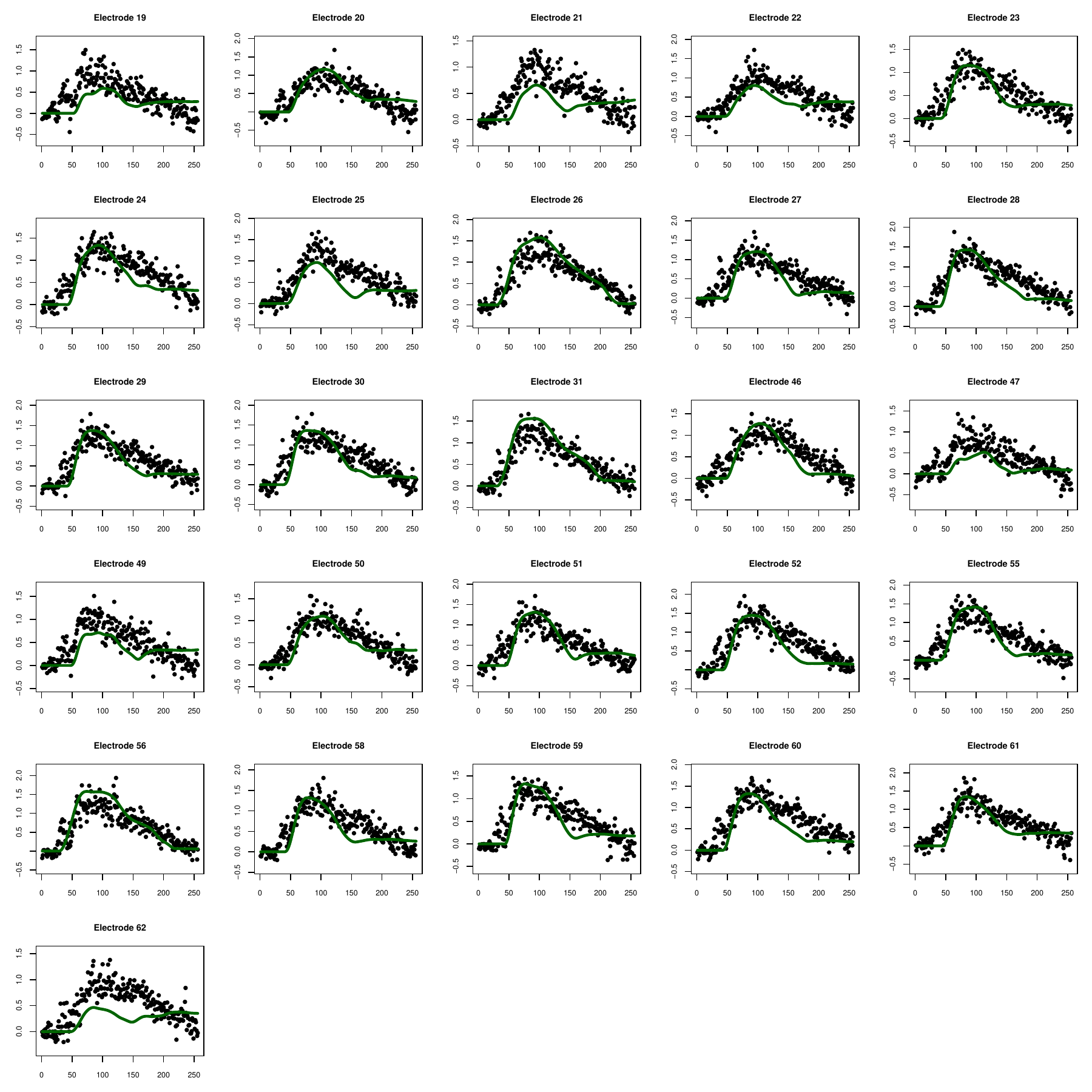}
    \caption{
    Detailed display of all selected electrodes within the identified EEG tricluster \((\pmb{S}_3,\pmb{F}_2)\). Here, \(\pmb{S}_3\) denotes the corresponding sample cluster and \(\pmb{F}_2\) denotes the corresponding electrode group. Each panel corresponds to one selected electrode in \(\pmb{F}_2\). Black points represent raw sample mean measurements among samples in \(\pmb{S}_3\), computed after omitting missing observations at each time point. Colored solid curves represent reconstructed sample mean trajectories obtained from the rank-\(K\) reconstruction of our sparse functional SVD model.
    }
    \label{fig:supp_eeg_S3F2}
\end{figure}


\begin{figure}[h]
  \centering
  \includegraphics[width=1\linewidth]{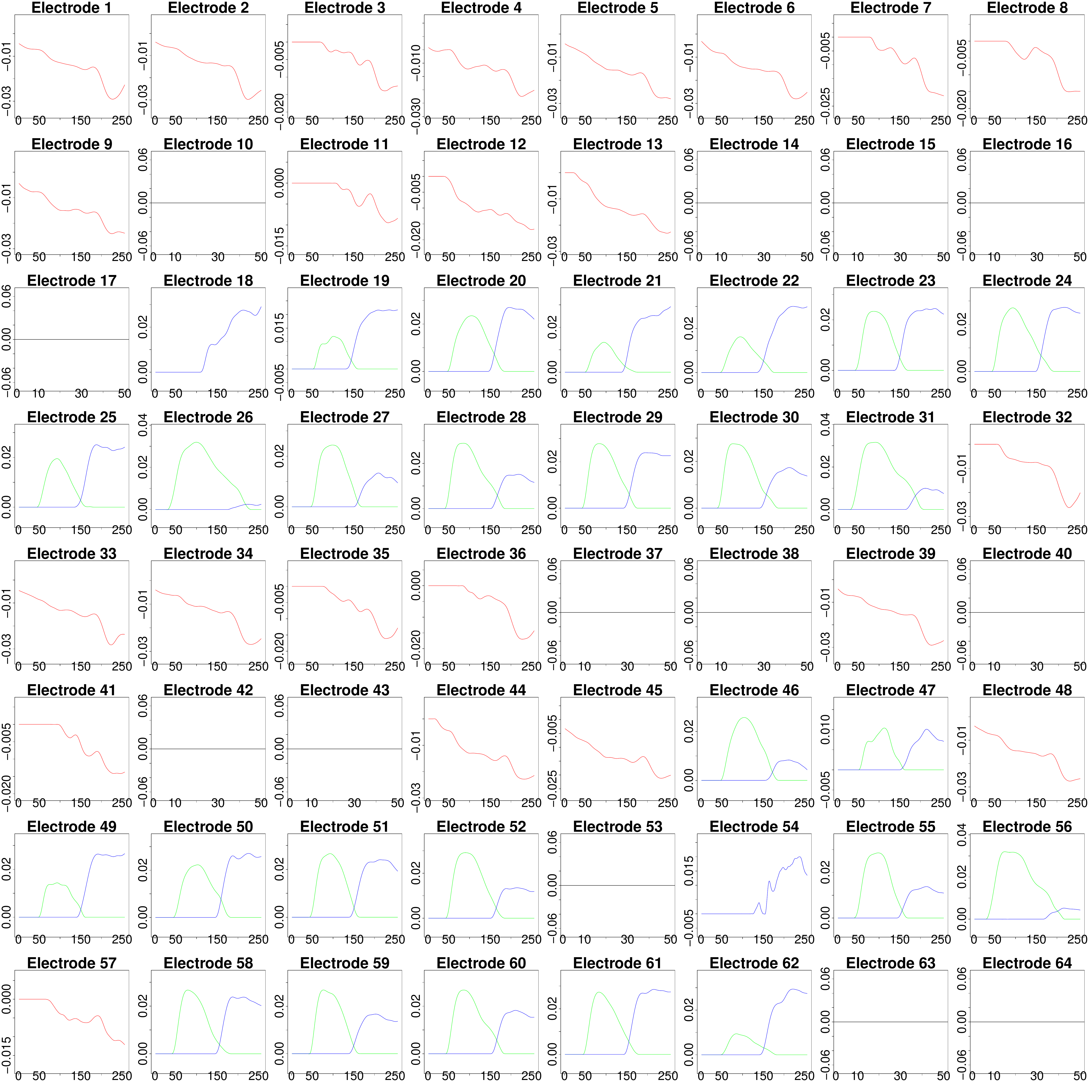}
  \caption{Estimated singular functions for each sparse layer after applying the same near-zero loading filter used for the electrode maps and raw/reconstruction figures. For a given feature, a curve that coincides with the zero baseline indicates that the feature is not retained for the corresponding tricluster, whereas a curve that deviates from zero indicates that the feature is retained, with the nonzero time interval identifying the selected time subregion. Panels showing only baseline at zero indicate features not retained by any layer.}

  \label{fig:pc curve}
\end{figure}

\clearpage
\section{More on IBD Data Analysis} \label{real_data_appendix}

\subsection{Additional Results in the IBD Real Data Analysis}
\label{supp:ibd_additional_results}

\subsubsection{Additional Interpretation of Identified Biclusters}
\label{supp:ibd_additional_bicluster_interpretation}

The main text focuses on the \(\pmb{F}_1\)-related biclusters. Here we provide the additional interpretation for the remaining pathway groups involved in the identified IBD bicluster structure.

A parallel pattern of heterogeneity is observed for pathway group \(\pmb{F}_2\), which is primarily characterized by features stratified to \textit{Ruminococcus torques}. Based on the distance-correlation criterion in Table~\ref{tab:assoc-bootmean} of the main manuscript, \(\pmb{F}_2\) appears in two selected biclusters, \((\pmb{S}_2,\pmb{F}_2)\) and \((\pmb{S}_3,\pmb{F}_2)\). Figure~\ref{fig:ibd_overlap_bicluster_mean} of the main manuscript illustrates these biclusters through representative pathways from \(\pmb{F}_2\), showing distinct longitudinal profiles between the two subject clusters. While the average pathway-feature levels in \(\pmb{S}_2\) show a sustained decrease over the 50-week period, the trajectories in \(\pmb{S}_3\) display a pronounced upward trend toward the end of the study duration. The observed functional contrast is consistent with the clinical phenotype distributions of these groups (Table~\ref{tab:phenotype} of the main manuscript). Specifically, \(\pmb{S}_2\) is characterized by a higher proportion of non-IBD control subjects, whereas \(\pmb{S}_3\) is heavily enriched with IBD patients, particularly those with CD. These findings align with recent metagenomic studies that have identified \textit{R. torques} as a key marker of IBD-related dysbiosis and have linked its functional activity to CD-specific metabolic shifts \citep{Zheng2024MicrobiomeDiagnosisIBD}. By capturing these non-homogeneous signals, our framework demonstrates that \(\pmb{F}_2\) serves as a bridge feature group that differentiates the relatively stable microbial states of the \(\pmb{S}_2\) subgroup from the more volatile, CD-enriched profiles of \(\pmb{S}_3\). This highlights the importance of allowing features to be shared across subject clusters while still capturing group-specific temporal variations.

Finally, \(\pmb{F}_3\) is most strongly associated with \(\pmb{S}_3\), suggesting that it represents an additional aspect of microbial functional variation in this IBD-enriched subgroup. In contrast to \(\pmb{F}_1\) and \(\pmb{F}_2\), which are dominated by pathways sharing a common bacterial label, \(\pmb{F}_3\) is composed primarily of pathways labeled as unclassified in this dataset. Its taxonomic interpretation is therefore less direct and remains open to further biological investigation. Nevertheless, its strong association with \(\pmb{S}_3\) suggests that, in addition to the patterns represented by \(\pmb{F}_1\) and \(\pmb{F}_2\), this IBD-enriched subgroup is linked to a further pathway-level functional signal.

\subsubsection{Comparison with Alternative Methods}
\label{supp:ibd_competing_methods}

To further evaluate the competing methods in the IBDMDB analysis, we examined
whether the sample clusters or sample memberships obtained from each method were
associated with the available clinical phenotypes. Tables~\ref{tab:MFPCA}--\ref{tab:funlbm-analysis-phenotype}
summarize the phenotype distributions for MFPCA-\(K\)mean, FPCA-BC, funCC, and
funLBM, respectively. These supplementary results focus on the clinical
interpretability of the recovered sample structures and provide additional
context for the comparisons reported in the main text.

For MFPCA-\(K\)mean, the resulting sample clusters were significantly
associated with IBD type (\(P<0.001\); Table~\ref{tab:MFPCA}), indicating that
MFPCA recovered some clinically relevant subject-level structure. However, this
approach only performs sample clustering based on MFPCA scores and does not
select pathway groups. Therefore, although it can identify phenotype-associated
sample clusters, it cannot provide interpretable sample--pathway biclusters.

For FPCA-BC, the sample clusters were not significantly associated with either
IBD type or sex in this application (Table~\ref{tab:ufpca}). This suggests that
the two-stage procedure, which first summarizes each feature using FPCA and then
applies a standard biclustering algorithm, did not recover sample structures with
clear clinical relevance in the IBDMDB dataset.

For funCC, the resulting sample memberships also showed no significant
associations with the examined clinical variables, including IBD type, sex, age
at diagnosis, CRP, ESR, and race (Table~\ref{tab:funcc-phenotype}). These results
suggest that funCC did not yield a clearly clinically interpretable bicluster
structure for this dataset.

For funLBM, the sample clusters were significantly associated with IBD type and
CRP (Table~\ref{tab:funlbm-analysis-phenotype}), suggesting that the method
captured some clinically meaningful sample structure. However, model selection in
funLBM is based on the integrated completed likelihood (ICL) criterion. As shown
in Figure~\ref{fig:ICL}, the model with the highest ICL value corresponds to 3
sample clusters and 15 feature clusters, resulting in a total of 45 biclusters.
The mean-curve patterns for these 45 biclusters are displayed in
Figure~\ref{fig:sim_candidate_curves}. Although this result contains clinically
associated sample clusters, the large number of biclusters makes the resulting
sample--feature structure less directly interpretable.

Overall, these supplementary comparisons suggest that the competing methods show
more limited interpretability in this application. MFPCA-\(K\)mean identifies
clinically associated sample clusters but does not perform pathway selection;
FPCA-BC and funCC do not recover clearly clinically meaningful sample structures;
and funLBM yields a relatively complex bicluster representation. In contrast, the
proposed method provides a more compact and biologically interpretable
sample--pathway representation for the IBDMDB data.

\begin{table}[h]
\centering
\small
\setlength{\tabcolsep}{6pt}
\begin{tabular}{l l
                >{\centering\arraybackslash}p{1.6cm}
                >{\centering\arraybackslash}p{1.6cm}
                >{\centering\arraybackslash}p{1.6cm}
                >{\centering\arraybackslash}p{1.6cm}
                >{\centering\arraybackslash}p{2.2cm}}
\hline
 & & \multicolumn{4}{c}{Sample clusters (N)} & \multirow{2}{*}{P-value} \\
\cline{3-6}
 & & $\pmb{S}_1$ (22) & $\pmb{S}_2$ (26) & $\pmb{S}_3$ (52) & $\pmb{S}_4$ (30) & \\
\hline
\multirow{3}{*}{IBD type}
  & Non-IBD & 9 (41\%) & 10 (38\%) & 2 (4\%) & 6 (20\%) & \multirow{3}{*}{$<0.001$} \\
  & CD & 12 (55\%) & 6 (24\%) & 29 (56\%) & 18 (60\%) & \\
  & UC & 1 (4\%) & 10 (38\%) & 21 (40\%) & 6 (20\%) & \\
\hline
\multirow{2}{*}{Gender}
  & Female & 10 (45\%) & 11 (42\%) & 25 (48\%) & 18 (60\%) & \multirow{2}{*}{0.5721} \\
  & Male & 12 (55\%) & 15 (58\%) & 27 (52\%) & 12 (40\%) & \\
\hline
\end{tabular}
\caption{
Associations between MFPCA-\(K\)mean sample clusters and clinical phenotypes in the IBDMDB dataset. Entries are counts (percentages) within each sample cluster. \(P\)-values are from Fisher's exact tests. The table shows that the sample clusters obtained from MFPCA-\(K\)mean differ significantly in IBD type, whereas no significant difference is observed for sex.
}
\label{tab:MFPCA}
\end{table}

\begin{table}[h]
\centering
\small
\setlength{\tabcolsep}{6pt}
\begin{tabular}{l l
                >{\centering\arraybackslash}p{1.6cm}
                >{\centering\arraybackslash}p{1.6cm}
                >{\centering\arraybackslash}p{1.6cm}
                >{\centering\arraybackslash}p{1.6cm}
                >{\centering\arraybackslash}p{2.2cm}}
\hline
 &  & \multicolumn{4}{c}{Sample clusters (N)} & \multirow{2}{*}{P-value} \\
\cline{3-6}
 &  & $\pmb{S}_1$ (59) & $\pmb{S}_2$ (45) & $\pmb{S}_3$ (53) & $\pmb{S}_4$ (53) & \\
\hline
\multirow{3}{*}{IBD type}
  & Non-IBD & 9 (15\%)  & 6 (13\%) & 12 (23\%)  & 3 (6\%)   & \multirow{3}{*}{$0.2337$} \\
  & CD      & 31 (53\%) & 25 (56\%) & 29 (54\%)  & 29 (54\%) & \\
  & UC      & 19 (32\%)  & 14 (31\%) & 12 (23\%)   & 21 (40\%) & \\
\hline
\multirow{2}{*}{Gender}
  & Female  & 30 (51\%) & 23 (51\%) & 31 (58\%)  & 29 (55\%) & \multirow{2}{*}{0.8446} \\
  & Male    & 29 (49\%)  & 22 (49\%) & 22 (42\%)  & 24 (45\%) & \\
\hline
\end{tabular}
\caption{
Associations between FPCA-BC sample clusters and clinical phenotypes in the IBDMDB dataset. Entries are counts (percentages) within each sample cluster. \(P\)-values are from Fisher's exact tests. The table shows that the sample clusters obtained from FPCA-BC are not significantly associated with either IBD type or sex in this application.
}
\label{tab:ufpca}
\end{table}

\clearpage

\begin{table}[h]
\centering
\small
\setlength{\tabcolsep}{5pt}
\resizebox{\textwidth}{!}{%
\begin{tabular}{l l
  >{\centering\arraybackslash}p{1.6cm}
  >{\centering\arraybackslash}p{1.6cm}
  >{\centering\arraybackslash}p{1.6cm}
  >{\centering\arraybackslash}p{1.6cm}
  >{\centering\arraybackslash}p{1.6cm}
  >{\centering\arraybackslash}p{1.6cm}
  >{\centering\arraybackslash}p{2.2cm}}
\hline
 & & \multicolumn{6}{c}{Sample clusters (N)} & \multirow{2}{*}{P-value} \\
\cline{3-8}
 & & $\pmb{S}_1$ (80) & $\pmb{S}_2$ (80) & $\pmb{S}_3$ (50) & $\pmb{S}_4$ (50) & $\pmb{S}_5$ (50) & $\pmb{S}_6$ (50) & \\
\hline
\multirow{3}{*}{IBD type} & Non-IBD & 15 (18.8\%) & 15 (18.8\%) & 12 (24.0\%) & 12 (24.0\%) & 12 (24.0\%) & 12 (24.0\%) & \multirow{3}{*}{0.9989} \\
 & CD & 41 (51.2\%) & 41 (51.2\%) & 24 (48.0\%) & 24 (48.0\%) & 24 (48.0\%) & 24 (48.0\%) &  \\
 & UC & 24 (30.0\%) & 24 (30.0\%) & 14 (28.0\%) & 14 (28.0\%) & 14 (28.0\%) & 14 (28.0\%) &  \\
\hline
\multirow{2}{*}{Sex} & Female & 36 (45.0\%) & 36 (45.0\%) & 28 (56.0\%) & 28 (56.0\%) & 28 (56.0\%) & 28 (56.0\%) & \multirow{2}{*}{0.5118} \\
 & Male & 44 (55.0\%) & 44 (55.0\%) & 22 (44.0\%) & 22 (44.0\%) & 22 (44.0\%) & 22 (44.0\%) &  \\
\hline
\multicolumn{2}{>{\raggedright\arraybackslash}p{5.8cm}}{\small Age at diagnosis, mean (SD)} & 19.89 (9.82) & 19.89 (9.82) & 23.74 (15.19) & 23.74 (15.19) & 23.74 (15.19) & 23.74 (15.19) & 0.9120 \\
\hline
\multicolumn{2}{>{\raggedright\arraybackslash}p{5.8cm}}{\small CRP (C-Reactive Protein), mean (SD)} & 40.13 (179.30) & 40.13 (179.30) & 75.40 (261.17) & 75.40 (261.17) & 75.40 (261.17) & 75.40 (261.17) & 1.0000 \\
\hline
\multicolumn{2}{>{\raggedright\arraybackslash}p{5.8cm}}{\small ESR (Erythrocyte Sedimentation Rate), mean (SD)} & 54.94 (176.39) & 54.94 (176.39) & 19.75 (17.96) & 19.75 (17.96) & 19.75 (17.96) & 19.75 (17.96) & 0.9069 \\
\hline
\multirow{5}{*}{Race} & American Indian or Alaska Native & 1 & 1 & 0 & 0 & 0 & 0 & \multirow{5}{*}{0.9421} \\
 & Black or African American & 7 & 7 & 3 & 3 & 3 & 3 &  \\
 & More than one race & 3 & 3 & 2 & 2 & 2 & 2 &  \\
 & Other & 4 & 4 & 0 & 0 & 0 & 0 &  \\
 & White & 65 & 65 & 45 & 45 & 45 & 45 &  \\
\hline
\end{tabular}%
}
\caption{
Associations between funCC sample memberships and clinical phenotypes in the IBDMDB dataset. Entries are counts (percentages) for categorical variables and mean (SD) for continuous variables. \(P\)-values are from Fisher's exact tests and Kruskal--Wallis tests. The table shows that the funCC sample memberships are not significantly associated with the examined clinical variables in this application.
}
\label{tab:funcc-phenotype}
\end{table}

\clearpage

\begin{table}[h]
\centering
\small
\setlength{\tabcolsep}{6pt}
\begin{tabular}{l l
  >{\centering\arraybackslash}p{1.6cm}
  >{\centering\arraybackslash}p{1.6cm}
  >{\centering\arraybackslash}p{1.6cm}
  >{\centering\arraybackslash}p{1.6cm}
  >{\centering\arraybackslash}p{2.2cm}}
\hline
 & & \multicolumn{4}{c}{Sample clusters (N)} & \multirow{2}{*}{P-value} \\
\cline{3-6}
 & & $\pmb{S}_1$ (27) & $\pmb{S}_2$ (41) & $\pmb{S}_3$ (33) & $\pmb{S}_4$ (29) & \\
\hline
\multirow{3}{*}{IBD type} & Non-IBD & 14 (51.9\%) & 2 (4.9\%) & 6 (18.2\%) & 5 (17.2\%) & \multirow{3}{*}{$<0.001$} \\
 & CD & 11 (40.7\%) & 23 (56.1\%) & 18 (54.5\%) & 13 (44.8\%) &  \\
 & UC & 2 (7.4\%) & 16 (39.0\%) & 9 (27.3\%) & 11 (37.9\%) &  \\
\hline
\multirow{2}{*}{Sex} & Female & 14 (51.9\%) & 20 (48.8\%) & 17 (51.5\%) & 13 (44.8\%) & \multirow{2}{*}{0.9510} \\
 & Male & 13 (48.1\%) & 21 (51.2\%) & 16 (48.5\%) & 16 (55.2\%) &  \\
\hline
\multicolumn{2}{>{\raggedright\arraybackslash}p{5.8cm}}{\small Age at diagnosis, mean (SD)} & 19.42 (9.77) & 21.00 (9.84) & 20.70 (11.61) & 23.50 (16.88) & 0.8689 \\
\hline
\multicolumn{2}{>{\raggedright\arraybackslash}p{5.8cm}}{\small CRP (C-Reactive Protein), mean (SD)} & 1.23 (2.35) & 3.81 (8.67) & 78.93 (276.46) & 112.92 (304.42) & 0.0381 \\
\hline
\multicolumn{2}{>{\raggedright\arraybackslash}p{5.8cm}}{\small ESR (Erythrocyte Sedimentation Rate), mean (SD)} & 12.09 (12.95) & 19.60 (16.39) & 25.31 (21.34) & 75.00 (218.54) & 0.0925 \\
\hline
\multirow{5}{*}{Race} & American Indian or Alaska Native & 0 & 1 & 0 & 0 & \multirow{5}{*}{0.3843} \\
 & Black or African American & 0 & 4 & 4 & 2 &  \\
 & More than one race & 0 & 1 & 2 & 2 &  \\
 & Other & 0 & 2 & 2 & 0 &  \\
 & White & 27 & 33 & 25 & 25 &  \\
\hline
\end{tabular}
\caption{
Associations between funLBM sample clusters and clinical phenotypes in the IBDMDB dataset. Entries are counts (percentages) for categorical variables and mean (SD) for continuous variables. \(P\)-values are from Fisher's exact tests for categorical variables and Kruskal--Wallis tests for continuous variables. The table shows that the funLBM sample clusters differ significantly in IBD type and CRP, whereas no significant differences are observed for sex, age at diagnosis, ESR, or race.
}
\label{tab:funlbm-analysis-phenotype}
\end{table}

\begin{figure}[htbp]
    \centering
    \includegraphics[width=\textwidth]{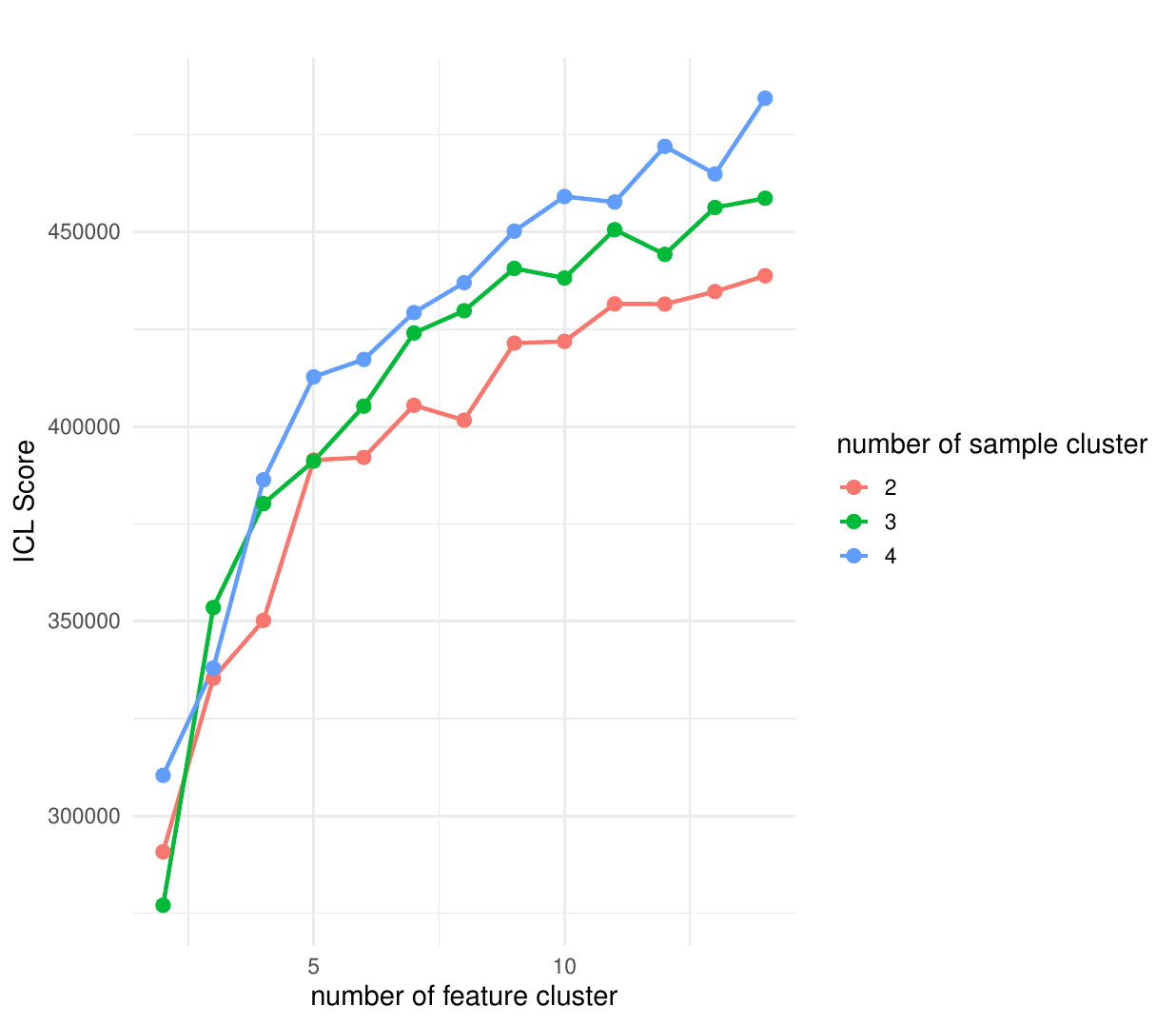}
\caption{
ICL values for funLBM model selection in the IBDMDB dataset. The selected model corresponds to the highest ICL value, which suggests 3 sample clusters and 15 feature clusters.
}
\label{fig:ICL}
\end{figure}

\begin{figure}[htbp]
    \centering
    \includegraphics[width=\textwidth]{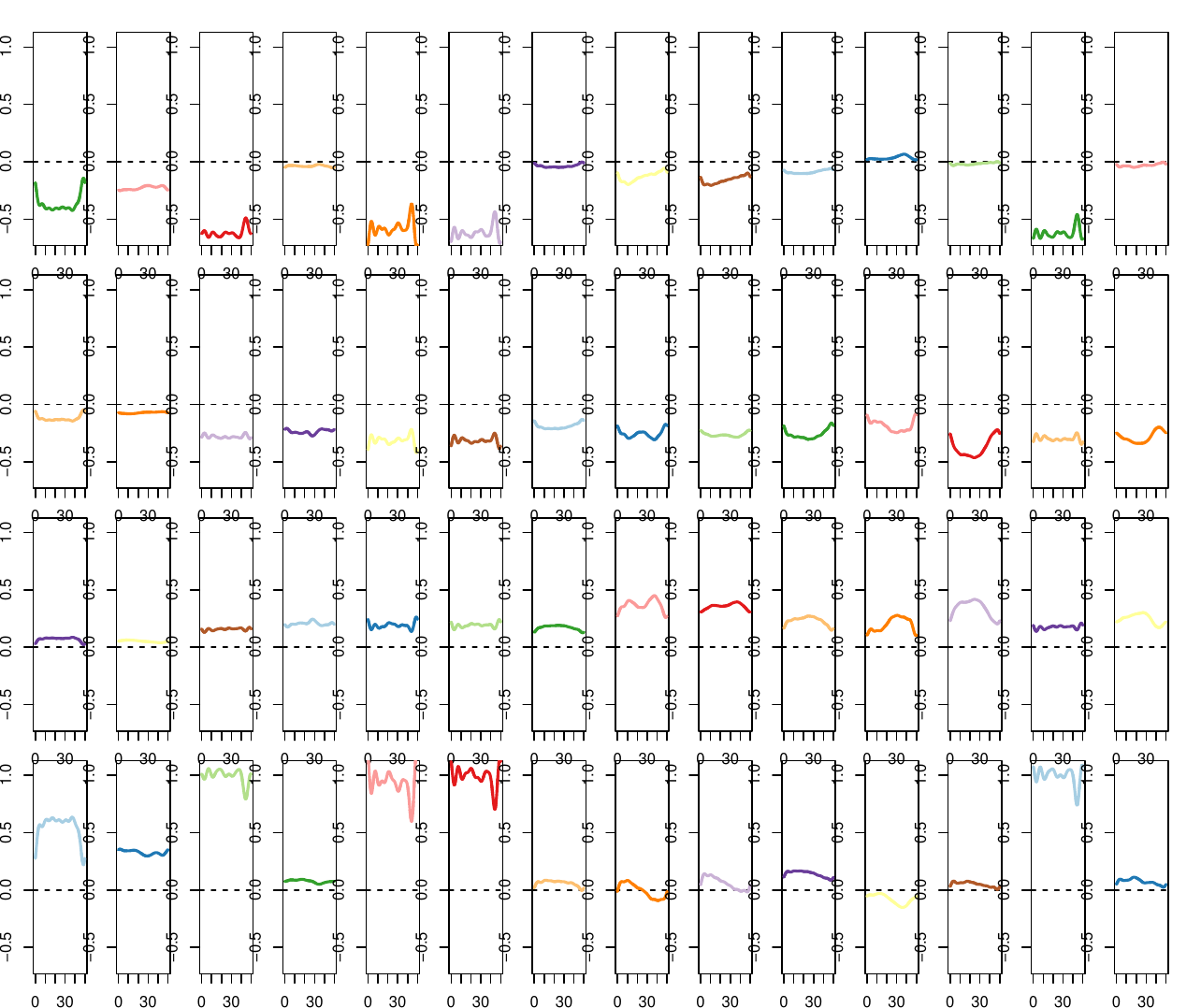}
\caption{
Mean curves for the 45 biclusters identified by funLBM in the IBDMDB dataset. The 45 biclusters are obtained from the selected funLBM model with 3 sample clusters and 15 feature clusters.
}
\label{fig:funlbm_mean_curve}
\end{figure}

\clearpage

\subsection{Supplementary Tables and Figures}
Table~\ref{tab:ibd_species_distribution} summarizes the distribution of bacterial species labels among the 200 retained species-stratified metagenomic pathways used in the IBDMDB analysis. The most frequent species label was \textit{Alistipes putredinis}, accounting for 65 features (32.5\%), followed by \textit{Ruminococcus torques} with 49 features (24.5\%) and unclassified labels with 46 features (23.0\%). This distribution provides additional context for the feature groups identified in the real data analysis, where pathway groups were largely driven by features stratified to these frequently represented species labels.
\begin{table}[ht]
\centering
\small
\setlength{\tabcolsep}{6pt}
\begin{tabular}{lcc}
\hline
Species label in feature name & Number of features & Percentage (\%) \\
\hline
\textit{Ruminococcus torques} & 49 & 24.5 \\
Unclassified & 46 & 23.0 \\
\textit{Alistipes putredinis} & 65 & 32.5 \\
\textit{Ruminococcus bicirculans} & 12 & 6.0 \\
No explicit species label & 10 & 5.0 \\
\textit{Alistipes finegoldii} & 9 & 4.5 \\
\textit{Bacteroides xylanisolvens} & 3 & 1.5 \\
Other single-occurrence species & 6 & 3.0 \\
\hline
Total & 200 & 100.0 \\
\hline
\end{tabular}
\caption{Distribution of bacterial species labels among the 200 retained species-stratified metagenomic pathways in the IBDMDB data. Species names are reported as they appear in the feature names. ``Unclassified'' denotes features ending with an unclassified label, and ``No explicit species label'' denotes features without a terminal species annotation. The category ``Other single-occurrence species'' includes \textit{Akkermansia muciniphila}, \textit{Roseburia hominis}, \textit{Roseburia inulinivorans}, \textit{Roseburia inulinivorans} CAG 15, \textit{Eubacterium eligens}, and \textit{Eubacterium rectale}.}
\label{tab:ibd_species_distribution}
\end{table}

Figure~\ref{fig:ibd_feature_curves} provides additional details on the estimated singular functions for the first three sparse layers in the IBDMDB analysis. For a given species-stratified pathway, a curve segment that departs from the zero baseline indicates that the pathway is selected in the corresponding sparse layer, with the nonzero segment reflecting its estimated temporal contribution. In contrast, a curve that remains on or near the zero baseline over the entire domain indicates that the pathway is not selected in that layer.
\begin{figure}[p]
    \centering
    \includegraphics[page=1,width=\textwidth]{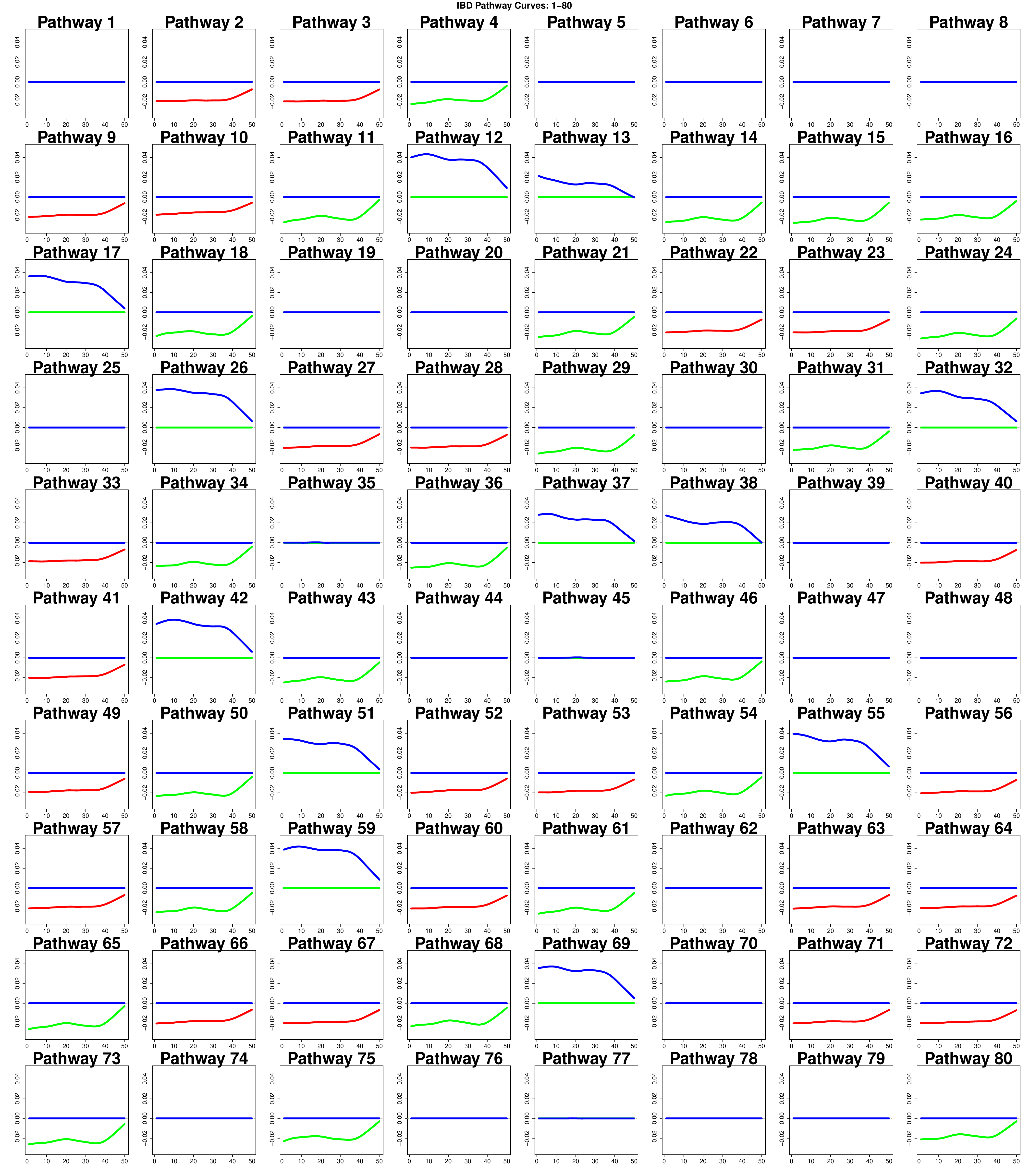}
    \caption{(continued)}
    \label{fig:ibd_feature_curves}
\end{figure}

\begin{figure}[p]\ContinuedFloat
    \centering
    \includegraphics[page=2,width=\textwidth]{real_data_images/ibd_feature_curve_details.pdf}
    \caption[]{ (continued).}
\end{figure}

\begin{figure}[p]\ContinuedFloat
    \centering
    \includegraphics[page=3,width=\textwidth]{real_data_images/ibd_feature_curve_details.pdf}
    \vspace{-5cm}
    \caption[]{Estimated pathway loading curves for the first three sparse layers. Each panel corresponds to one microbial pathway feature, and the three colored curves represent its estimated loading function in sparse layers 1, 2, and 3, respectively. For a given pathway, a curve that remains on or very close to the zero baseline indicates that the pathway is not selected by the corresponding sparse layer, whereas a curve that deviates from zero indicates that the pathway is selected by that layer. The magnitude and sign of the deviation reflect the estimated contribution pattern of that pathway over time. Panels in which all three curves remain at or near zero indicate pathways that are not meaningfully selected by any of the first three sparse layers.}
\end{figure}

\clearpage
To provide a more complete visualization of the identified IBD biclusters,
Figures~\ref{fig:supp_S1F1}--\ref{fig:supp_S3F3} display the pathway-specific
longitudinal patterns for all pathways within each selected pathway group. In each supplementary figure, black
points denote the raw sample mean measurements within the corresponding sample
cluster, computed after omitting missing observations at each time point, while
colored solid curves denote the reconstructed mean trajectories obtained from our
method. Together, these plots provide detailed evidence of the temporal patterns
underlying the identified biclusters.
\begin{figure}[p]
    \centering
    \includegraphics[page=1,width=\textwidth]{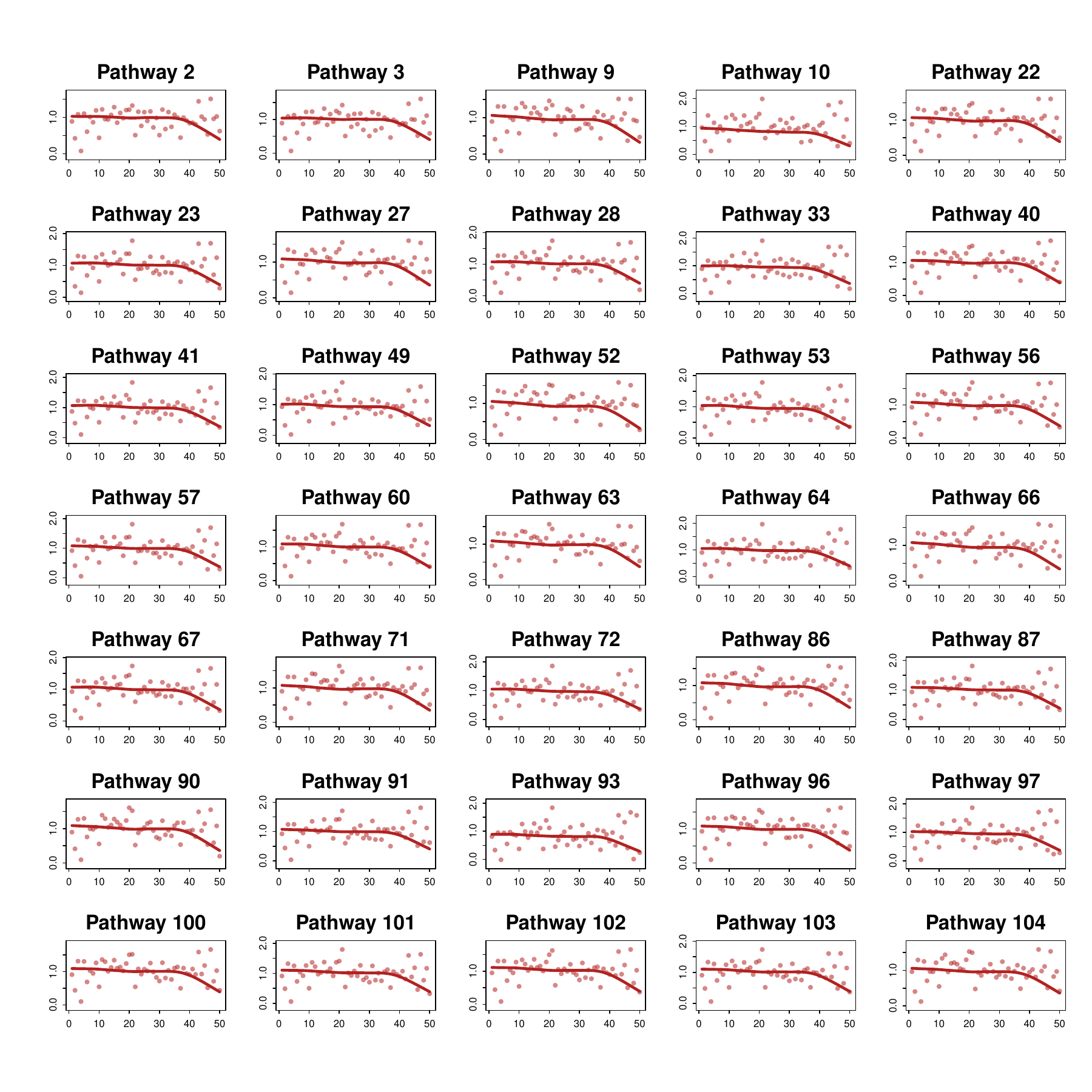}
    \caption{
    Detailed display of all pathways within the identified bicluster \((\pmb{S}_1,\pmb{F}_1)\). Each panel corresponds to one pathway in pathway group \(\pmb{F}_1\). Black points represent raw sample mean measurements among samples in \(\pmb{S}_1\), computed after omitting missing observations at each time point. Colored solid curves represent the reconstructed mean trajectories obtained from our method.
    }
    \label{fig:supp_S1F1}
\end{figure}

\begin{figure}[p]\ContinuedFloat
    \centering
    \includegraphics[page=2,width=\textwidth]{real_data_images/ibd_raw_and_recon_all_features_S1-F1.pdf}
    \caption[]{Detailed display of all pathways within the identified bicluster \((\pmb{S}_1,\pmb{F}_1)\) continued.}
\end{figure}

\clearpage
\begin{figure}[p]
    \centering
    \includegraphics[page=1,width=\textwidth]{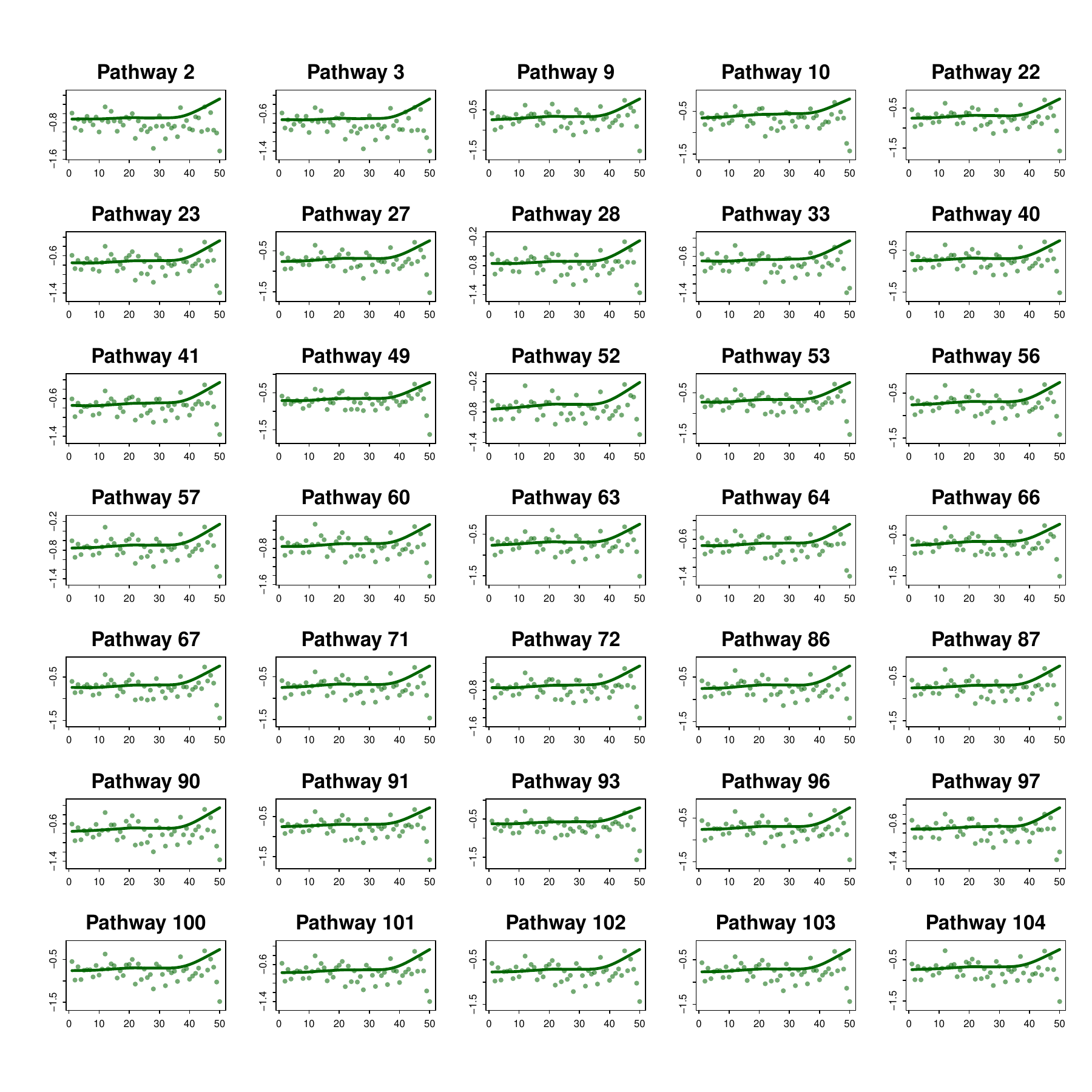}
    \caption{
    Detailed display of all pathways within the identified bicluster \((\pmb{S}_3,\pmb{F}_1)\). Each panel corresponds to one pathway in pathway group \(\pmb{F}_1\). Black points represent raw sample mean measurements among samples in \(\pmb{S}_3\), computed after omitting missing observations at each time point. Colored solid curves represent the reconstructed mean trajectories obtained from our method.
    }
    \label{fig:supp_S3F1}
\end{figure}

\begin{figure}[p]\ContinuedFloat
    \centering
    \includegraphics[page=2,width=\textwidth]{real_data_images/ibd_raw_and_recon_all_features_S3-F1.pdf}
    \caption[]{Detailed display of all pathways within the identified bicluster \((\pmb{S}_3,\pmb{F}_1)\) continued.}
\end{figure}

\clearpage
\begin{figure}[p]
    \centering
    \includegraphics[page=1,width=\textwidth]{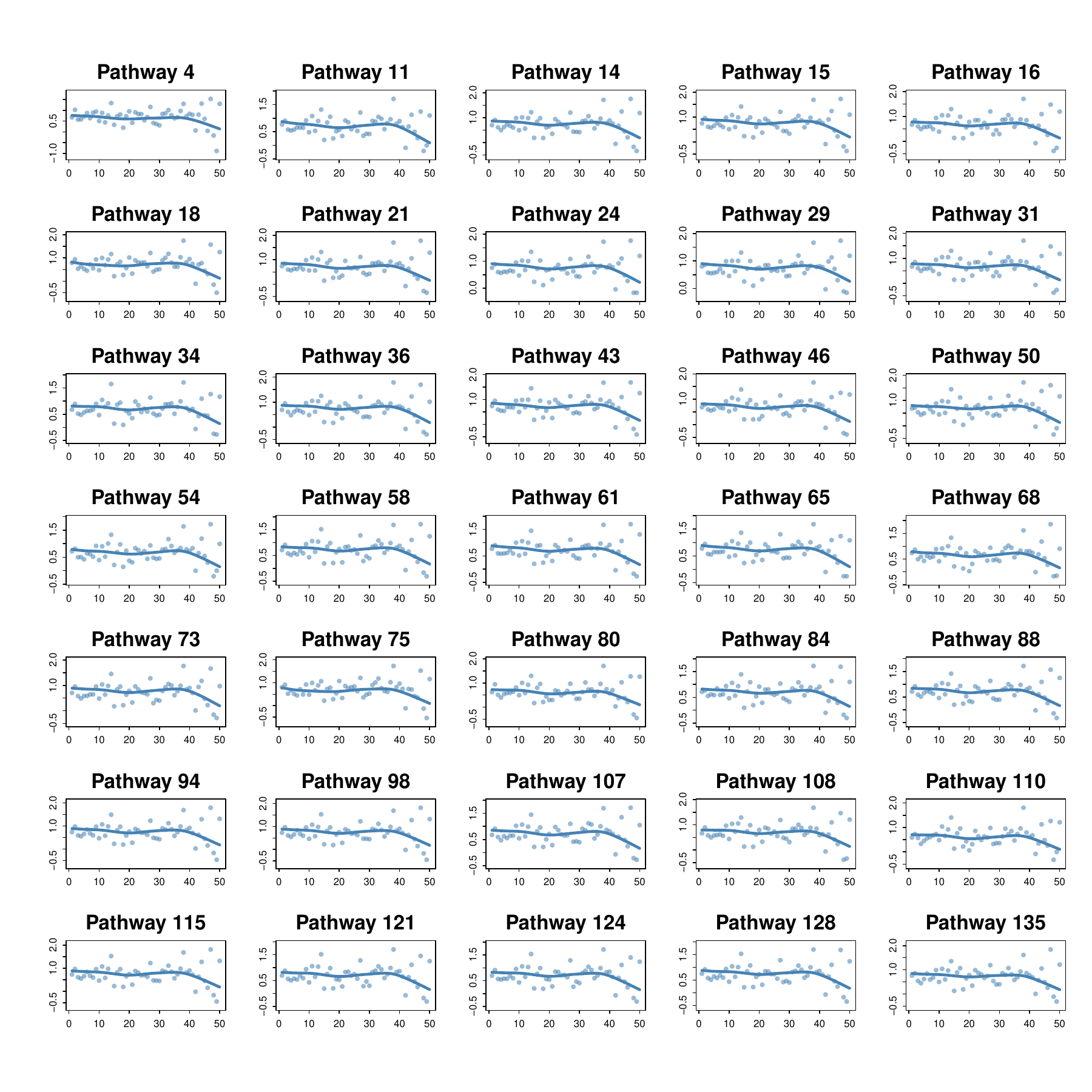}
    \caption{
    Detailed display of all pathways within the identified bicluster \((\pmb{S}_2,\pmb{F}_2)\). Each panel corresponds to one pathway in pathway group \(\pmb{F}_2\). Black points represent raw sample mean measurements among samples in \(\pmb{S}_2\), computed after omitting missing observations at each time point. Colored solid curves represent the reconstructed mean trajectories obtained from our method.
    }
    \label{fig:supp_S2F2}
\end{figure}

\begin{figure}[p]\ContinuedFloat
    \centering
    \includegraphics[page=2,width=\textwidth]{real_data_images/ibd_raw_and_recon_all_features_S2-F2.pdf}
    \caption[]{Detailed display of all pathways within the identified bicluster \((\pmb{S}_2,\pmb{F}_2)\) continued.}
\end{figure}

\clearpage
\begin{figure}[p]
    \centering
    \includegraphics[page=1,width=\textwidth]{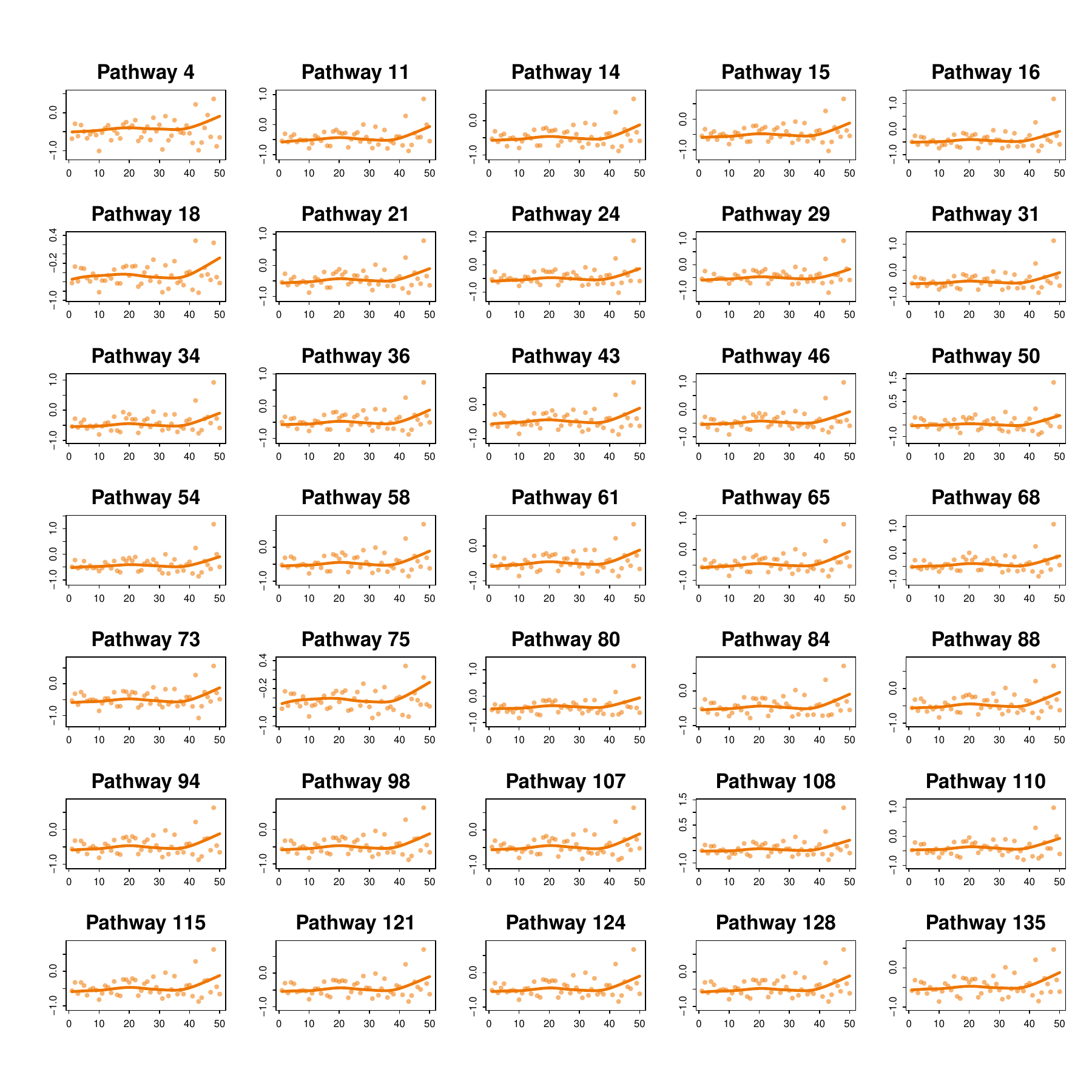}
    \caption{
    Detailed display of all pathways within the identified bicluster \((\pmb{S}_3,\pmb{F}_2)\). Each panel corresponds to one pathway in pathway group \(\pmb{F}_2\). Black points represent raw sample mean measurements among samples in \(\pmb{S}_3\), computed after omitting missing observations at each time point. Colored solid curves represent the reconstructed mean trajectories obtained from our method.
    }
    \label{fig:supp_S3F2}
\end{figure}

\begin{figure}[p]\ContinuedFloat
    \centering
    \includegraphics[page=2,width=\textwidth]{real_data_images/ibd_raw_and_recon_all_features_S3-F2.pdf}
    \caption[]{Detailed display of all pathways within the identified bicluster \((\pmb{S}_3,\pmb{F}_2)\) continued.}
\end{figure}

\clearpage
\begin{figure}[p]
    \centering
    \includegraphics[page=1,width=\textwidth]{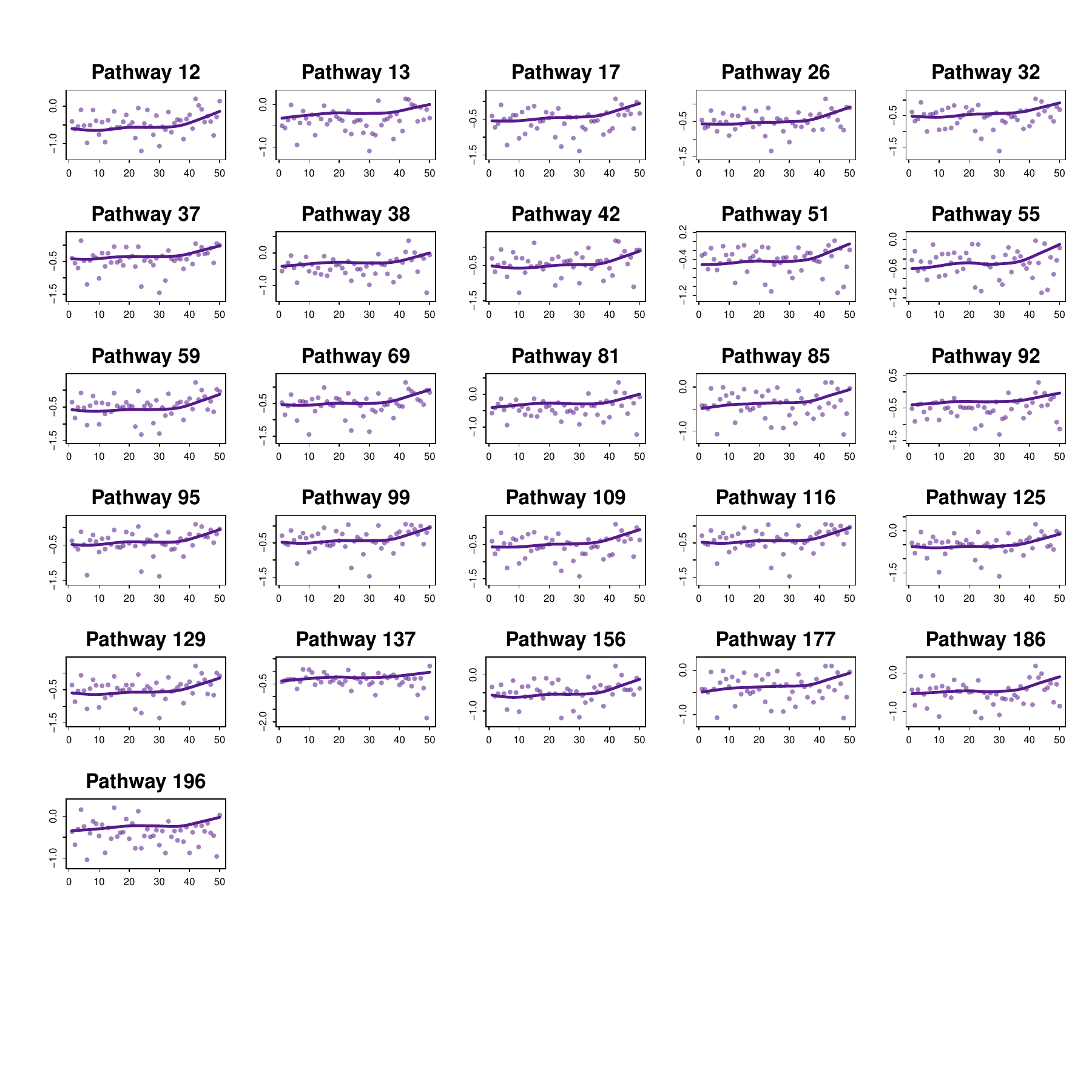}
    \caption{
    Detailed display of all pathways within the identified bicluster \((\pmb{S}_3,\pmb{F}_3)\). Each panel corresponds to one pathway in pathway group \(\pmb{F}_3\). Black points represent raw sample mean measurements among samples in \(\pmb{S}_3\), computed after omitting missing observations at each time point. Colored solid curves represent the reconstructed mean trajectories obtained from our method.
    }
    \label{fig:supp_S3F3}
\end{figure}
\clearpage


\begin{figure}[h]
  \centering
  \includegraphics[width=.9\linewidth]{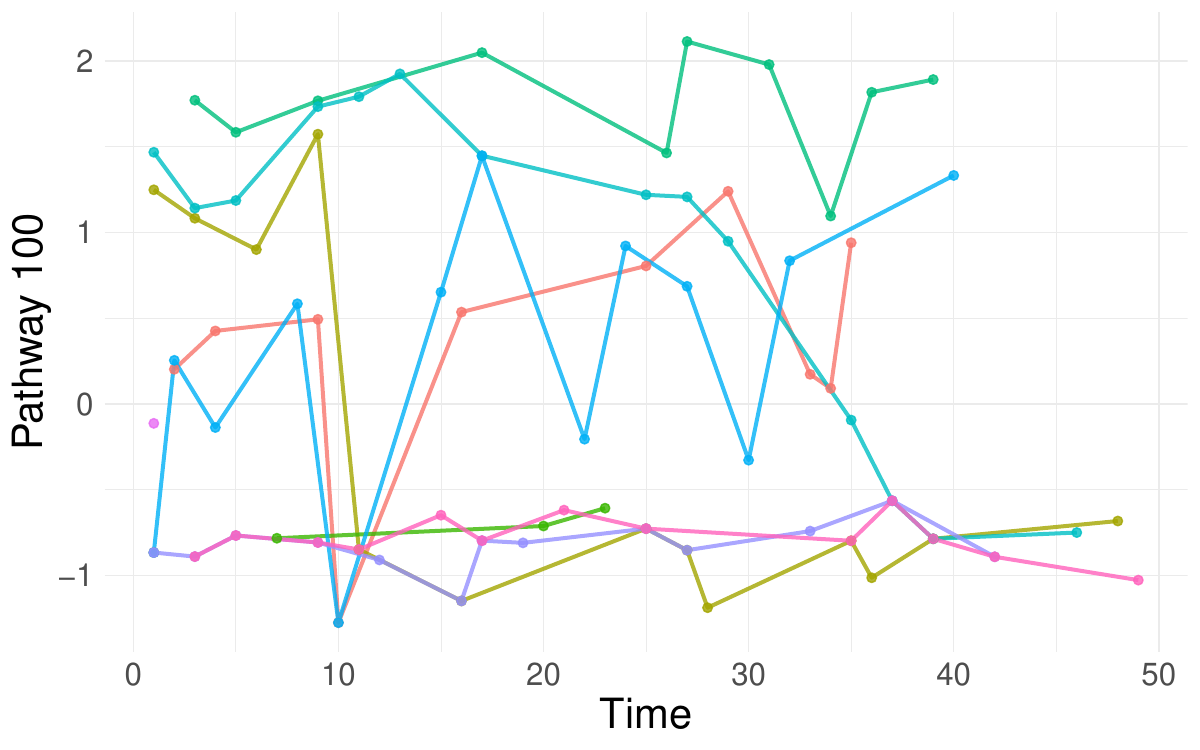}
  \caption{Trajectories for 10 randomly selected samples from Pathway 100.}
  \label{SampleTrajectories}
\end{figure}


\clearpage
\subsection{Features Cluster details of IBD data}

The lists below report the pathway features retained for display in the IBD raw-and-reconstruction figures. They use the same near-zero loading filter as the plotting code; consequently, feature cluster $\pmb{F}_3$ lists 26 plotted features rather than all 38 nonzero-loading features.


\newcolumntype{L}[1]{>{\RaggedRight\arraybackslash}p{#1}}

\newcommand{\featitem}[1]{%
  \textbullet\ \parbox[t]{\linewidth}{#1\par\vspace{0.7em}}\\%
}
\newcommand{\featitemlast}[1]{%
  \textbullet\ \parbox[t]{\linewidth}{#1}\\%
}

\vspace{3mm}
{\Large \textbf{Features included in feature cluster $\pmb{F}_1$ (65 features):}}
\vspace{2mm}
\footnotesize
\setlength{\LTpre}{0pt}
\setlength{\LTpost}{0pt}
\renewcommand{\arraystretch}{1.0}

\begin{longtable}{@{}L{0.95\textwidth}@{}}
\toprule
\textbf{Feature cluster $\pmb{F}_1$} \\
\midrule
\endfirsthead

\toprule
\textbf{Feature cluster $\pmb{F}_1$ (continued)} \\
\midrule
\endhead

\bottomrule
\endfoot

\featitem{\path{UNINTEGRATED.g__Alistipes.s__Alistipes_putredinis}}
\featitem{\path{UNINTEGRATED.g__Alistipes.s__Alistipes_putredinis_CAG_67}}
\featitem{\path{X1CMET2.PWY..N10.formyl.tetrahydrofolate.biosynthesis.g__Alistipes.s__Alistipes_putredinis}}
\featitem{\path{X1CMET2.PWY..N10.formyl.tetrahydrofolate.biosynthesis.g__Alistipes.s__Alistipes_putredinis_CAG_67}}
\featitem{\path{COA.PWY.1..coenzyme.A.biosynthesis.II..mammalian..g__Alistipes.s__Alistipes_putredinis}}
\featitem{\path{COA.PWY.1..coenzyme.A.biosynthesis.II..mammalian..g__Alistipes.s__Alistipes_putredinis_CAG_67}}
\featitem{\path{COA.PWY..coenzyme.A.biosynthesis.I.g__Alistipes.s__Alistipes_putredinis}}
\featitem{\path{COA.PWY..coenzyme.A.biosynthesis.I.g__Alistipes.s__Alistipes_putredinis_CAG_67}}
\featitem{\path{DTDPRHAMSYN.PWY..dTDP.L.rhamnose.biosynthesis.I.g__Alistipes.s__Alistipes_putredinis_CAG_67}}
\featitem{\path{HISDEG.PWY..L.histidine.degradation.I.g__Alistipes.s__Alistipes_putredinis}}
\featitem{\path{HISDEG.PWY..L.histidine.degradation.I.g__Alistipes.s__Alistipes_putredinis_CAG_67}}
\featitem{\path{PANTO.PWY..phosphopantothenate.biosynthesis.I.g__Alistipes.s__Alistipes_putredinis_CAG_67}}
\featitem{\path{PANTOSYN.PWY..pantothenate.and.coenzyme.A.biosynthesis.I.g__Alistipes.s__Alistipes_putredinis}}
\featitem{\path{PANTOSYN.PWY..pantothenate.and.coenzyme.A.biosynthesis.I.g__Alistipes.s__Alistipes_putredinis_CAG_67}}
\featitem{\path{PEPTIDOGLYCANSYN.PWY..peptidoglycan.biosynthesis.I..meso.diaminopimelate.containing..g__Alistipes.s__Alistipes_putredinis}}
\featitem{\path{PEPTIDOGLYCANSYN.PWY..peptidoglycan.biosynthesis.I..meso.diaminopimelate.containing..g__Alistipes.s__Alistipes_putredinis_CAG_67}}
\featitem{\path{PWY.1042..glycolysis.IV..plant.cytosol..g__Alistipes.s__Alistipes_putredinis}}
\featitem{\path{PWY.3841..folate.transformations.II.g__Alistipes.s__Alistipes_putredinis}}
\featitem{\path{PWY.3841..folate.transformations.II.g__Alistipes.s__Alistipes_putredinis_CAG_67}}
\featitem{\path{PWY.4242..pantothenate.and.coenzyme.A.biosynthesis.III.g__Alistipes.s__Alistipes_putredinis}}
\featitem{\path{PWY.4242..pantothenate.and.coenzyme.A.biosynthesis.III.g__Alistipes.s__Alistipes_putredinis_CAG_67}}
\featitem{\path{PWY.5030..L.histidine.degradation.III.g__Alistipes.s__Alistipes_putredinis}}
\featitem{\path{PWY.5030..L.histidine.degradation.III.g__Alistipes.s__Alistipes_putredinis_CAG_67}}
\featitem{\path{PWY.5686..UMP.biosynthesis.g__Alistipes.s__Alistipes_putredinis}}
\featitem{\path{PWY.5686..UMP.biosynthesis.g__Alistipes.s__Alistipes_putredinis_CAG_67}}
\featitem{\path{PWY.5695..urate.biosynthesis.inosine.5..phosphate.degradation.g__Alistipes.s__Alistipes_putredinis}}
\featitem{\path{PWY.5695..urate.biosynthesis.inosine.5..phosphate.degradation.g__Alistipes.s__Alistipes_putredinis_CAG_67}}
\featitem{\path{PWY.6121..5.aminoimidazole.ribonucleotide.biosynthesis.I.g__Alistipes.s__Alistipes_putredinis_CAG_67}}
\featitem{\path{PWY.6122..5.aminoimidazole.ribonucleotide.biosynthesis.II.g__Alistipes.s__Alistipes_putredinis}}
\featitem{\path{PWY.6122..5.aminoimidazole.ribonucleotide.biosynthesis.II.g__Alistipes.s__Alistipes_putredinis_CAG_67}}
\featitem{\path{PWY.6123..inosine.5..phosphate.biosynthesis.I.g__Alistipes.s__Alistipes_putredinis}}
\featitem{\path{PWY.6123..inosine.5..phosphate.biosynthesis.I.g__Alistipes.s__Alistipes_putredinis_CAG_67}}
\featitem{\path{PWY.6124..inosine.5..phosphate.biosynthesis.II.g__Alistipes.s__Alistipes_putredinis}}
\featitem{\path{PWY.6124..inosine.5..phosphate.biosynthesis.II.g__Alistipes.s__Alistipes_putredinis_CAG_67}}
\featitem{\path{PWY.6126..superpathway.of.adenosine.nucleotides.de.novo.biosynthesis.II.g__Alistipes.s__Alistipes_putredinis}}
\featitem{\path{PWY.6126..superpathway.of.adenosine.nucleotides.de.novo.biosynthesis.II.g__Alistipes.s__Alistipes_putredinis_CAG_67}}
\featitem{\path{PWY.6147..6.hydroxymethyl.dihydropterin.diphosphate.biosynthesis.I.g__Alistipes.s__Alistipes_putredinis_CAG_67}}
\featitem{\path{PWY.6277..superpathway.of.5.aminoimidazole.ribonucleotide.biosynthesis.g__Alistipes.s__Alistipes_putredinis}}
\featitem{\path{PWY.6277..superpathway.of.5.aminoimidazole.ribonucleotide.biosynthesis.g__Alistipes.s__Alistipes_putredinis_CAG_67}}
\featitem{\path{PWY.6385..peptidoglycan.biosynthesis.III..mycobacteria..g__Alistipes.s__Alistipes_putredinis}}
\featitem{\path{PWY.6385..peptidoglycan.biosynthesis.III..mycobacteria..g__Alistipes.s__Alistipes_putredinis_CAG_67}}
\featitem{\path{PWY.6386..UDP.N.acetylmuramoyl.pentapeptide.biosynthesis.II..lysine.containing..g__Alistipes.s__Alistipes_putredinis}}
\featitem{\path{PWY.6386..UDP.N.acetylmuramoyl.pentapeptide.biosynthesis.II..lysine.containing..g__Alistipes.s__Alistipes_putredinis_CAG_67}}
\featitem{\path{PWY.6387..UDP.N.acetylmuramoyl.pentapeptide.biosynthesis.I..meso.diaminopimelate.containing..g__Alistipes.s__Alistipes_putredinis}}
\featitem{\path{PWY.6387..UDP.N.acetylmuramoyl.pentapeptide.biosynthesis.I..meso.diaminopimelate.containing..g__Alistipes.s__Alistipes_putredinis_CAG_67}}
\featitem{\path{PWY.6609..adenine.and.adenosine.salvage.III.g__Alistipes.s__Alistipes_putredinis}}
\featitem{\path{PWY.6609..adenine.and.adenosine.salvage.III.g__Alistipes.s__Alistipes_putredinis_CAG_67}}
\featitem{\path{PWY.7219..adenosine.ribonucleotides.de.novo.biosynthesis.g__Alistipes.s__Alistipes_putredinis}}
\featitem{\path{PWY.7219..adenosine.ribonucleotides.de.novo.biosynthesis.g__Alistipes.s__Alistipes_putredinis_CAG_67}}
\featitem{\path{PWY.7220..adenosine.deoxyribonucleotides.de.novo.biosynthesis.II.g__Alistipes.s__Alistipes_putredinis}}
\featitem{\path{PWY.7220..adenosine.deoxyribonucleotides.de.novo.biosynthesis.II.g__Alistipes.s__Alistipes_putredinis_CAG_67}}
\featitem{\path{PWY.7221..guanosine.ribonucleotides.de.novo.biosynthesis.g__Alistipes.s__Alistipes_putredinis}}
\featitem{\path{PWY.7221..guanosine.ribonucleotides.de.novo.biosynthesis.g__Alistipes.s__Alistipes_putredinis_CAG_67}}
\featitem{\path{PWY.7222..guanosine.deoxyribonucleotides.de.novo.biosynthesis.II.g__Alistipes.s__Alistipes_putredinis}}
\featitem{\path{PWY.7222..guanosine.deoxyribonucleotides.de.novo.biosynthesis.II.g__Alistipes.s__Alistipes_putredinis_CAG_67}}
\featitem{\path{PWY.7229..superpathway.of.adenosine.nucleotides.de.novo.biosynthesis.I.g__Alistipes.s__Alistipes_putredinis}}
\featitem{\path{PWY.7229..superpathway.of.adenosine.nucleotides.de.novo.biosynthesis.I.g__Alistipes.s__Alistipes_putredinis_CAG_67}}
\featitem{\path{PWY.7234..inosine.5..phosphate.biosynthesis.III.g__Alistipes.s__Alistipes_putredinis}}
\featitem{\path{PWY.7234..inosine.5..phosphate.biosynthesis.III.g__Alistipes.s__Alistipes_putredinis_CAG_67}}
\featitem{\path{PWY.7539..6.hydroxymethyl.dihydropterin.diphosphate.biosynthesis.III..Chlamydia..g__Alistipes.s__Alistipes_putredinis_CAG_67}}
\featitem{\path{PYRIDNUCSYN.PWY..NAD.biosynthesis.I..from.aspartate..g__Alistipes.s__Alistipes_putredinis}}
\featitem{\path{PYRIDNUCSYN.PWY..NAD.biosynthesis.I..from.aspartate..g__Alistipes.s__Alistipes_putredinis_CAG_67}}
\featitem{\path{SER.GLYSYN.PWY..superpathway.of.L.serine.and.glycine.biosynthesis.I.g__Alistipes.s__Alistipes_putredinis_CAG_67}}
\featitem{\path{TRNA.CHARGING.PWY..tRNA.charging.g__Alistipes.s__Alistipes_putredinis}}
\featitemlast{\path{TRNA.CHARGING.PWY..tRNA.charging.g__Alistipes.s__Alistipes_putredinis_CAG_67}}

\end{longtable}

\vspace{2mm}
{\Large \textbf{Features included in feature cluster $\pmb{F}_2$ (49 features):}}
\vspace{3mm}
\footnotesize
\setlength{\LTpre}{0pt}
\setlength{\LTpost}{0pt}
\renewcommand{\arraystretch}{1.0}

\begin{longtable}{@{}L{0.95\textwidth}@{}}
\toprule
\textbf{Feature cluster $\pmb{F}_2$} \\
\midrule
\endfirsthead

\toprule
\textbf{Feature cluster $\pmb{F}_2$ (continued)} \\
\midrule
\endhead

\bottomrule
\endfoot

\featitem{\path{UNINTEGRATED.g__Blautia.s__Ruminococcus_torques}}
\featitem{\path{X1CMET2.PWY..N10.formyl.tetrahydrofolate.biosynthesis.g__Blautia.s__Ruminococcus_torques}}
\featitem{\path{ARGSYN.PWY..L.arginine.biosynthesis.I..via.L.ornithine..g__Blautia.s__Ruminococcus_torques}}
\featitem{\path{ARGSYNBSUB.PWY..L.arginine.biosynthesis.II..acetyl.cycle..g__Blautia.s__Ruminococcus_torques}}
\featitem{\path{ARO.PWY..chorismate.biosynthesis.I.g__Blautia.s__Ruminococcus_torques}}
\featitem{\path{BRANCHED.CHAIN.AA.SYN.PWY..superpathway.of.branched.amino.acid.biosynthesis.g__Blautia.s__Ruminococcus_torques}}
\featitem{\path{CALVIN.PWY..Calvin.Benson.Bassham.cycle.g__Blautia.s__Ruminococcus_torques}}
\featitem{\path{COA.PWY.1..coenzyme.A.biosynthesis.II..mammalian..g__Blautia.s__Ruminococcus_torques}}
\featitem{\path{COA.PWY..coenzyme.A.biosynthesis.I.g__Blautia.s__Ruminococcus_torques}}
\featitem{\path{COMPLETE.ARO.PWY..superpathway.of.aromatic.amino.acid.biosynthesis.g__Blautia.s__Ruminococcus_torques}}
\featitem{\path{DTDPRHAMSYN.PWY..dTDP.L.rhamnose.biosynthesis.I.g__Blautia.s__Ruminococcus_torques}}
\featitem{\path{GLUTORN.PWY..L.ornithine.biosynthesis.g__Blautia.s__Ruminococcus_torques}}
\featitem{\path{ILEUSYN.PWY..L.isoleucine.biosynthesis.I..from.threonine..g__Blautia.s__Ruminococcus_torques}}
\featitem{\path{NONOXIPENT.PWY..pentose.phosphate.pathway..non.oxidative.branch..g__Blautia.s__Ruminococcus_torques}}
\featitem{\path{PANTO.PWY..phosphopantothenate.biosynthesis.I.g__Blautia.s__Ruminococcus_torques}}
\featitem{\path{PANTOSYN.PWY..pantothenate.and.coenzyme.A.biosynthesis.I.g__Blautia.s__Ruminococcus_torques}}
\featitem{\path{PEPTIDOGLYCANSYN.PWY..peptidoglycan.biosynthesis.I..meso.diaminopimelate.containing..g__Blautia.s__Ruminococcus_torques}}
\featitem{\path{PWY.1042..glycolysis.IV..plant.cytosol..g__Blautia.s__Ruminococcus_torques}}
\featitem{\path{PWY.3841..folate.transformations.II.g__Blautia.s__Ruminococcus_torques}}
\featitem{\path{PWY.4242..pantothenate.and.coenzyme.A.biosynthesis.III.g__Blautia.s__Ruminococcus_torques}}
\featitem{\path{PWY.5100..pyruvate.fermentation.to.acetate.and.lactate.II.g__Blautia.s__Ruminococcus_torques}}
\featitem{\path{PWY.5103..L.isoleucine.biosynthesis.III.g__Blautia.s__Ruminococcus_torques}}
\featitem{\path{PWY.5304..superpathway.of.sulfur.oxidation..Acidianus.ambivalens..g__Blautia.s__Ruminococcus_torques}}
\featitem{\path{PWY.5667..CDP.diacylglycerol.biosynthesis.I.g__Blautia.s__Ruminococcus_torques}}
\featitem{\path{PWY.5686..UMP.biosynthesis.g__Blautia.s__Ruminococcus_torques}}
\featitem{\path{PWY.6121..5.aminoimidazole.ribonucleotide.biosynthesis.I.g__Blautia.s__Ruminococcus_torques}}
\featitem{\path{PWY.6122..5.aminoimidazole.ribonucleotide.biosynthesis.II.g__Blautia.s__Ruminococcus_torques}}
\featitem{\path{PWY.6151..S.adenosyl.L.methionine.cycle.I.g__Blautia.s__Ruminococcus_torques}}
\featitem{\path{PWY.6163..chorismate.biosynthesis.from.3.dehydroquinate.g__Blautia.s__Ruminococcus_torques}}
\featitem{\path{PWY.6168..flavin.biosynthesis.III..fungi..g__Blautia.s__Ruminococcus_torques}}
\featitem{\path{PWY.6277..superpathway.of.5.aminoimidazole.ribonucleotide.biosynthesis.g__Blautia.s__Ruminococcus_torques}}
\featitem{\path{PWY.6385..peptidoglycan.biosynthesis.III..mycobacteria..g__Blautia.s__Ruminococcus_torques}}
\featitem{\path{PWY.6386..UDP.N.acetylmuramoyl.pentapeptide.biosynthesis.II..lysine.containing..g__Blautia.s__Ruminococcus_torques}}
\featitem{\path{PWY.6387..UDP.N.acetylmuramoyl.pentapeptide.biosynthesis.I..meso.diaminopimelate.containing..g__Blautia.s__Ruminococcus_torques}}
\featitem{\path{PWY.6609..adenine.and.adenosine.salvage.III.g__Blautia.s__Ruminococcus_torques}}
\featitem{\path{PWY.6700..queuosine.biosynthesis.g__Blautia.s__Ruminococcus_torques}}
\featitem{\path{PWY.6737..starch.degradation.V.g__Blautia.s__Ruminococcus_torques}}
\featitem{\path{PWY.7111..pyruvate.fermentation.to.isobutanol..engineered..g__Blautia.s__Ruminococcus_torques}}
\featitem{\path{PWY.7219..adenosine.ribonucleotides.de.novo.biosynthesis.g__Blautia.s__Ruminococcus_torques}}
\featitem{\path{PWY.7221..guanosine.ribonucleotides.de.novo.biosynthesis.g__Blautia.s__Ruminococcus_torques}}
\featitem{\path{PWY.7357..thiamin.formation.from.pyrithiamine.and.oxythiamine..yeast..g__Blautia.s__Ruminococcus_torques}}
\featitem{\path{PWY.7400..L.arginine.biosynthesis.IV..archaebacteria..g__Blautia.s__Ruminococcus_torques}}
\featitem{\path{PWY0.1061..superpathway.of.L.alanine.biosynthesis.g__Blautia.s__Ruminococcus_torques}}
\featitem{\path{PWY0.1296..purine.ribonucleosides.degradation.g__Blautia.s__Ruminococcus_torques}}
\featitem{\path{PWY0.1319..CDP.diacylglycerol.biosynthesis.II.g__Blautia.s__Ruminococcus_torques}}
\featitem{\path{RIBOSYN2.PWY..flavin.biosynthesis.I..bacteria.and.plants..g__Blautia.s__Ruminococcus_torques}}
\featitem{\path{THISYNARA.PWY..superpathway.of.thiamin.diphosphate.biosynthesis.III..eukaryotes..g__Blautia.s__Ruminococcus_torques}}
\featitem{\path{TRNA.CHARGING.PWY..tRNA.charging.g__Blautia.s__Ruminococcus_torques}}
\featitemlast{\path{VALSYN.PWY..L.valine.biosynthesis.g__Blautia.s__Ruminococcus_torques}}

\end{longtable}

\vspace{2mm}
{\Large \textbf{Features included in feature cluster $\pmb{F}_3$ (26 features):}}
\vspace{3mm}
\footnotesize
\setlength{\LTpre}{0pt}
\setlength{\LTpost}{0pt}
\renewcommand{\arraystretch}{1.0}

\begin{longtable}{@{}L{0.95\textwidth}@{}}
\toprule
\textbf{Feature cluster $\pmb{F}_3$} \\
\midrule
\endfirsthead

\toprule
\textbf{Feature cluster $\pmb{F}_3$ (continued)} \\
\midrule
\endhead

\bottomrule
\endfoot

\featitem{\path{X1CMET2.PWY..N10.formyl.tetrahydrofolate.biosynthesis.unclassified}}
\featitem{\path{ANAGLYCOLYSIS.PWY..glycolysis.III..from.glucose..unclassified}}
\featitem{\path{ARO.PWY..chorismate.biosynthesis.I.unclassified}}
\featitem{\path{COA.PWY.1..coenzyme.A.biosynthesis.II..mammalian..unclassified}}
\featitem{\path{COMPLETE.ARO.PWY..superpathway.of.aromatic.amino.acid.biosynthesis.unclassified}}
\featitem{\path{GLUTORN.PWY..L.ornithine.biosynthesis.unclassified}}
\featitem{\path{GLYCOLYSIS..glycolysis.I..from.glucose.6.phosphate..unclassified}}
\featitem{\path{HISTSYN.PWY..L.histidine.biosynthesis.unclassified}}
\featitem{\path{PANTO.PWY..phosphopantothenate.biosynthesis.I.unclassified}}
\featitem{\path{PANTOSYN.PWY..pantothenate.and.coenzyme.A.biosynthesis.I.unclassified}}
\featitem{\path{PEPTIDOGLYCANSYN.PWY..peptidoglycan.biosynthesis.I..meso.diaminopimelate.containing..unclassified}}
\featitem{\path{PWY.4242..pantothenate.and.coenzyme.A.biosynthesis.III.unclassified}}
\featitem{\path{PWY.5484..glycolysis.II..from.fructose.6.phosphate..unclassified}}
\featitem{\path{PWY.5667..CDP.diacylglycerol.biosynthesis.I.unclassified}}
\featitem{\path{PWY.5973..cis.vaccenate.biosynthesis.unclassified}}
\featitem{\path{PWY.6121..5.aminoimidazole.ribonucleotide.biosynthesis.I.unclassified}}
\featitem{\path{PWY.6122..5.aminoimidazole.ribonucleotide.biosynthesis.II.unclassified}}
\featitem{\path{PWY.6163..chorismate.biosynthesis.from.3.dehydroquinate.unclassified}}
\featitem{\path{PWY.6277..superpathway.of.5.aminoimidazole.ribonucleotide.biosynthesis.unclassified}}
\featitem{\path{PWY.6386..UDP.N.acetylmuramoyl.pentapeptide.biosynthesis.II..lysine.containing..unclassified}}
\featitem{\path{PWY.6387..UDP.N.acetylmuramoyl.pentapeptide.biosynthesis.I..meso.diaminopimelate.containing..unclassified}}
\featitem{\path{PWY.6703..preQ0.biosynthesis.unclassified}}
\featitem{\path{PWY.7221..guanosine.ribonucleotides.de.novo.biosynthesis.unclassified}}
\featitem{\path{PWY0.1319..CDP.diacylglycerol.biosynthesis.II.unclassified}}
\featitem{\path{RIBOSYN2.PWY..flavin.biosynthesis.I..bacteria.and.plants..unclassified}}
\featitemlast{\path{TRPSYN.PWY..L.tryptophan.biosynthesis.unclassified}}

\end{longtable}

\normalsize
\clearpage
\section{More on Simulation}
\subsection{Supplementary Figures}

\begin{figure}[htbp]
    \centering
    \includegraphics[width=\textwidth]{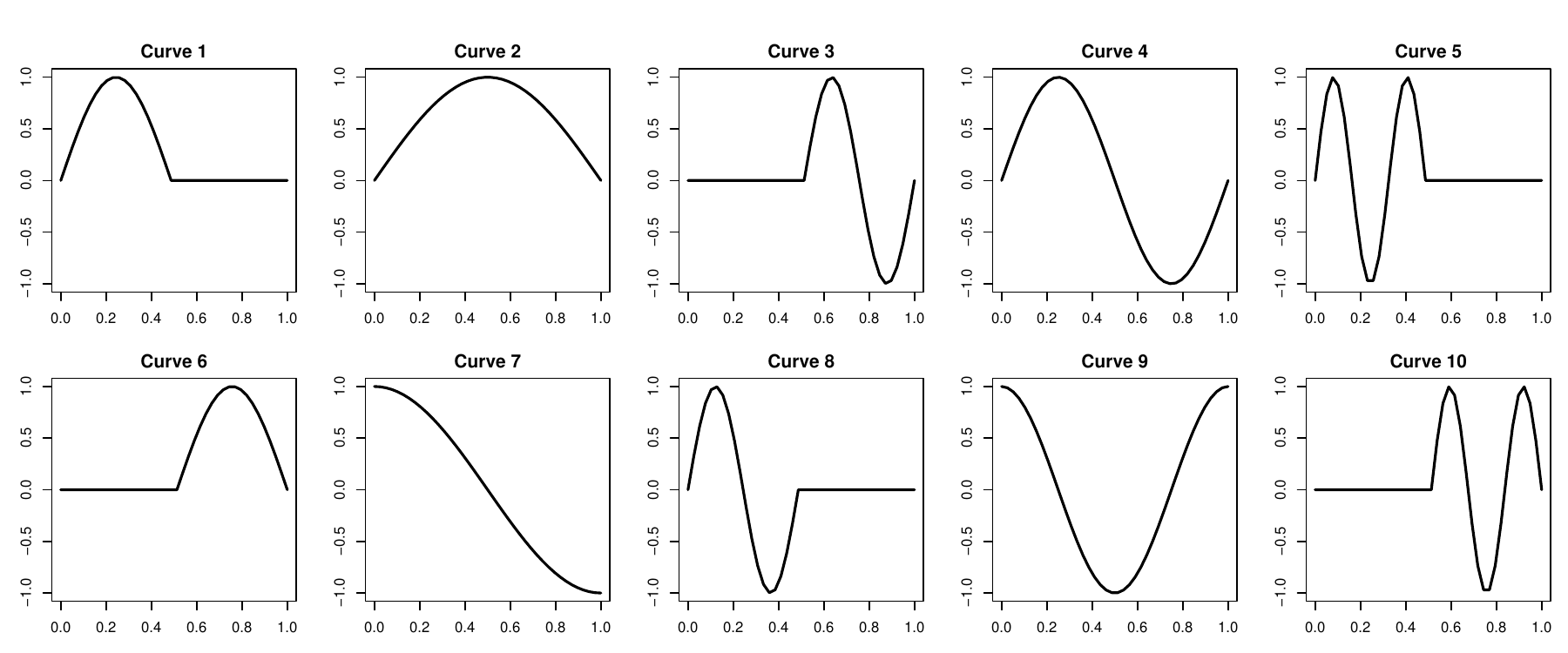}
    \caption{Prespecified candidate functional shapes used to generate active variables in the simulation study. The ten curves comprise two types of signals: full-domain continuous functions defined over the entire interval \([0,1]\) ({e.g., Curves 2, 4, 7, 9}), and subregion-localized functions obtained by restricting the signal to either the left  half ({e.g., Curves 1,5,8}) or the right half ({e.g., Curves 3, 6, 10}) of the discrete grid and setting the remaining half to zero. In each simulation replicate, the loading function for an active variable is selected from this set, while inactive variables are assigned identically zero trajectories.}
    \label{fig:sim_candidate_curves}
\end{figure}

\begin{figure}[h]
  \centering
\label{Supplementary Figures}
  \begin{minipage}{\textwidth}
    \centering
    \begin{subfigure}[b]{0.35\textwidth}
      \centering
      \includegraphics[width=\linewidth]{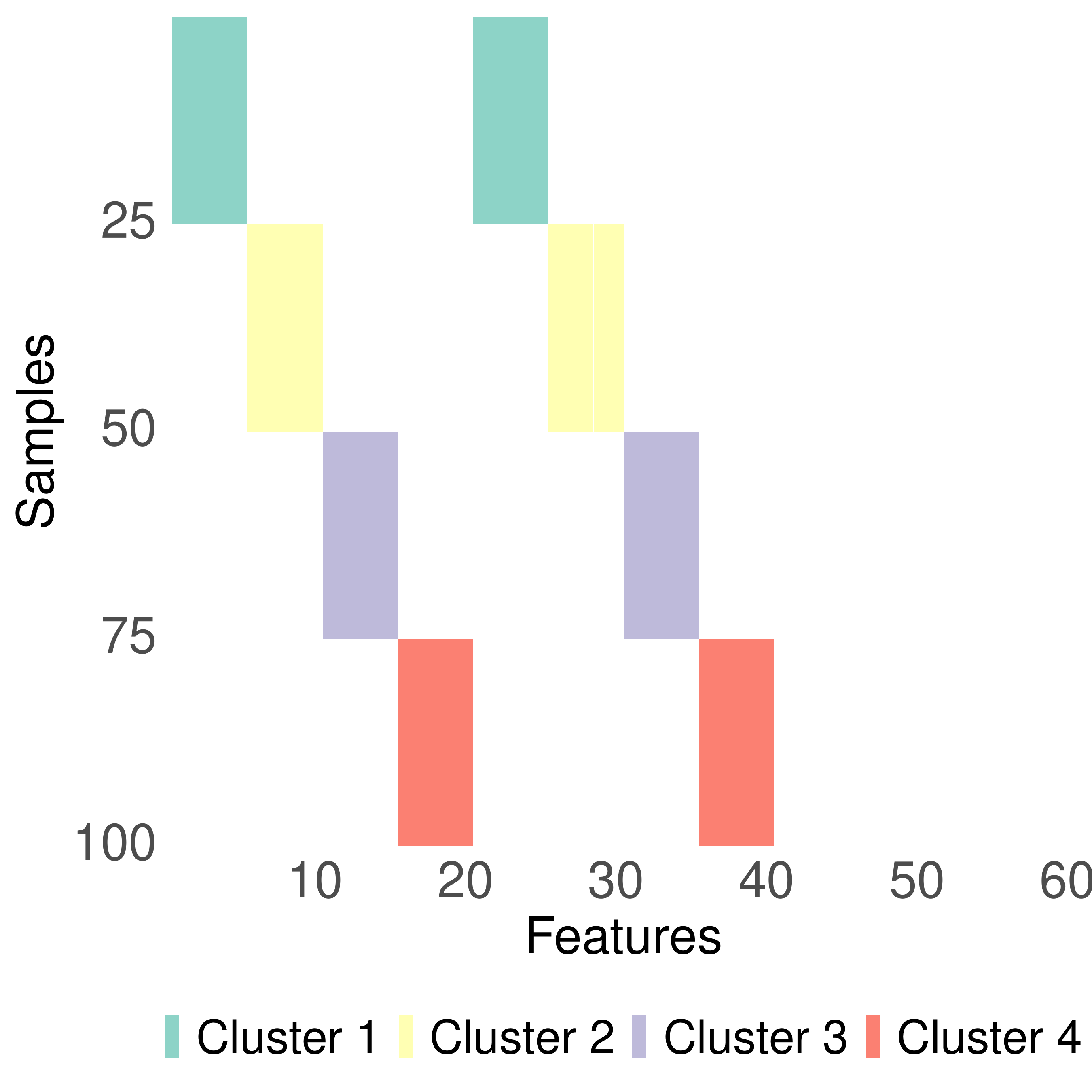}
      \caption{Overlap true cluster}
    \end{subfigure}
    \hspace{0.08\textwidth}
    \begin{subfigure}[b]{0.35\textwidth}
      \centering
      \includegraphics[width=\linewidth]{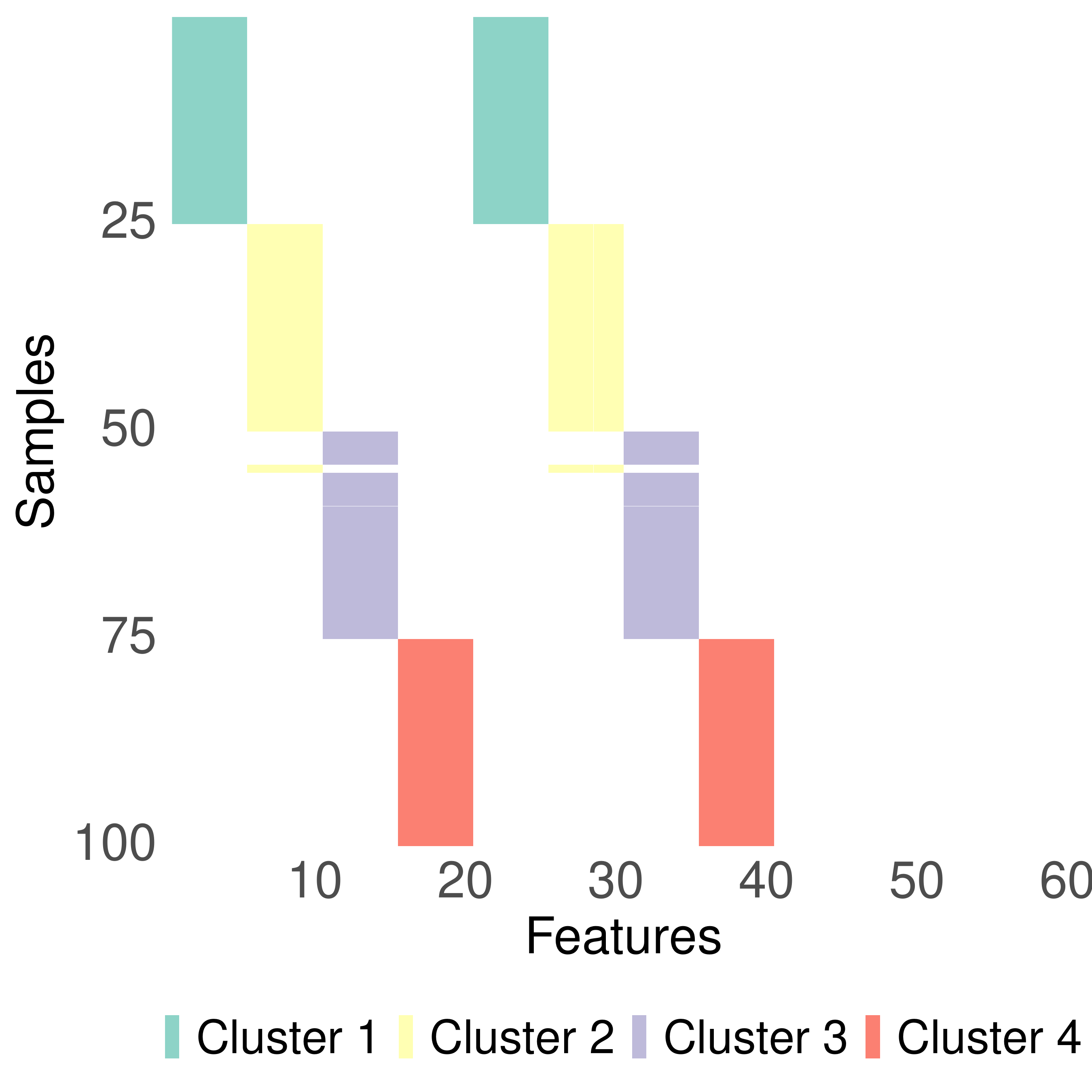}
      \caption{Overlap estimated cluster}
    \end{subfigure}
  \end{minipage}

  \vspace{0.6em}

  \begin{minipage}{\textwidth}
    \centering
    \begin{subfigure}[b]{0.35\textwidth}
      \centering
      \includegraphics[width=\linewidth]{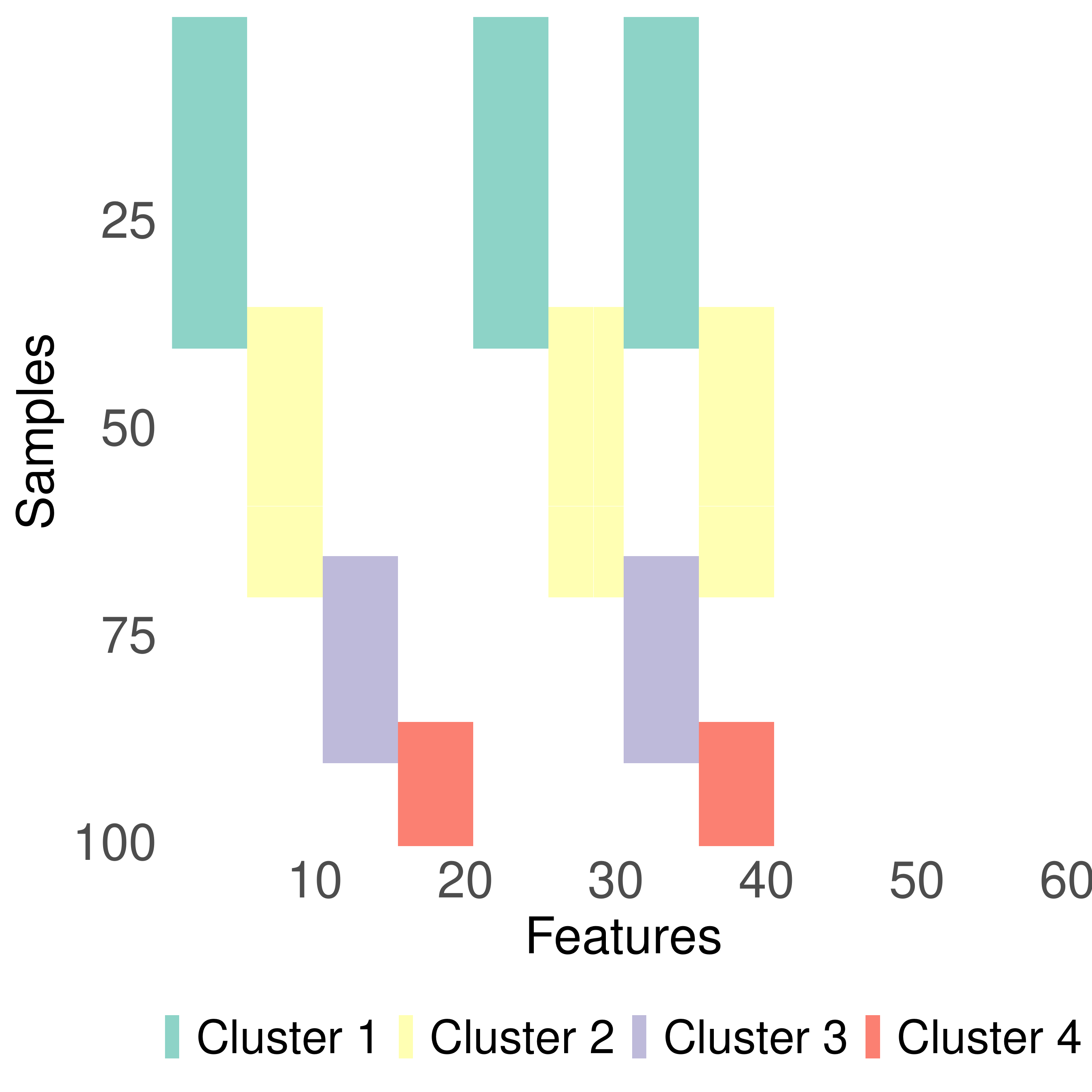}
      \caption{Nonoverlap true cluster}
    \end{subfigure}
    \hspace{0.08\textwidth}
    \begin{subfigure}[b]{0.35\textwidth}
      \centering
      \includegraphics[width=\linewidth]{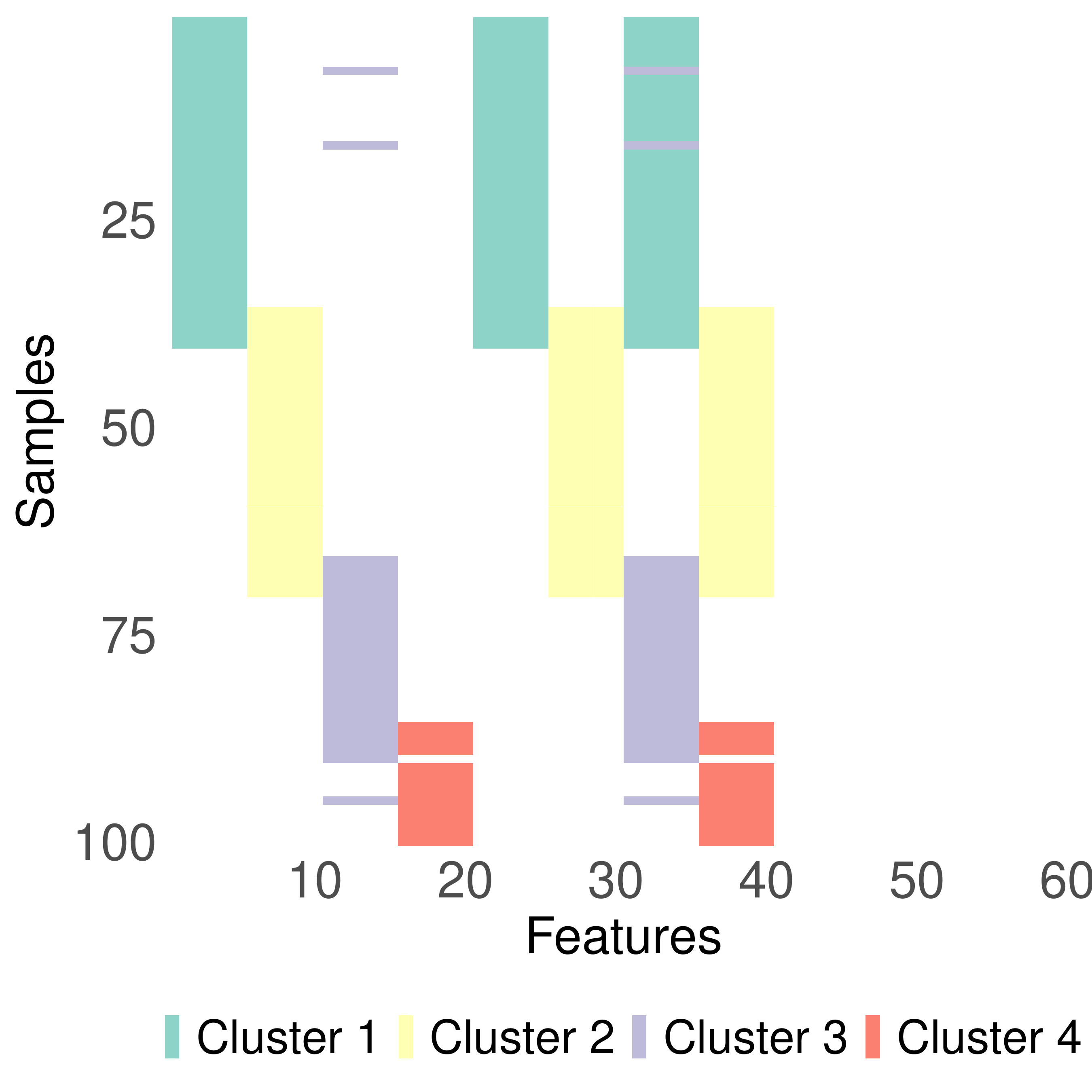}
      \caption{Nonoverlap estimated cluster}
    \end{subfigure}
  \end{minipage}

  \caption{The true and Bi-SfSVD estimated bicluster structure for the overlapping and non-overlapping settings under $p=60$ and a $0.6$ missing rate.}
  \label{clustercomparisons}
\end{figure}

\clearpage
\subsection{Orthogonality under the overlapping setting}
\label{sec:overlap_orthogonality}

In the overlapping setting, different latent components are allowed to share nonzero sample indices in \(\{\pmb{u}^k\}_{k=1}^K\) and nonzero functional regions in \(\{\pmb{\varphi}^k\}_{k=1}^K\). Consequently, orthogonality is no longer automatic and must be handled separately for the sample loading vectors and the right singular functions.

For the sample loading vectors, let
\[
S_k=\{i:\,u_i^k\neq 0\}, \qquad k=1,\dots,K,
\]
denote the nonzero sample support of the \(k\)th component. In the nonoverlapping setting, \(S_j\cap S_k=\emptyset\) for \(j\neq k\), so
\[
\langle \pmb{u}^j,\pmb{u}^k\rangle
=
\sum_{i=1}^n u_i^j u_i^k
=0,
\qquad j\neq k.
\]
After normalization,
\[
\Vert\pmb{u}^k\Vert_2=1,
\]
the vectors are orthonormal.

In the overlapping setting, however, two distinct components, say the \(j\)th and \(k\)th components, may share a subset of sample indices,
\[
O_{jk}=S_j\cap S_k \neq \emptyset.
\]
To reduce the dependence induced by this overlap, we apply a localized Gram--Schmidt-type adjustment on the shared sample indices only. Specifically, restricting attention to the overlapping subset \(O_{jk}\), we update the subvector \(\pmb{u}_{O_{jk}}^k\) by removing its projection onto \(\pmb{u}_{O_{jk}}^j\):
\[
\pmb{u}_{O_{jk}}^k
\leftarrow
\pmb{u}_{O_{jk}}^k
-
\frac{\langle \pmb{u}_{O_{jk}}^j,\pmb{u}_{O_{jk}}^k\rangle}
{\Vert\pmb{u}_{O_{jk}}^j\Vert_2^2}\,
\pmb{u}_{O_{jk}}^j.
\]
Equivalently, this can be written in projection-matrix form as
\[
\pmb{u}_{O_{jk}}^k
\leftarrow
\left(
\pmb{I}
-
\frac{\pmb{u}_{O_{jk}}^j(\pmb{u}_{O_{jk}}^j)^\top}
{\Vert\pmb{u}_{O_{jk}}^j\Vert_2^2}
\right)\pmb{u}_{O_{jk}}^k.
\]
The matrix in parentheses is the orthogonal projector onto the complement of the span of \(\pmb{u}_{O_{jk}}^j\). Therefore, after this update,
\[
\langle \pmb{u}_{O_{jk}}^j,\pmb{u}_{O_{jk}}^k\rangle = 0,
\]
so the overlapping portion of the \(k\)th component is orthogonal to that of the \(j\)th component on the shared sample indices.

After this local projection adjustment, each loading vector is normalized to unit Euclidean norm:
\[
\pmb{u}^k
\leftarrow
\frac{\pmb{u}^k}{\Vert\pmb{u}^k\Vert_2},
\qquad k=1,\dots,K.
\]

For the right singular functions, orthogonality is handled differently. Let
\[
\pmb{\varphi}^k(t)
=
\bigl(\varphi_1^k(t),\dots,\varphi_p^k(t)\bigr)^\top,
\qquad k=1,\dots,K.
\]
In our simulation, orthogonality across components is enforced by construction: for each active variable, the component-specific loading functions are assigned from a prespecified collection of mutually orthogonal candidate functions. Therefore, for \(j\neq k\),
\[
\langle \pmb{\varphi}^j,\pmb{\varphi}^k\rangle_{\mathbb H}=0
\]
by design. Hence, unlike the sample loading vectors, the right singular functions do not require an additional overlap-adjustment step; their orthogonality is imposed directly through the function assignment in the data-generating mechanism.

\clearpage
\subsection{Evaluation criteria}
To quantify performance, we adopt the similarity framework from ~\cite{Sill2011}. Let $\widehat{\mathcal{M}} = \{\widehat{M}_1, \dots, \widehat{M}_k\}$ and $\mathcal{M} = \{M_1, \dots, M_{k^{'}}\}$ denote the collections of index sets induced by the estimated and true clusters, respectively, where each set records the indices belonging to one cluster.
For an estimated cluster $a$ and a true cluster $b$, the Jaccard index is defined as
\[
\mathrm{Jac}(\widehat{M}_a, M_b) = \frac{|\widehat{M}_a \cap M_b|}{|\widehat{M}_a \cup M_b|},
\quad a=1,\ldots,k,\ \ b=1,\ldots,k^{'}.
\]

Relevance and recovery are the averages of the best Jaccard matches across predicted and true clusters, respectively:
\[
\mathrm{Rel} = \frac{1}{k} \sum_{a=1}^k \max_{b=1}^{k^{'}} \mathrm{Jac}(\widehat{M}_a, M_b),
\quad
\mathrm{Rec} = \frac{1}{k^{'}} \sum_{b=1}^{k^{'}} \max_{a=1}^k \mathrm{Jac}(\widehat{M}_a, M_b).
\]
The overall score is their harmonic mean: $
F = \dfrac{2 \cdot \mathrm{Rel} \cdot \mathrm{Rec}}{\mathrm{Rel} + \mathrm{Rec}}.$
In our simulation studies, we use the $F$ score to quantify clustering performance at multiple clustering levels, including subject clustering, variable clustering, subregion clustering, biclustering, and triclustering. We use $\widehat u_i^{a}$ and $u_i^{a}$ to denote the estimated and true values of the $i$th element of the singular score vector $\pmb{u}$ in sparse layer $a$, respectively. Likewise, we use $\widehat{\psi}_j^{b}$ and $\psi_j^{b}$ to denote the estimated and true singular functions corresponding to variable $j$ in sparse layer $b$, respectively. The index sets are defined at each level as follows:
\[
\renewcommand{\arraystretch}{1.1}
\begin{array}{ll}
\text{Sample level:} &
  \widehat{M}^{sam}_a = \{i : \widehat u_{i}^{a} \neq 0\}, \;
  M^{sam}_b = \{i : u_{i}^{b} \neq 0\};
\\[0.2em]
\text{Variable level:} &
  \widehat{M}^{var}_a = \{j : \Vert \boldsymbol{\widehat \psi}_{j}^{a} \Vert_{H_j} \neq 0\}, \;
  M^{var}_b = \{j : \Vert \boldsymbol{\psi}_{j}^{b} \Vert_{H_j} \neq 0\};
\\[0.2em]
\text{Subregion level:} &
  \widehat{M}^{sub}_a = \{(j, t) : \boldsymbol{\widehat \psi}_{j}^{a}(t) \neq 0\}, \;
  M^{sub}_b = \{(j, t) : \boldsymbol{\psi}_{j}^{b}(t) \neq 0\};
\\[0.2em]
\text{Bicluster level:} &
  \widehat{M}^{bi}_a = \widehat{M}^{sam}_a \times \widehat{M}^{var}_a, \;
  M^{bi}_b = M^{sam}_b \times M^{var}_b;
\\[0.2em]
\text{Tricluster level:} &
  \widehat{M}^{tri}_a = \widehat{M}^{sam}_a \times \widehat{M}^{sub}_a, \;
  M^{tri}_b = M^{sam}_{b} \times M^{sub}_b.
\end{array}
\]

Using the level-specific index sets in the above definition yields five $F$-scores, corresponding to subject, variable, subregion, bicluster, and tricluster accuracy.

\clearpage
\subsection{Performance Evaluation of Cluster Number Selection}
\label{Performance Evaluation}
Furthermore, we report the performance of using the BIC score and variance explained to select the number of clusters $K$. As shown in Figure~\ref{bic:overall}, we present the distribution of BIC scores for estimating different values of $K$ across 20 simulation repetitions for three biclustering scenarios and three triclustering scenarios. Similarly, Figure~\ref{ve:overall} illustrates the distribution of variance explained under the same scenarios. Given that the true number of biclusters/triclusters is 4 in all simulations, both criteria demonstrate objective and reliable performance in selecting the appropriate number of biclusters/triclusters.
\begin{figure}[htbp]
    \centering
    \begin{subfigure}[b]{0.25\textwidth}
        \includegraphics[width=\textwidth]{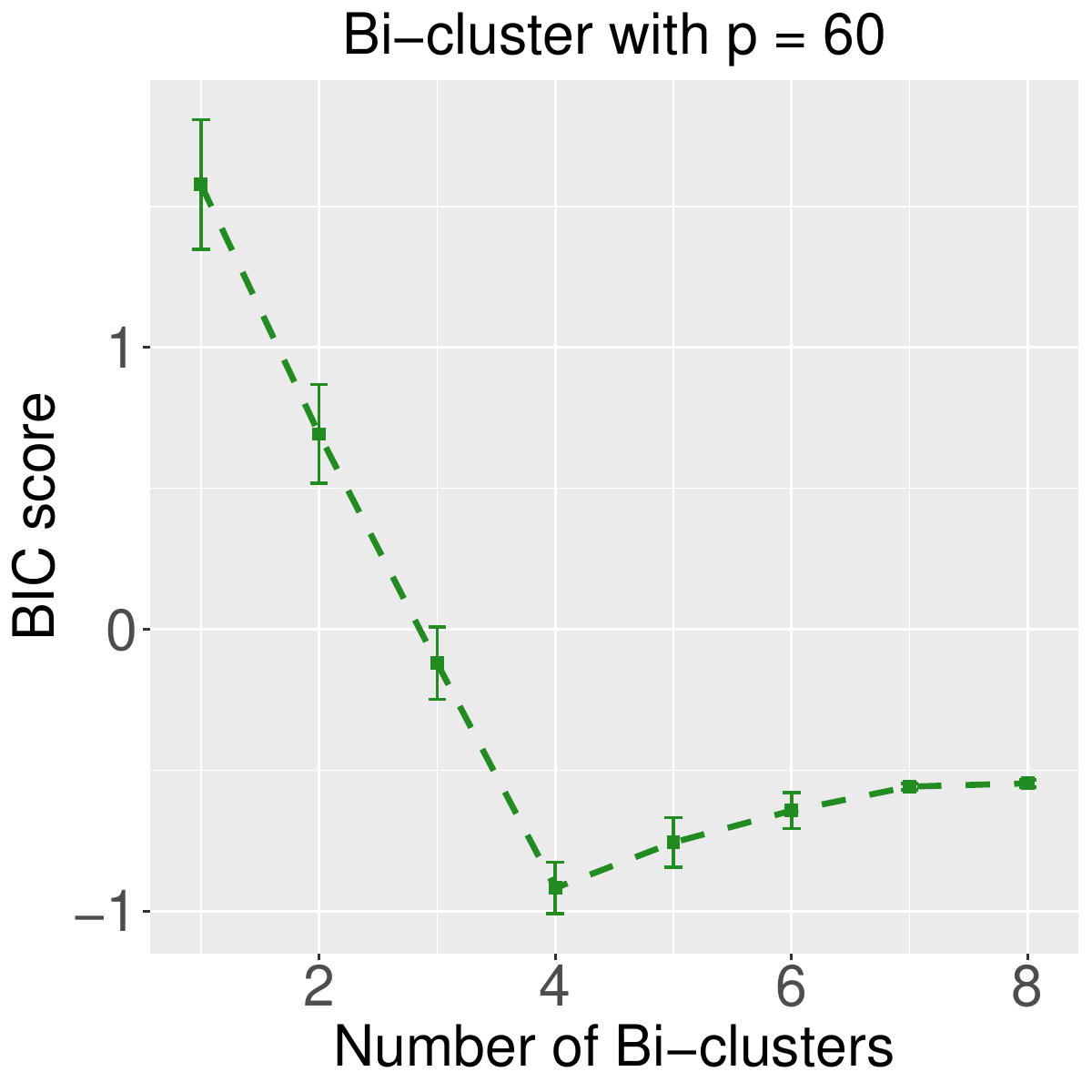}
        \label{fig:bic_panel_1}
    \end{subfigure}
    \hfill
    \begin{subfigure}[b]{0.25\textwidth}
        \includegraphics[width=\textwidth]{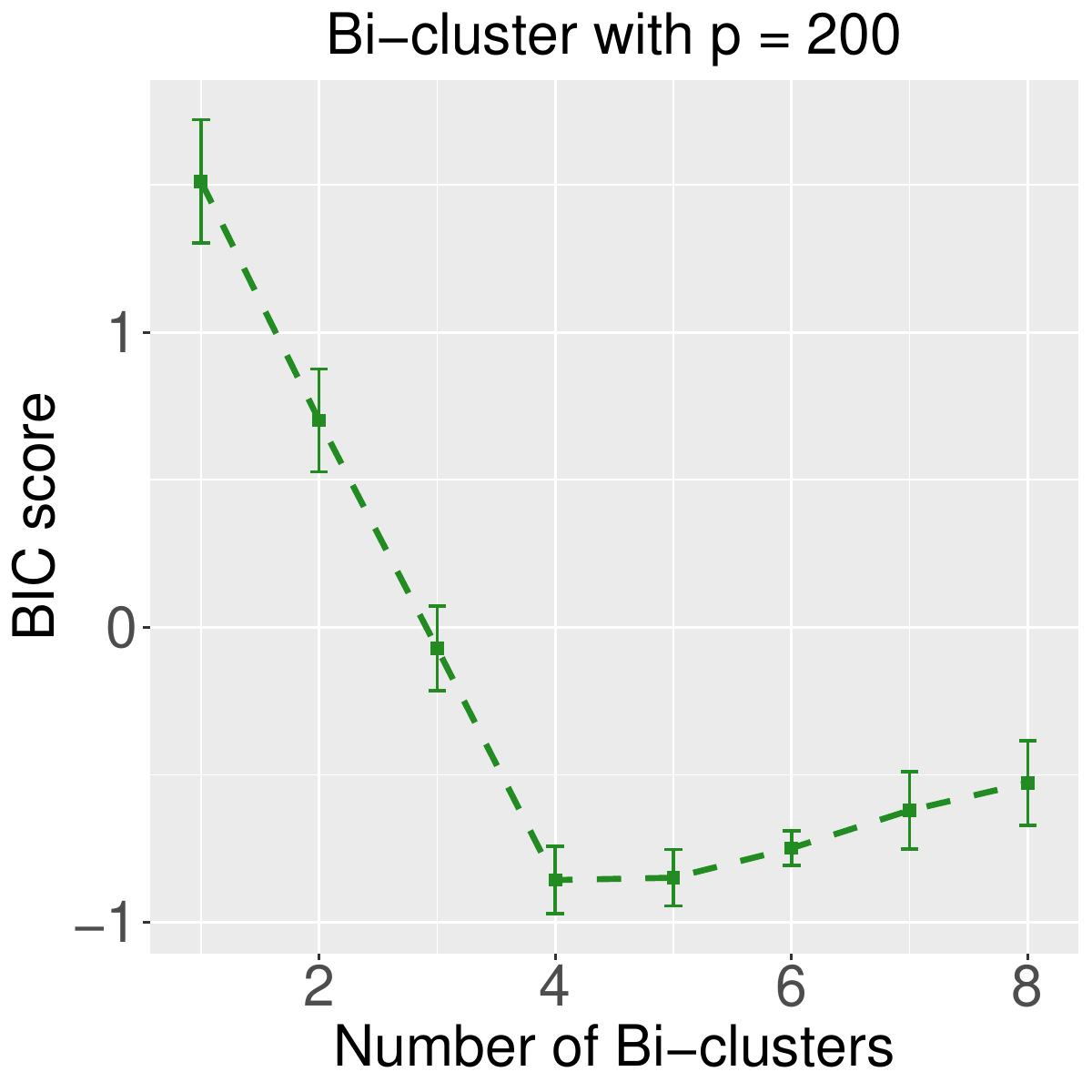}
        \label{fig:bic_panel_2}
    \end{subfigure}
    \hfill
    \begin{subfigure}[b]{0.25\textwidth}
        \includegraphics[width=\textwidth]{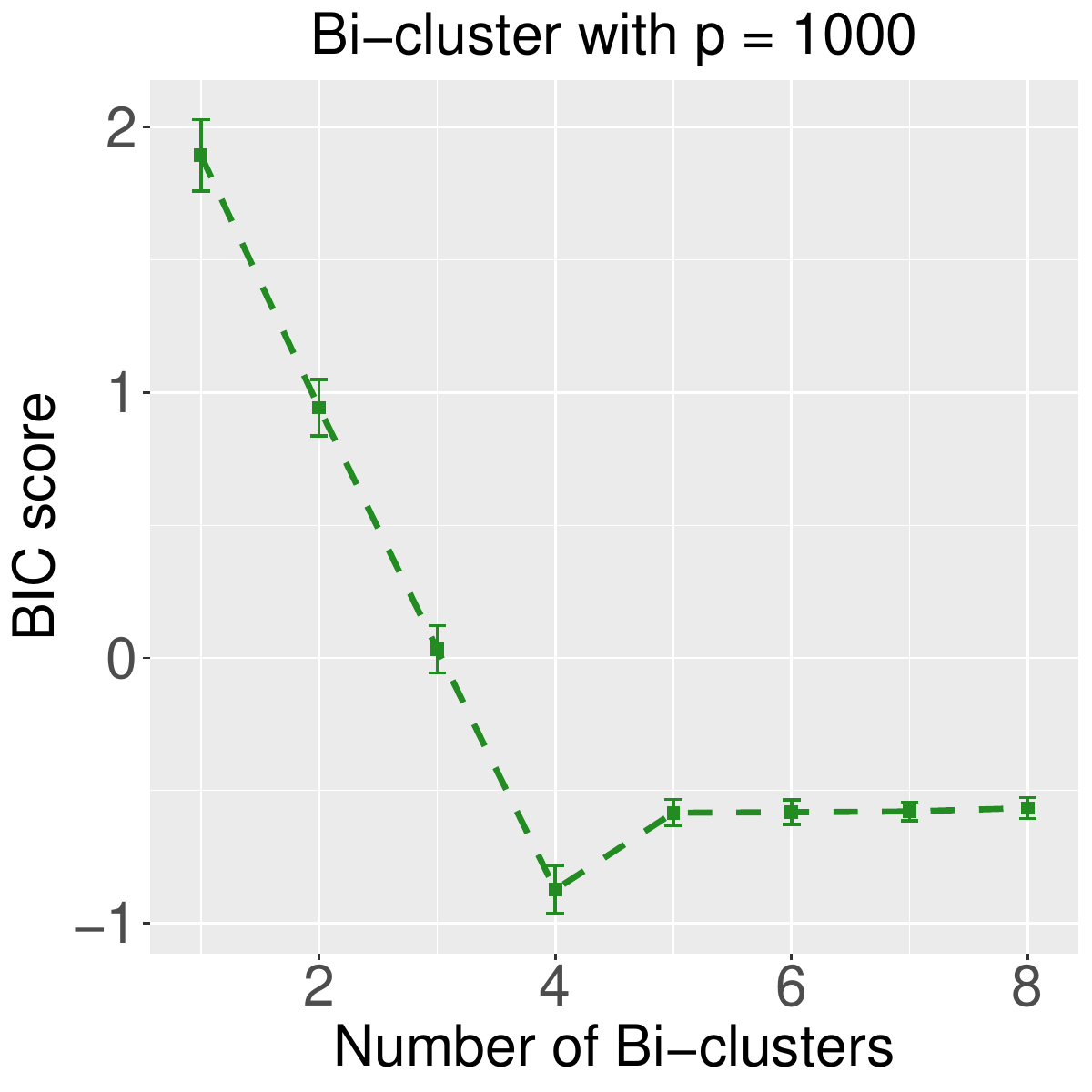}
        \label{fig:bic_panel_3}
    \end{subfigure}

    \vspace{10pt} 

    \begin{subfigure}[b]{0.25\textwidth}
        \includegraphics[width=\textwidth]{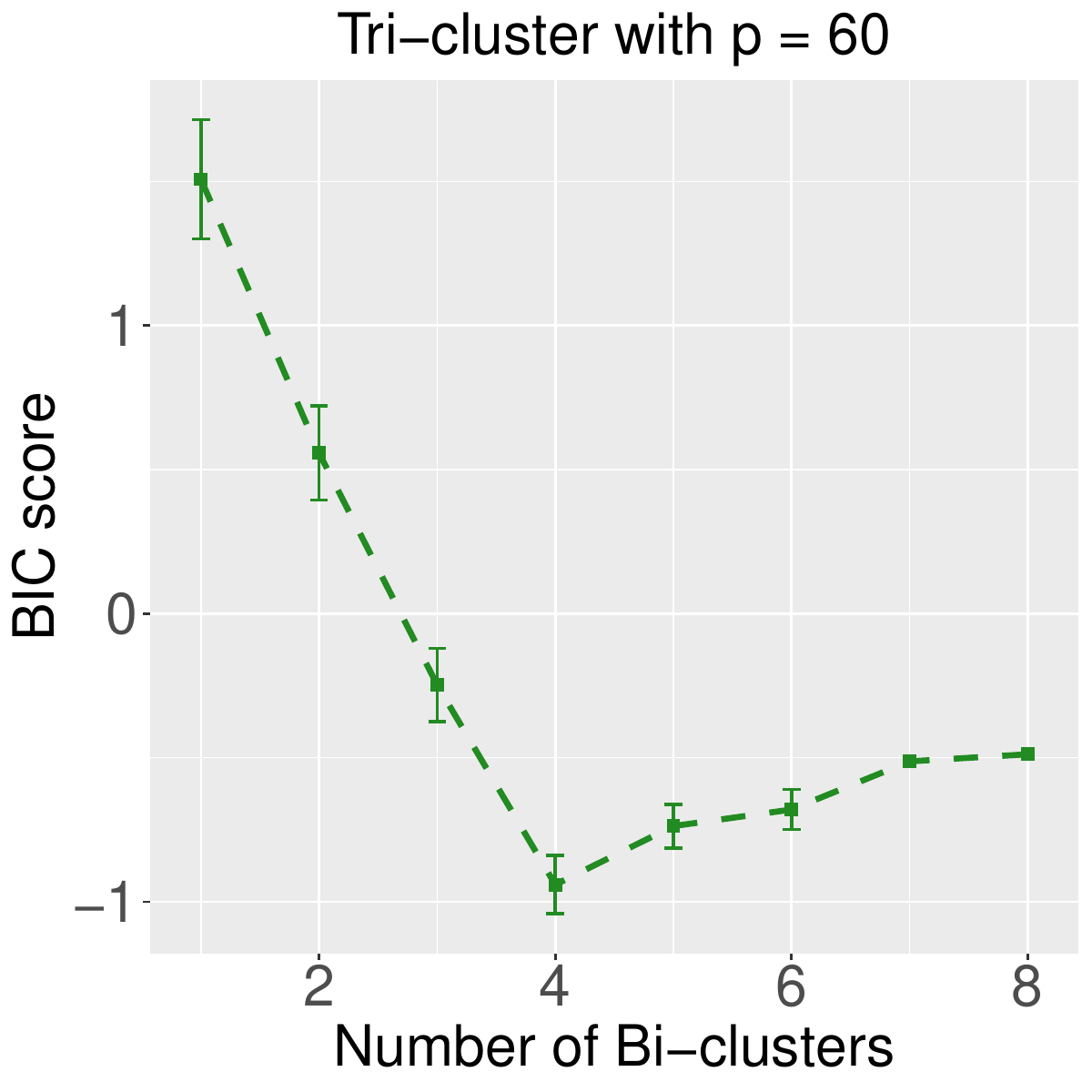}
        \label{fig:bic_panel_4}
    \end{subfigure}
    \hfill
    \begin{subfigure}[b]{0.25\textwidth}
        \includegraphics[width=\textwidth]{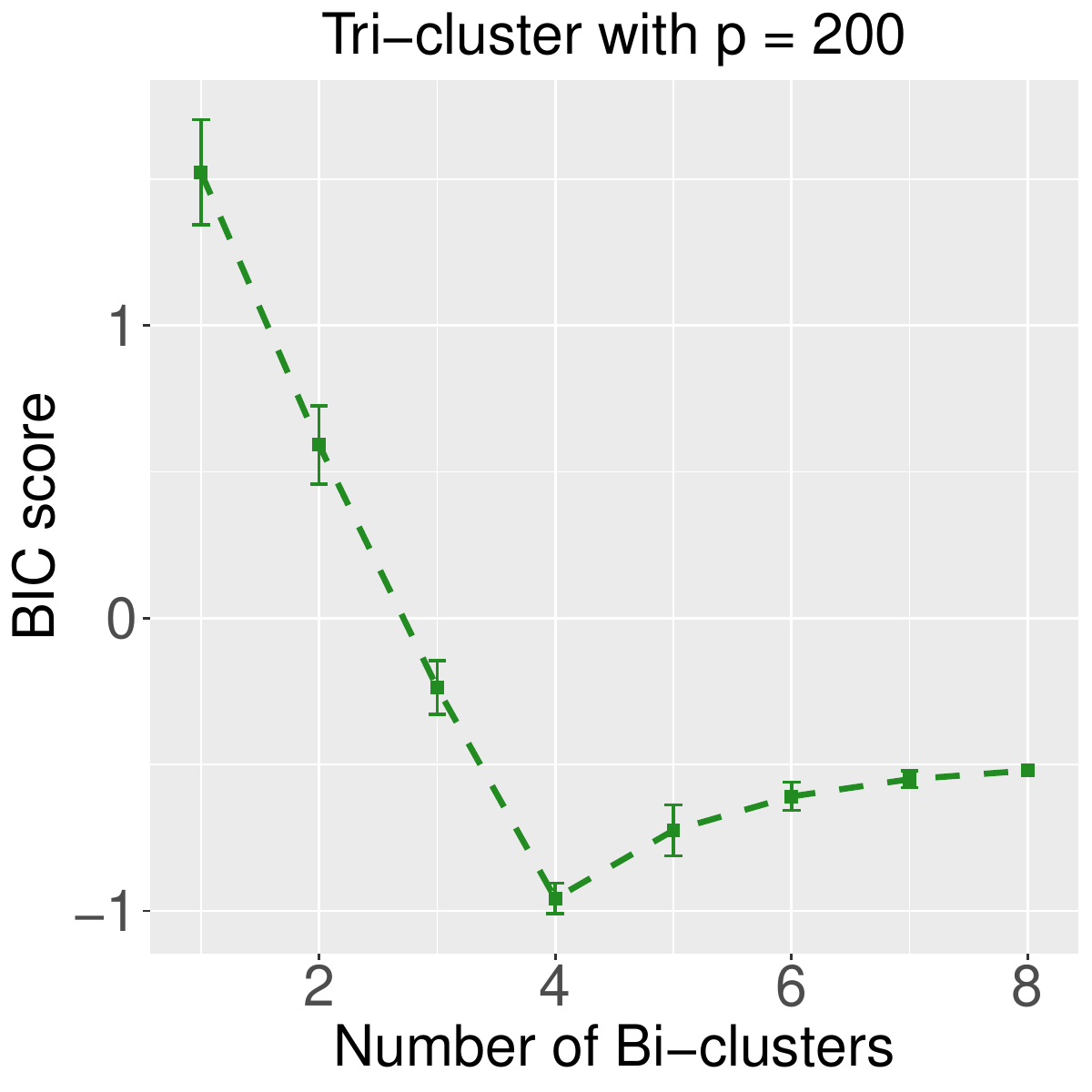}
        \label{fig:bic_panel_5}
    \end{subfigure}
    \hfill
    \begin{subfigure}[b]{0.25\textwidth}
        \includegraphics[width=\textwidth]{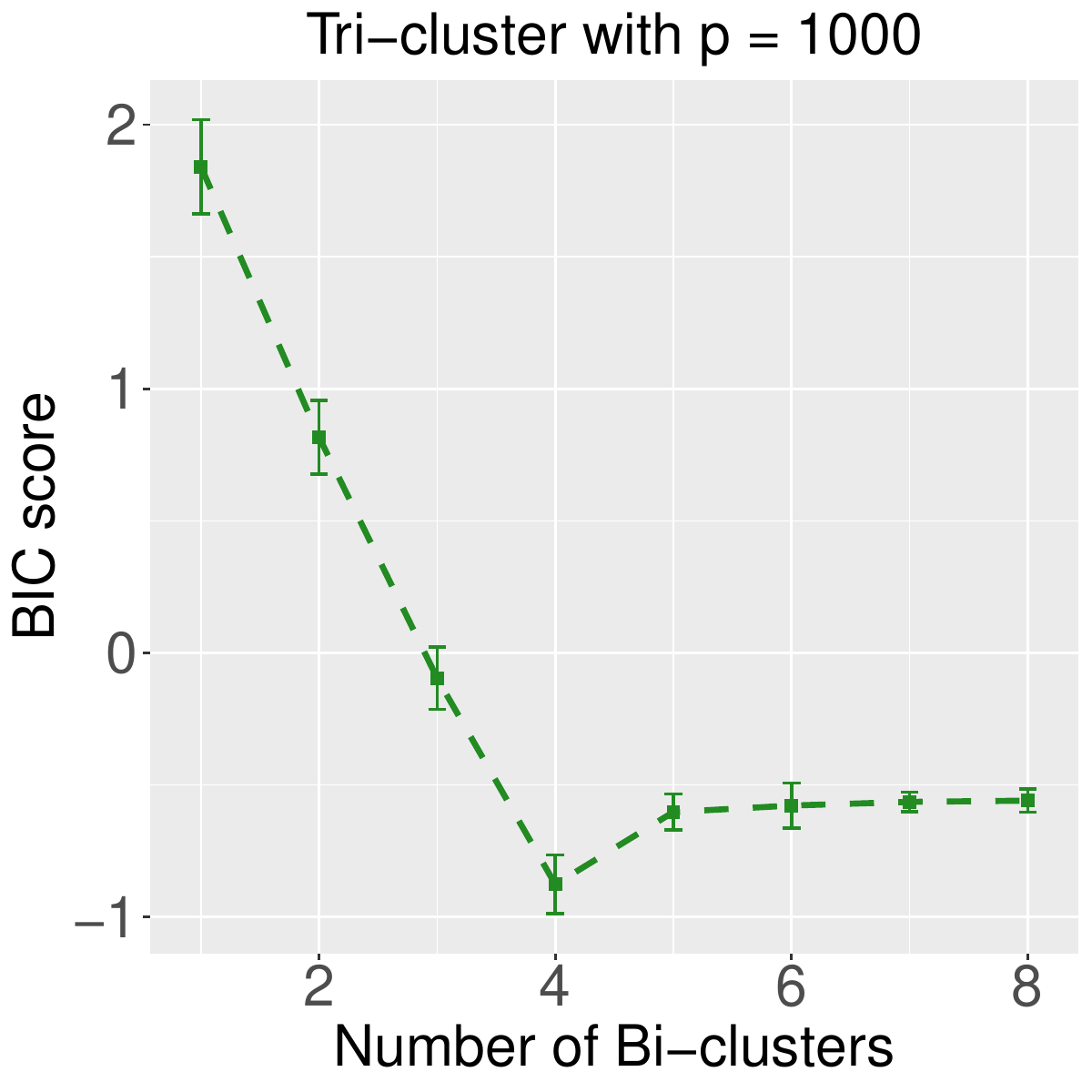}
        \label{fig:bic_panel_6}
    \end{subfigure}

    \caption{Distribution of BIC scores for selecting the number of biclusters/triclusters $K$ across 20 simulation repetitions. Each panel corresponds to one simulation scenario (top row: three biclustering scenarios; bottom row: three triclustering scenarios) and displays the BIC values obtained for different candidate $K$. The true number of biclusters/triclusters is $K=4$ in all scenarios; lower BIC indicates a better fit and is used to select $K$.}

    \label{bic:overall}
\end{figure}

\begin{figure}[H]
    \centering
    \begin{subfigure}[b]{0.25\textwidth}
        \includegraphics[width=\textwidth]{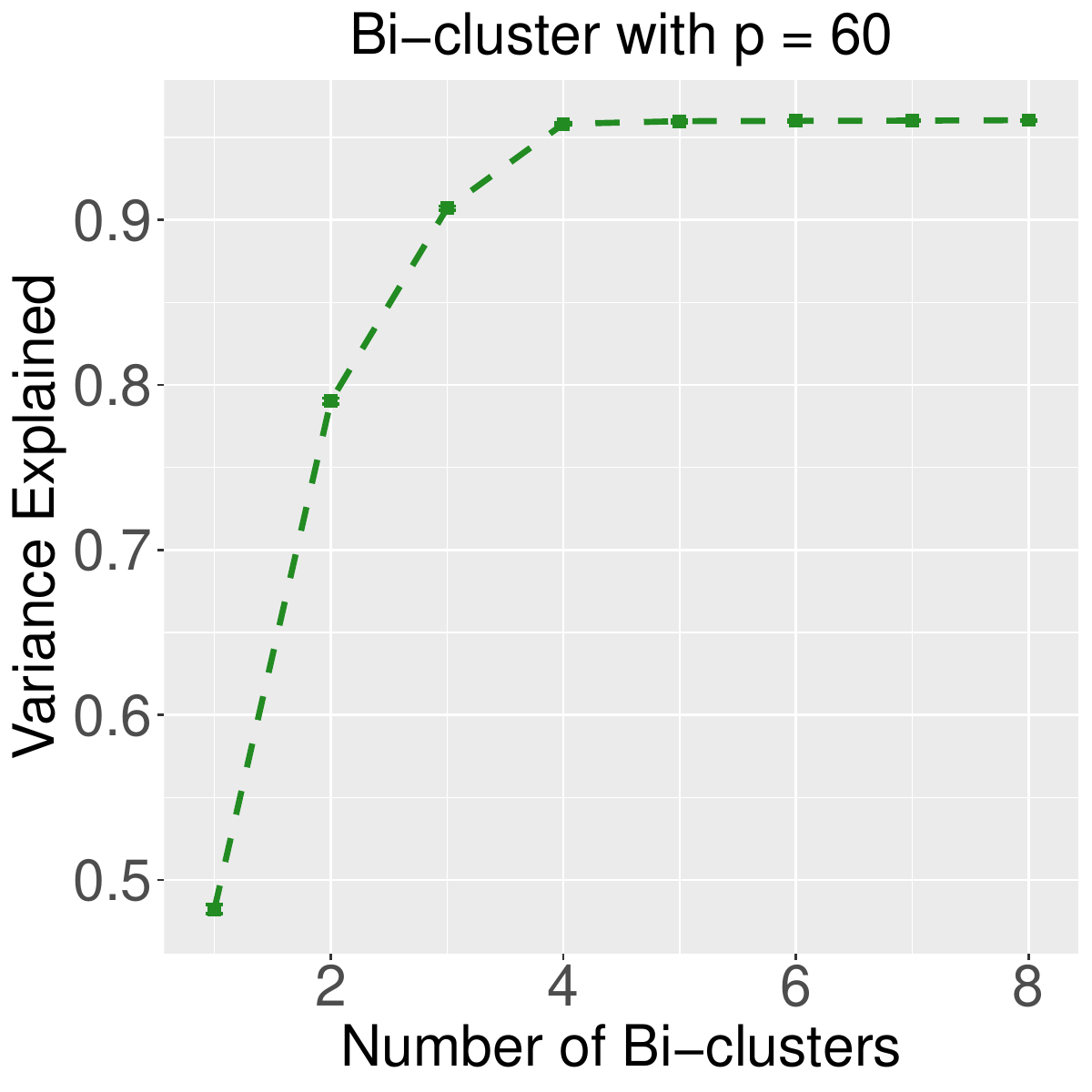}
        \label{fig:ve_panel_1}
    \end{subfigure}
    \hfill
    \begin{subfigure}[b]{0.25\textwidth}
        \includegraphics[width=\textwidth]{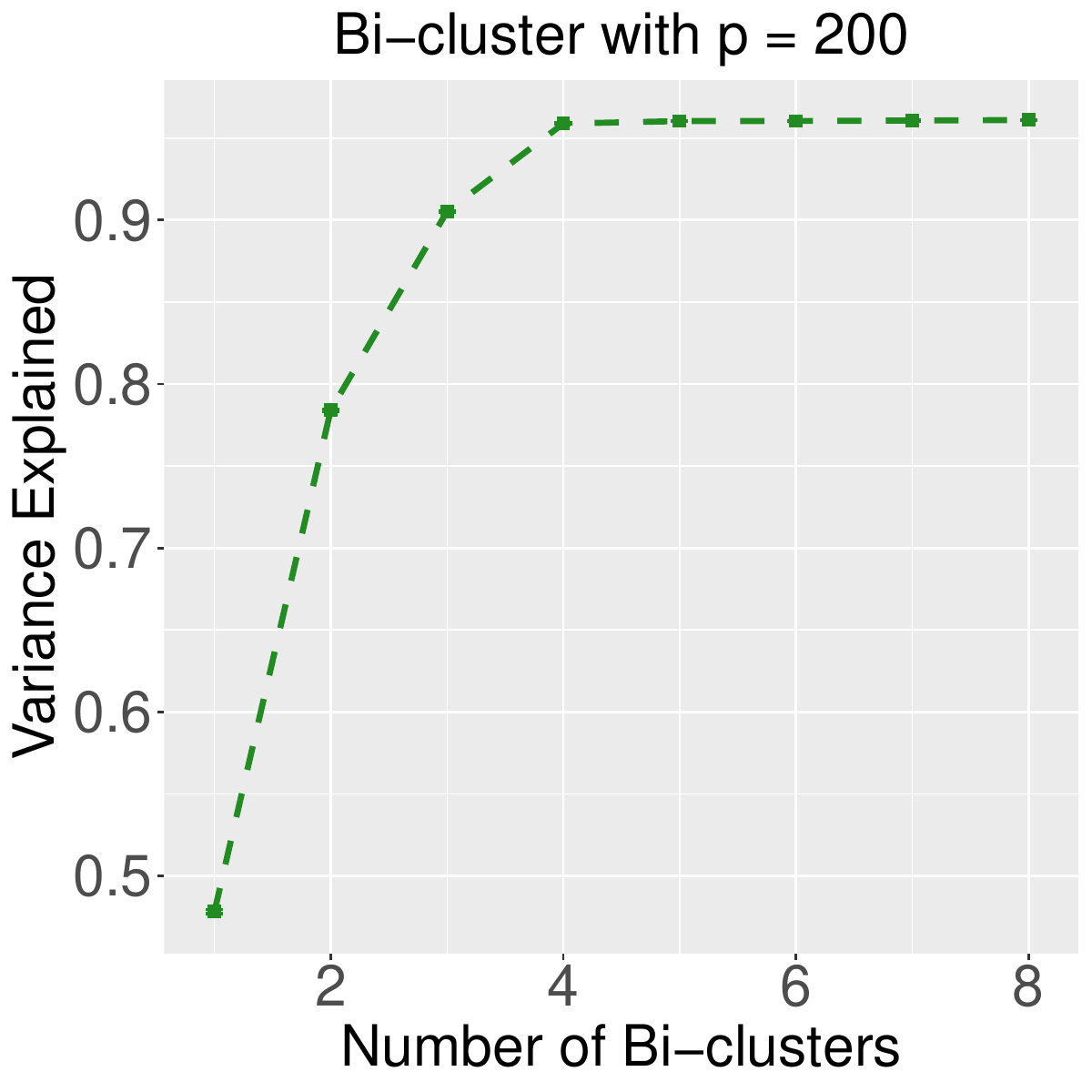}
        \label{fig:ve_panel_2}
    \end{subfigure}
    \hfill
    \begin{subfigure}[b]{0.25\textwidth}
        \includegraphics[width=\textwidth]{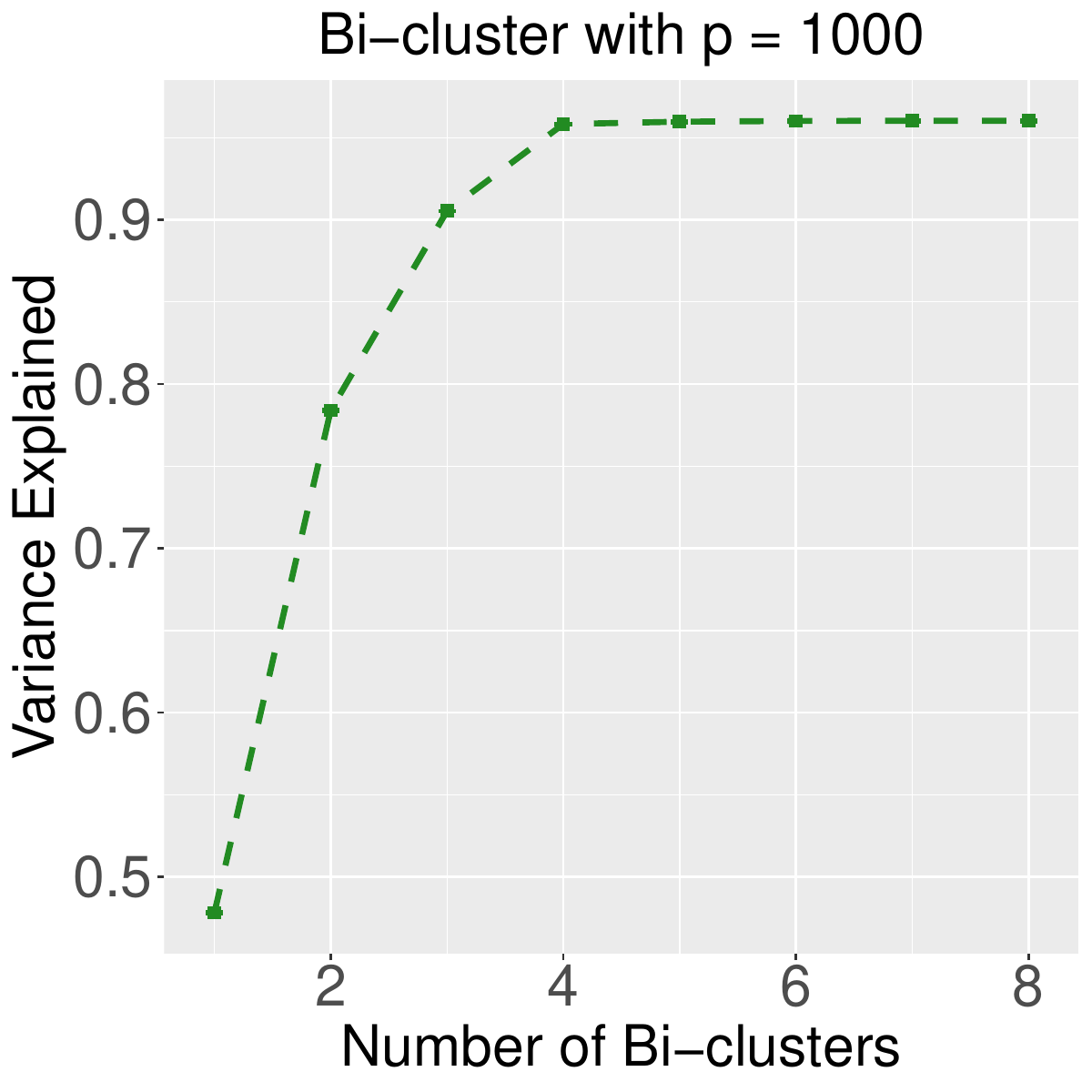}
        \label{fig:ve_panel_3}
    \end{subfigure}

    \vspace{10pt} 

    \begin{subfigure}[b]{0.25\textwidth}
        \includegraphics[width=\textwidth]{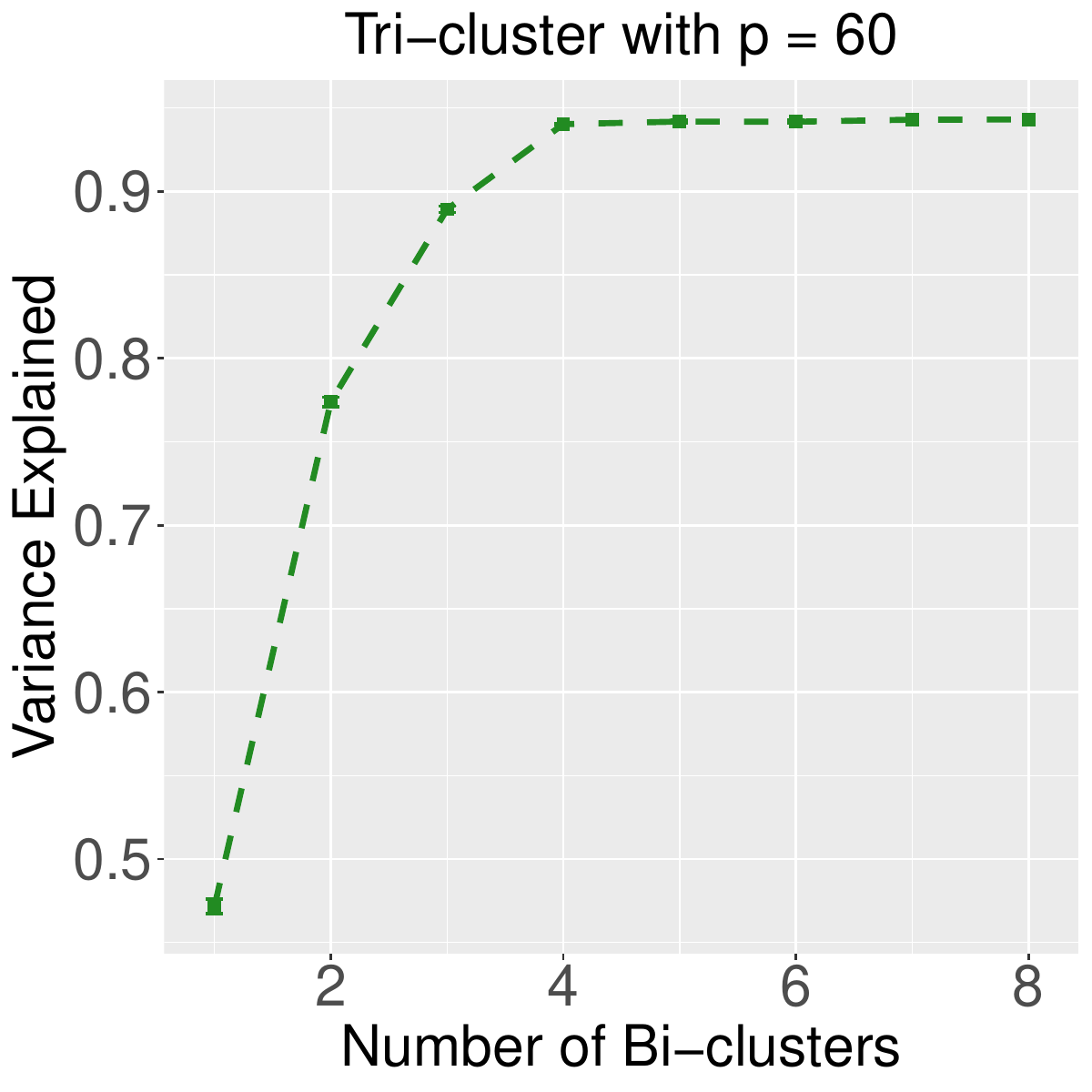}
        \label{fig:ve_panel_4}
    \end{subfigure}
    \hfill
    \begin{subfigure}[b]{0.25\textwidth}
        \includegraphics[width=\textwidth]{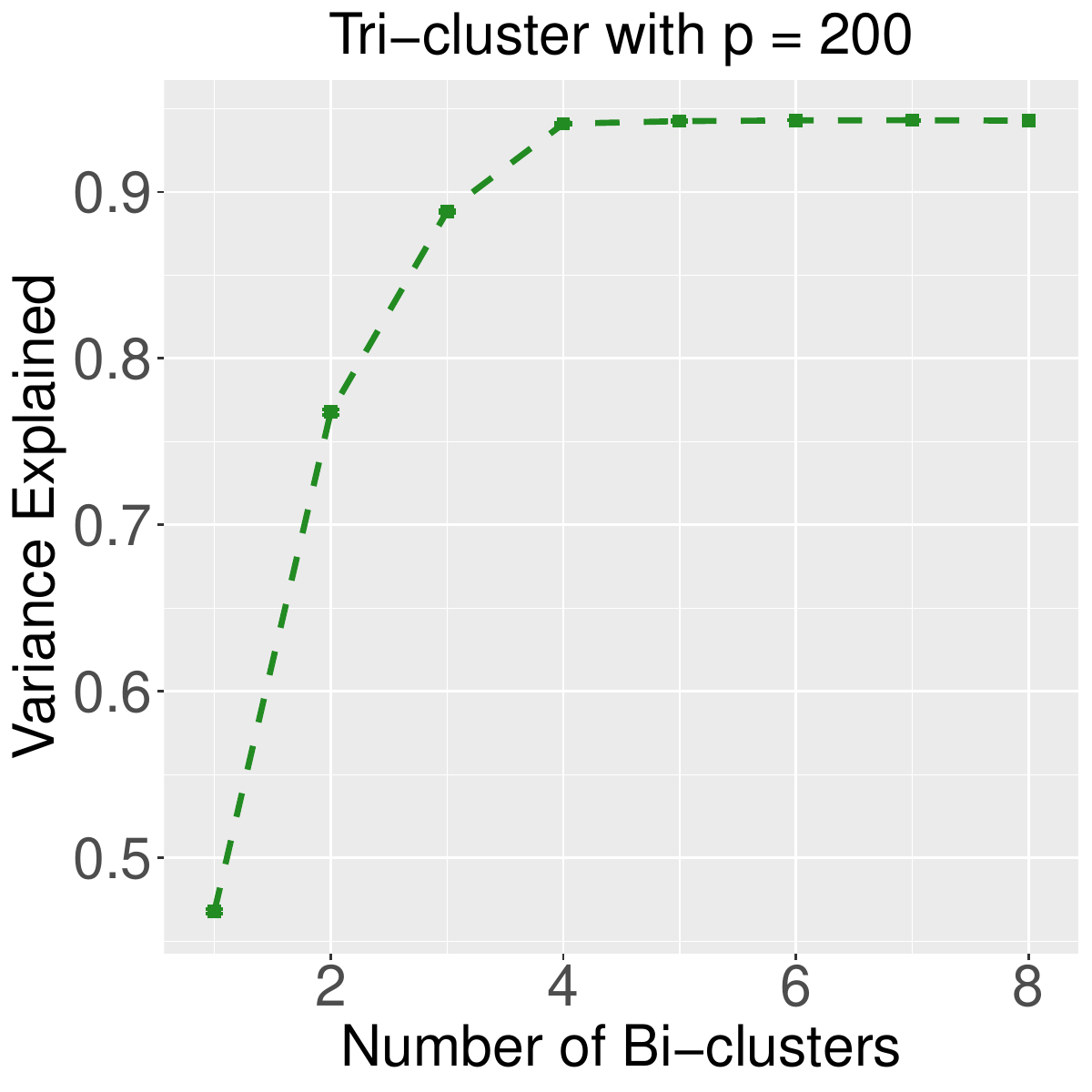}
        \label{fig:ve_panel_5}
    \end{subfigure}
    \hfill
    \begin{subfigure}[b]{0.25\textwidth}
        \includegraphics[width=\textwidth]{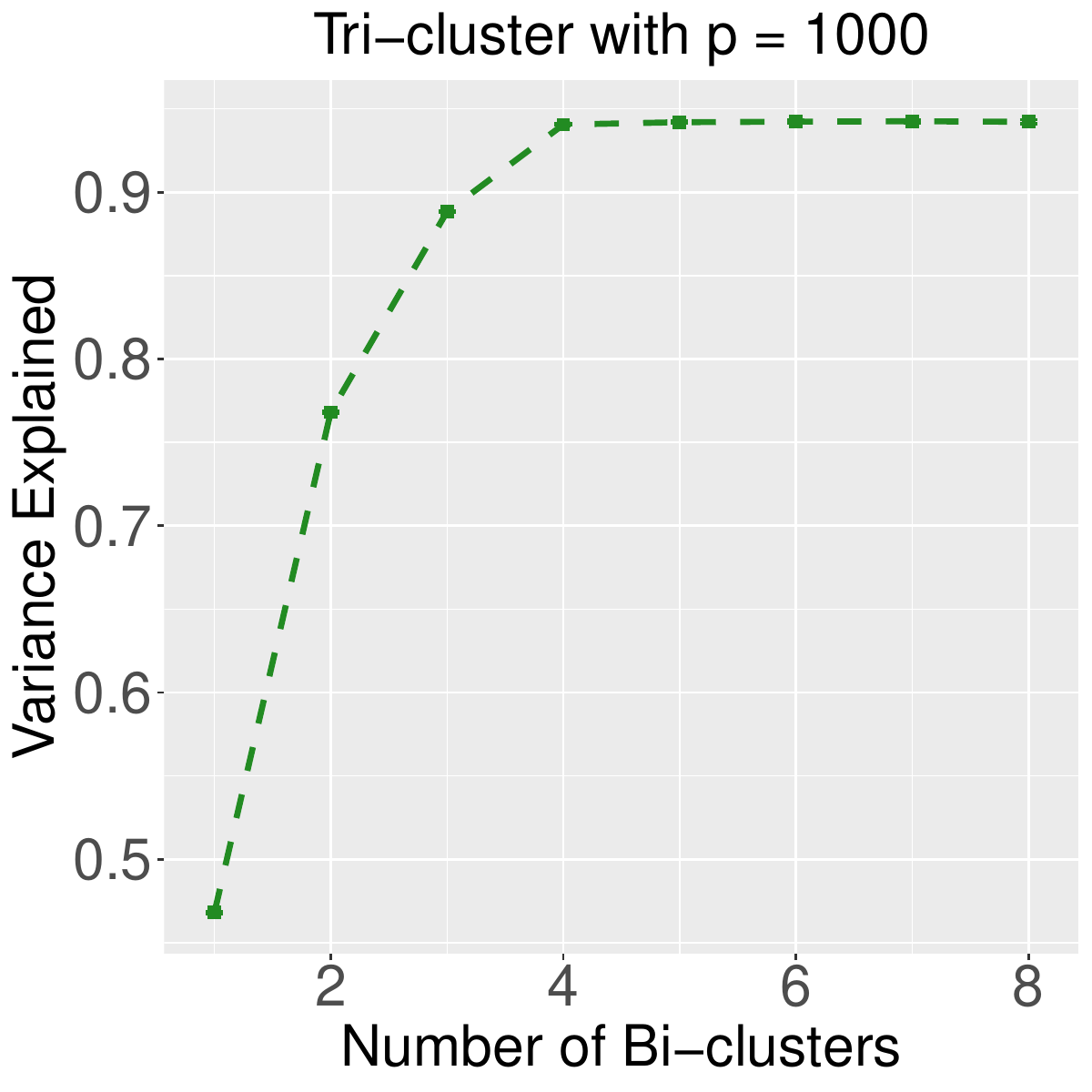}
        \label{fig:ve_panel_6}
    \end{subfigure}

    \caption{Distribution of variance explained for selecting the number of biclusters/triclusters $K$ across 20 simulation repetitions. Each panel corresponds to one simulation scenario (top row: three biclustering scenarios; bottom row: three triclustering scenarios) and displays the variance explained obtained for different candidate $K$. The true number of biclusters/triclusters is $K=4$ in all scenarios; larger variance explained indicates a better fit and is used to select $K$.}

    \label{ve:overall}
\end{figure}

\clearpage
\begin{figure}[p]
    \centering
    \includegraphics[
        width=\textwidth,
        height=0.72\textheight,
        keepaspectratio
    ]{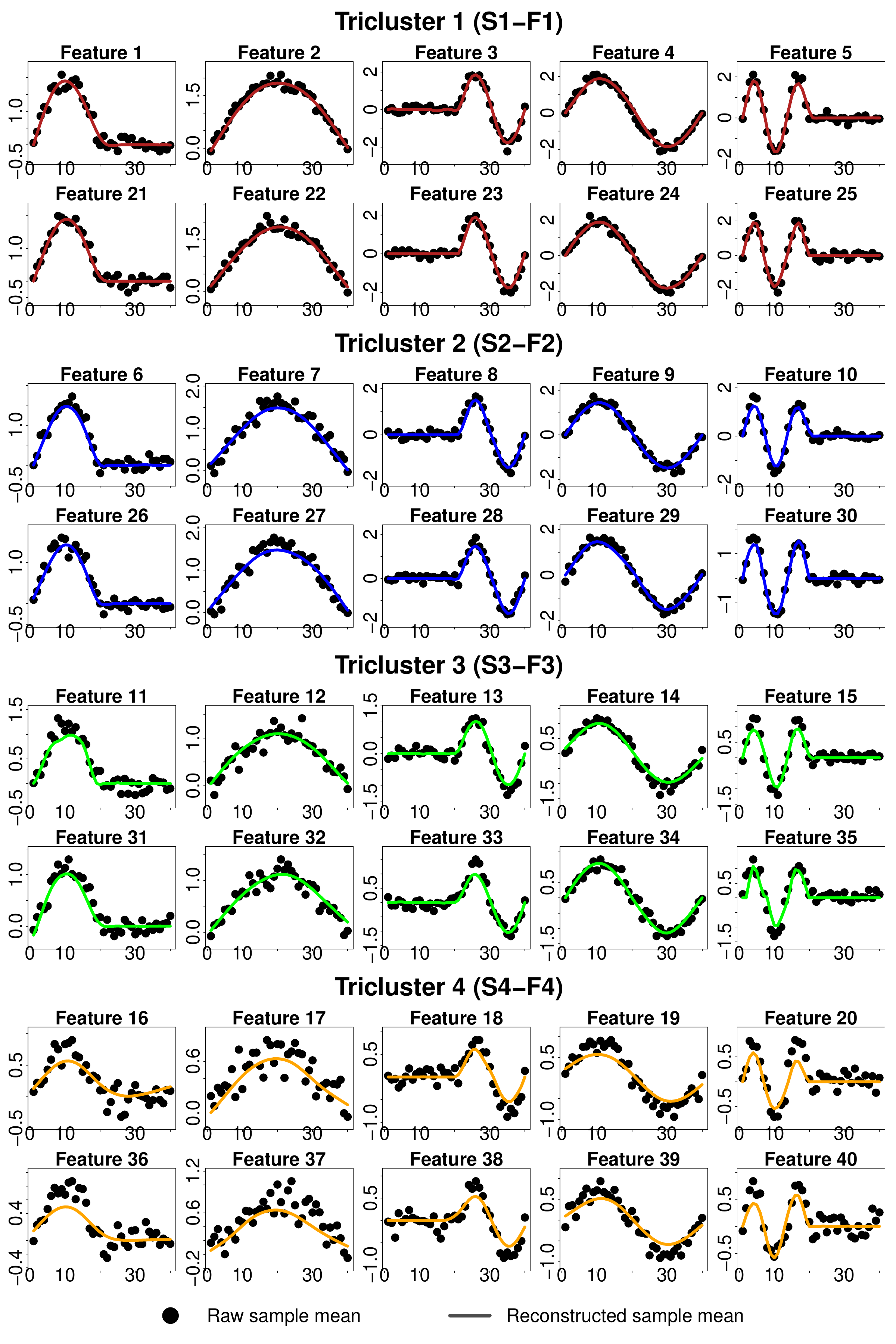}
    \caption{
    Representative sample--feature--time tricluster results from a randomly selected nonoverlapping simulated dataset with \(p=60\) and a missing rate of \(0.4\). The four blocks correspond to the four true triclusters \((\pmb{S}_1,\pmb{F}_1)\), \((\pmb{S}_2,\pmb{F}_2)\), \((\pmb{S}_3,\pmb{F}_3)\), and \((\pmb{S}_4,\pmb{F}_4)\), respectively, where \(\pmb{S}_k\) denotes the sample cluster and \(\pmb{F}_k\) denotes the feature cluster in the \(k\)th tricluster. For each feature in \(\pmb{F}_k\), the identified time-domain component corresponds to the nonzero support of its estimated loading function. Within each block, panels display all ten active features in the corresponding feature group. Black points represent raw sample mean measurements among samples in the corresponding sample group, computed after omitting missing observations at each time point. Colored solid curves represent reconstructed sample mean trajectories obtained from the corresponding sparse functional SVD layer. The displayed features include both full-domain signals and subregion-localized signals, showing the heterogeneous trajectory shapes and time-domain support patterns present within the simulated tricluster structure.
    }
    \label{fig:sim_mean_curve_example}
\end{figure}

\clearpage

\bibliographystyle{apalike}
\bibliography{Mybib}

\end{document}